\numberwithin{equation}{section}
\theoremstyle{plain}
\newtheorem{theorem}{Theorem}[section]
\newtheorem{corollary}{Corollary}[section]
\newtheorem{lemma}{Lemma}[section]
\newtheorem{assumption}{Assumption}[section]
\newtheorem{remark}{Remark}[section]
\newtheorem{condition}{Condition}[section]
\def\prob {{\rm P}}
\def\E{\mbox{\rm E}}
\def\O{\mbox{\rm O}}
\def\o{\mbox{\rm o}}
\renewcommand{\vec}[1]{\mbox{\boldmath ${#1}$}}
\def\vmu{\vec{\mu}}
\def\valpha{\vec{\alpha}}
\DeclareMathOperator*{\argmin}{arg\,min}
\begin{document}

\begin{frontmatter}
\title{Kernel Machines With Missing Responses}
\runtitle{Kernel Machines With Missing Responses}

\begin{aug}
\author{\ead[label=e1]{liutiantian99@yeah.net}}
\author{\ead[label=e2]{ygoldberg@stat.haifa.ac.il}}

\runauthor{Tiantian Liu And Yair Goldberg}

\affiliation{East China Normal University\thanksmark{m1} and The University of Haifa\thanksmark{m2}}

\address{Tiantian Liu\\School of Statistics\\
	 East China Normal University\\
	\printead{e1}}
	
\address{Yair Goldberg\\Department of Statistics
	\\The University of Haifa\\ \printead{e2}}

\end{aug}

\begin{abstract}
Missing responses is a missing data format in which outcomes are not always observed.
In this work we develop kernel machines that can handle missing responses. First, we propose a kernel machine family that uses mainly the complete cases. For the quadratic loss, we then propose a family of doubly-robust kernel machines. The proposed kernel-machine estimators
can be applied to both regression and classification problems. We prove oracle inequalities
for the finite-sample differences between the kernel machine risk and Bayes risk.
We use these oracle inequalities to prove consistency and to calculate convergence rates.
We demonstrate the performance of the two proposed kernel-machine families using both
a simulation study and a real-world data analysis.
\end{abstract}

\begin{keyword}
\kwd{kernel machines, missing responses,
inverse probability weighted estimator, doubly-robust estimator,
oracle inequality, consistency,
learning rate.}
\end{keyword}

\end{frontmatter}
\section{Introduction}\label{sec:intr}

We consider the problem of learning in the presence of missing responses.
Missing response is a type of missing data in which the response variable
cannot always be observed.
Missing responses are common in market research surveys,
medical research, and opinion polls.
Our motivating example is the Los Angeles
County homeless survey directed by the
Los Angeles Homeless Services Authority (LAHSA).
In the Los Angeles County there are 2054 tracts.
The LAHSA was interested in surveying the number of
homeless counts in the different tracts. For each tract,
information on the median household income,
the percentage of unoccupied housing units, etc.,
were collected. Due to budget constraints, LAHSA used stratified
spatial sampling of tracts to conduct the survey. There were 244
tracts which are known to have a large homeless population.
All of these tracts were included in the survey.
Out of the 1810 other tracts, 265 were randomly included in the survey,
leaving 1545 that where not included. The probability of tract inclusion in
the survey was dependent on the Service Provision Area (SPA).
Different areas have different probability of being visited.
Thus, this is a problem of missing responses, as covariates
were collected for  all tracts but responses were collected only for
tracts that were included in the survey. More details can be found
in \citet{Kriegler:Berk:10}.

Other examples of missing responses include the following.
Consider a clinical study in which genetic information is
collected on all participants but the level of a specific
biomarker is collected only on a subsample based on their
genetic information. In this example, the genetic information
are the covariates which are collected for everyone but the
biomarker which is the response missing for some of the participants.

Inference for missing data is challenging. There are three
main mechanisms leading to missing data, missing completely at random
(MCAR), missing at random (MAR), and not missing at random (NMAR),
see \citet[Chapter 1]{Little:Rubin:02}.
The two examples of missing responses discussed above can be cataloged as MAR.
In the homeless count example,
tracts are visited dependent on the areas and
independent of the actual counts in these tracts.
Biomarker levels are collected based
on the genetic profile and not the biomarker level.

Three main approaches are usually used for handling
missing values in statistical analysis.
The complete case analysis uses mainly the observations that have
no missing data. These observations are referred to as the
complete cases. The second approach is imputation
where a value or a set of values
are assigned to each missing value.
The third approach is a maximum likelihood approach
which first poses some models for response given the covariates.
Under MAR assumption, the likelihood function can be
written as the likelihood function of response-given covariates multiplied by
the likelihood function of covariates and missing mechanism, thus enabling maximizing
separately the two parts of the likelihood. \citet{Pelckmans05} summarize the three approaches;
see also \citet{Little:Rubin:02}, and \citet[Chapter 6]{Tsiatis2006Semiparametric}.

We develop a kernel-machine approach for missing responses.
Kernel methods, which include SVMs as a special case,
are easy-to-compute techniques that
enable estimation under weak or no assumptions
on the distribution \citep{Steinwart:Christmann:08,Hofmann:08}.
Kernel machines minimize a regularized version of empirical risk
where the empirical risk is the average
of a loss function on the observed sample.
In recent years, kernel methods have been developed for
many types of data including some missing data settings.
However, so far, no work has been done
on missing responses in the context of kernel machines.

We first propose a family of kernel machines that can be considered as inverse-weighted-probability complete-case estimators \citep{Robins:Rotnitzky:Zhao:94,Tsiatis2006Semiparametric}.
More specifically, we first model the missing mechanism
if it is unknown; in some settings it is known by design
and then we do not need to model it. Then,
we use the estimated inverse probabilities of the
observed cases to weight the loss function of the complete cases.
We show that if the missing mechanism is specified correctly,
the empirical risk, which is sum of the weighted loss function,
is an unbiased estimator for the risk. We then prove oracle inequalities,
consistency results, and calculate convergence rates
for this type of kernel machine. The main drawback of this approach is that the missing
mechanism is estimated using a model and when this model is misspecified,
the estimator could be biased.

We propose a doubly-robust kernel-machine estimator
in order to overcome the potential bias in missing mechanism
misspecification. Doubly-robust estimators are augmented
inverse-probability-weighted-complete-case estimators.
\citet{Scharfstein:Rotnitzky:99} first introduced
the notation of doubly-robust estimators. \citet{Bang:Robins:05} give an overview of
the development of doubly-robust estimators. \citet{Zhao:Zeng:15}
give a new application in the setting of individualized treatment regimes.
We face two main challenges when
	constructing a doubly-robust kernel-machine estimator. The first is that the loss function obtained by adding an augmentation term
	needs to ensure both doubly-robustness and convexity. The second is that the augmentation-term estimation needs to converge uniformly over a set of functions that grows with the sample size. We are not aware of any doubly-robust estimators
in the context of kernel machines.
To compute the proposed doubly-robust
kernel-machine estimator, we first estimate the missing mechanism
and also the conditional distribution response given covariates.
The latter is used to calculate the conditional risk.
Then, based on the previous weighted loss function,
we augment a weighted conditional risk.
Empirical risk based on this loss function has
the doubly robust property in the sense that if
either the missing mechanism or the conditional
distribution is correctly specified, not necessarily
both, the empirical risk is unbiased. We prove oracle
inequalities and consistency results, and calculate convergence rates for
quadratic-loss doubly-robust kernel machines.

To illustrate the proposed kernel machine methods, we
apply them to simulated data.
In the simulation study, we use the proposed kernel machine methods
to analyze regression and classification problems. We then analyze
the Los Angeles homeless data, comparing the proposed kernel
machines to other existing methods.

Approaches for missing responses include the work of
\citet{Wang:Rao:02}.
Under the missing at random assumption,
they first imputed the missing response values by the kernel
regression imputation and then constructed a complete data empirical
likelihood to obtain the mean of the response variable from the imputed data set.
\citet{Wang:H:04} extended a semiparametric regression
analysis method to include missing responses.
Their interest was to
estimate the mean of the response. First they used a partially linear
semiparametric regression model to estimate the conditional mean of
response-given covariates; only completed cases are included
in this step. Then they used weighted observed responses and weighted
conditional mean of response to estimate the mean.
\citet{Smola:Vishwanathan:Hofmann:05} developed a framework
in which kernel methods can
be written as estimators in an exponential family,
which can handle both missing covariates and missing responses.
They extended the concave convex procedure \citep{Yuille:Rangarajan:03}
to find a local optimum. However, there is no guarantee for convergence and
the computations can be demanding.
\citet{Liang:Wang:07} proposed a partially linear model
for missing responses with measurement errors on the covariates.
\citet{Azriel:Brown:16} studied a regression problem with missing
responses. They showed that when the conditional expectation is
not linear in the predictors, the additional observations provide more information.
In their work, they constructed the best linear predictor which depends
also on the incomplete data.

In the learning literature, semi-supervised learning is halfway between
supervised and unsupervised learning and it can be used to handle missing data.
In this learning scenario, a dataset has two components: labeled and unlabeled.
Semi-supervised learning wishes to have a more accurate prediction
by taking into account also the unlabeled data. Semi-supervised learning
method uses, for example, distance measures to create clustering and
neighbouring graphs.
Then, the semi-supervised method uses the obtained structure to get a better
understanding of the labeled data. However, the semi-supervised approach is
different from our proposed kernel machine method,
because the semi-supervised learning methods do not
consider the missing mechanism and do not try to account
for the bias of the complete observations.
Details about semi-supervised learning
is given in \citet{Chapelle:Scholkopf:Zien:06}.

The paper is organized as follows. Background and notation are given in Section~$\ref{sec:preli}$. In Section~$\ref{sec:svm}$
we present the proposed kernel machines. Section~$\ref{sec:res}$ presents
the main theoretical results, including the oracle inequalities, consistency results,
and convergence rate calculations.
Simulation results are shown
in Section~$\ref{sec:sim}$. The Los Angeles homeless data is analyzed
in Section~$\ref{sec:ReaDat}$. In Section~$\ref{sec:conDis}$ we
discuss potential future directions. Technical proofs appear in the Supplementary Material.

\section{Preliminaries}\label{sec:preli}
Assume that $n$ independent
and identically distributed observations
$D=\{(M_1,X_1,Y_1),\ldots,\allowbreak(M_n,X_n,Y_n)\}$ are collected.
Here, $M$ is a missingness indicator such that $M=1$ if $Y$ is observed,
and $M=0$ otherwise. The random vector $X$ is a covariate vector that takes its values in a compact set $\mathcal{X}\subset\mathbb{R}^{d}$. The random variable $Y$ is the response that takes its values in the set $\mathcal{Y}\subset\mathbb{R}$ where $\mathcal{Y}$
can be, for example $\{-1,1\}$ for classification problems, and some compact segment of $\mathbb{R}$ for regression problems. Define $\pi(X)=\prob(M=1\mid X)$ as the propensity score.

We need the following assumption which is discussed in \citet{Tsiatis2006Semiparametric}.

\begin{assumption}\label{MAR}
The missing mechanism is MAR and there is a positive constant $0<c<\frac{1}{2}$  such that $\inf_{x\in\mathcal{X}} \pi(x)\geq 2c>0$.
\end{assumption}

Let $\mathcal{P}$ be the set of all probability measures that follow Assumption~\ref{MAR}.
In the following, we will focus our analysis on probability measures in $\mathcal{P}$.

We now move to discuss kernel machine learning methods. Let $L:\mathcal{Y}\times\mathbb{R}\mapsto[0,\infty)$ be a loss function where $L(Y,f(X))$ can be interpreted as the cost of predicting $Y$ by $f(X)$.
We assume that $L$ is convex and a locally Lipschitz continuous loss function such that for all $a>0$ there exists a constant $C_{L}(a)\geq0$ for which
\[\sup_{ y\in\mathcal{Y}}|L(y,t)-L(y,t')|\leq C_{L}(a)|t-t'|,\qquad\qquad t,t'\in[-a,a].\]
We also assume that $L(y,0)$ is bounded and without loss of generality we assume that $L(y,0)\leq 1$. Define the $L$-risk $R_{L,\text{\prob}}(f)\equiv\E[L(Y,f(X))]$ to be the expected loss when using the function $f(X)$ as a predictor of $Y$.
Define the the Bayes risk as $R_{L,\text{\prob}}^{\ast}\equiv\inf_{f \text{ is measurable}}R_{L,\text{\prob}}(f)$, where the Bayes risk is the smallest possible risk.  The empirical risk is defined by
\[R_{L,D}(f)\equiv\frac{1}{n}\sum_{i=1}^{n}L(Y_i,f(X_i)).\]

Let $\mathcal{H}$ be a separable reproducing kernel Hilbert space (RKHS) of a bounded measurable kernel on
$\mathcal{X}$ and denote its norm by $\|\cdot\|_{\mathcal{H}}$.
Let $k:\mathcal{X} \times \mathcal{X}\mapsto\mathbb{R}$ be its reproducing kernel. We assume that
$k$ is a universal kernel, which means that $\mathcal{H}$ is dense in the space of bounded continuous functions with respect to the supremum norm and that $\|k\|_{\infty}\leq 1$ \citep[see Chapter 4 of][for details]{Steinwart:Christmann:08,Hofmann:08}.
A kernel machine $f_{D,\lambda}$ is the minimizer of the regularized empirical risk,
\begin{equation}\label{DecFun}
f_{D,\lambda}\equiv\min _{f\in \mathcal{H}}\lambda\|f\|_{\mathcal{H}}^{2}+R_{L,D}(f),
\end{equation}
where the regularization term $\lambda\|f\|_{\mathcal{H}}^{2}$ penalizes the RKHS norm of $f$.

Since $L$ is a convex loss function, it can be shown \citep[Theorem 5.5 (Representer Theorem)]{Steinwart:Christmann:08} that
there is unique minimizer to the minimization problem~\eqref{DecFun}. Moreover,
this minimizer is of the form
\begin{equation}\label{Mini}
f_{D,\lambda}(x)=\sum_{i=1}^{n} \alpha_i k(x_i,x),
\end{equation}
where $\valpha=(\alpha_1,\cdots,\alpha_n)^{\intercal}\in \mathbb{R}^{n}$ is a vector of coefficients.

\section{Kernel machines with missing responses}\label{sec:svm}
In this section, we derive two types of kernel machines. The first type uses
weighted-complete-cases while the second type has the doubly-robust property.

\subsection{Weighted-complete-case kernel machines}

Let $\widehat{\pi}(X)>0$ be an estimator of
the propensity score.
Note that a naive complete case estimator for $R_{L,P}(f)$, denoted by $R_{L,D}(f)$ is
\[\frac{\sum_{i=1}^{n}M_iL(Y_i,f(X_i))}{\sum_{i=1}^{n}M_i}=
\frac{n^{-1}\sum_{i=1}^{n}M_iL(Y_i,f(X_i))}{n^{-1}\sum_{i=1}^{n}M_i}
\stackrel{\text{P}}\longrightarrow \frac{E[ML(Y,f(X))]}{\E[M]},\]
where
\[\frac{\E[ML(Y,f(X))]}{\E[M]}=\E[L(Y,f(X))],\]
if and only if $\E[ML(Y,f(X))]= \E[M]\E[L(Y,f(X))]$.
This is a restrictive condition which typically requires $M$ to be independent of the pair $(X,Y)$.
Therefore, consistency of $R_{L,D}(f)$ to $R_{L,P}(f)$ cannot be
guaranteed when using a naive complete case estimator.

Using Assumption~\ref{MAR},
\begin{align}\label{eq:IPW_EQ}
  \E\left\{\frac{ML(Y,f(X))}{\pi(X)}\right\}
  & =\E\left[\E\left\{\frac{ML(Y,f(X))}
  {\pi(X)}\mid X\right\}\right] \\ \notag
  &=\E\left[\E\{L(Y,f(X))\mid X\}\E\left\{\frac{M}{\pi(X)}\mid X\right\}\right]\\ \notag
  &=\E\left\{\E(L(Y,f(X))\mid X)\right\}\\ \notag
  &=\E(L(Y,f(X))).
\end{align}
Thus, in order to avoid this bias, we propose to weight the complete cases appropriately. Let $\Pi$ be the set of conditional distribution of $M$ given $X$.
Define the weighted loss function for missing
response data  $L_{W}:\Pi\times\{0,1\}\times\mathcal{X}\times\mathcal{Y}\times\mathbb{R}\mapsto[0,\infty)$  as
\[L_{W}\left(\pi^{\ast},M,X,Y,f(X)\right)\equiv\frac{ML(Y,f(X))}{\pi^{\ast}(X)}
=\begin{cases}
  \frac{L(Y,f(X))}{\pi^{\ast}(X)} & M=1,\\
   0    & M=0.
\end{cases}\]
Define the weighted empirical risk as
\[R_{L_{\widehat{W}},D}(f)\equiv\frac{1}{n}\sum_{i=1}^{n} L_{W}\left(\widehat{\pi},M_i,X_i,Y_i,f(X_i)\right)=\frac{1}{n}\sum_{i=1}^{n}\frac{M_i L(Y_i,f(X_i))}{\widehat{\pi}(X_i)}.\]

Since $L(Y, f(X))$ is a convex function, both $L_{W}\left(\pi^{\ast},M,X,Y,f(X)\right)$
and $L_{W}(\widehat{\pi},\allowbreak M, X,Y,f(X))$ are convex functions.
The missing-response kernel machine is defined as
\begin{equation}\label{eq:cons_decfun}
  f_{D,\lambda}^{W}\equiv\argmin_{f\in \mathcal{H}}\lambda\|f\|^{2}_{\mathcal{H}}+R_{L_{\widehat{W}},D}(f)=\argmin_{f\in \mathcal{H}}\lambda\|f\|^{2}_{\mathcal{H}}+\frac{1}{n}\sum_{i=1}^{n}\frac{M_i L(Y_i,f(X_i))}{\widehat{\pi}(X_i)}.
\end{equation}

\begin{lemma}\label{lemma1}
Assume that the conditional probability estimator $\widehat{\pi}(x)$ converges to $\pi(x)$
in probability and that Assumption~\ref{MAR} holds. Then,
for any given $f\in\mathcal{H}$, the weighted empirical risk $R_{L_{\widehat{W}},D}(f)$ is
a consistent estimator for the risk $R_{L,\text{\prob}}(f)$.
\end{lemma}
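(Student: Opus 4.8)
The plan is to establish the convergence in probability $R_{L_{\widehat{W}},D}(f)\to R_{L,\prob}(f)$ by separating the randomness coming from $\widehat\pi$ from the sampling randomness. I would start with the a priori bounds available for a fixed $f\in\mathcal H$: the reproducing property together with $\|k\|_\infty\le1$ gives $|f(x)|\le\|f\|_{\mathcal H}\sqrt{k(x,x)}\le\|f\|_{\mathcal H}=:a$ for all $x\in\mathcal X$, and then local Lipschitzness of $L$ combined with $L(\cdot,0)\le1$ yields the uniform bound $0\le L(Y,f(X))\le 1+C_L(a)\,a=:B$ almost surely. Together with $\inf_{x}\pi(x)\ge2c$ from Assumption~\ref{MAR}, this shows the summands $M_iL(Y_i,f(X_i))/\pi(X_i)$ are bounded by $B/(2c)$.

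Next, introduce the oracle risk $R_{L_W,D}(f)\equiv\frac1n\sum_{i=1}^n\frac{M_iL(Y_i,f(X_i))}{\pi(X_i)}$ built from the true propensity score, and use the split
\[
R_{L_{\widehat{W}},D}(f)-R_{L,\prob}(f)=\bigl(R_{L_{\widehat{W}},D}(f)-R_{L_W,D}(f)\bigr)+\bigl(R_{L_W,D}(f)-R_{L,\prob}(f)\bigr).
\]
The second bracket is an average of i.i.d.\ random variables bounded by $B/(2c)$ whose common mean equals $R_{L,\prob}(f)$ by the identity~\eqref{eq:IPW_EQ}, so it tends to $0$ almost surely by the (strong) law of large numbers; the weak law would suffice. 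For the first bracket I would bound, using the uniform bound $B$ on $L$,
\[
\bigl|R_{L_{\widehat{W}},D}(f)-R_{L_W,D}(f)\bigr|\le\frac1n\sum_{i=1}^n M_iL(Y_i,f(X_i))\Bigl|\frac{1}{\widehat\pi(X_i)}-\frac{1}{\pi(X_i)}\Bigr|\le\frac{B}{n}\sum_{i=1}^n\frac{|\pi(X_i)-\widehat\pi(X_i)|}{\widehat\pi(X_i)\,\pi(X_i)}.
\]
On the event $\{\inf_{x}\widehat\pi(x)\ge c\}$, whose probability tends to one under the assumed consistency of $\widehat\pi$ (or which can simply be enforced by truncating $\widehat\pi$ at $c$, noting $\pi\ge2c$), this last quantity is at most $\frac{B}{2c^2}\cdot\frac1n\sum_{i=1}^n|\widehat\pi(X_i)-\pi(X_i)|$, which converges to $0$ in probability. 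Combining the two brackets and applying Slutsky's lemma gives the claim.

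The step I expect to be the main obstacle — and the point at which the hypothesis on $\widehat\pi$ has to be read carefully — is controlling $\frac1n\sum_i|\widehat\pi(X_i)-\pi(X_i)|$ together with the uniform lower bound $\widehat\pi(X_i)\ge c$. These require the convergence $\widehat\pi\to\pi$ to hold not merely at a fixed point $x$ but uniformly over $\mathcal X$ (or at least in $L^1(P_X)$), since the $X_i$ at which $\widehat\pi$ is evaluated are themselves part of the data from which $\widehat\pi$ is built; if only pointwise convergence in probability is granted, one should impose a dominated or uniform version of the assumption, or truncate $\widehat\pi$ away from zero by construction. Everything else is routine bookkeeping with bounded random variables.
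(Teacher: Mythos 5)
Your proof follows essentially the same route as the paper's: replace $\widehat\pi$ by the true $\pi$, apply the law of large numbers to the oracle average together with the identity~\eqref{eq:IPW_EQ}, and control the substitution error. The paper disposes of that error with a bare ``$+\,\o_p(1)$'', so your more careful treatment --- including the observation that pointwise consistency of $\widehat\pi$ at a fixed $x$ is not quite enough and that one needs uniformity over the sample points (or a truncation of $\widehat\pi$ away from zero, as the paper itself later imposes in Assumption~\ref{assp:bond_PHat}) --- is a legitimate sharpening of the same argument rather than a different one.
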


\begin{proof}
Since $\widehat{\pi}(X)$ is a consistent estimator for $\pi(X)$,
\begin{align*}
  R_{L_{\widehat{W}},D}(f)
  &= \frac{1}{n}\sum_{i=1}^n\left[\frac{M_iL(Y_i,f(X_i))}{\widehat{\pi}(X_i)}\right]
  =  \frac{1}{n}\sum_{i=1}^n\left[\frac{M_iL(Y_i,f(X_i))}{\pi(X_i)}\right]+\o_p(1)\\
  &\stackrel{\text{P}}
  \longrightarrow \E\left\{\frac{ML(Y,f(X))}{\pi(X)}\right\}.
\end{align*}
The result then follows since $\E[ML(Y,f(X))/\pi(X)]=R_{L,P}(f)$ by~\eqref{eq:IPW_EQ}.
\end{proof}

Note that $R_{L_{\widehat W},D}(f)$ is a consistent
estimator of $R_{L,\text{\prob}}(f)$ only under the
assumption that ${\widehat{\pi}(X)}$ is consistent for $\pi(X)$.
Since this assumption cannot be verified, we also develop a family of
doubly-robust kernel machines.

\subsection{Doubly-robust kernel machines}

Suppose the conditional distribution of $Y$ given $X$ is $F_{Y|X}\left(y\mid X,\beta_{0}\right)$,
where $\beta_{0}$ $\in$ $\mathbb{B}$ is an unknown parameter and $\mathbb{B}$ is a parameter space. Assume that $F_{Y|X}(y\mid x,\beta)$ is continuously differentiable with respect to $\beta$ for
$x\in\mathcal{X}$, $y\in\mathcal{Y}$. Let
\begin{equation}\label{eq:CondiExpct}
H(x,\beta_0,f(x))=\E\left\{L(Y,f(X))\mid X=x\right\}
=\int_{y\in\mathcal{Y}} L(y,f(x))dF_{Y|X}\left(y\mid x,\beta_{0}\right).
\end{equation}
Using conditional expectations $\E\left\{H(X,\beta_0,f(X))\right\}=\E\left\{L(Y,f(X))\right\}=R_{L,\prob}(f)$.

Let $\widehat\beta$ be an estimator of $\beta_0$, and define $\widehat{H}(X,f(X))=H(X,\widehat\beta,f(X))$. Assume that $\widehat\beta\stackrel{\text{\prob}}\longrightarrow \beta^{\ast}$, where $\beta^{\ast}$ $\in$ $\mathbb{B}$.
Here, $\beta^{\ast}$ does not necessarily equal $\beta_0$. By the continuous mapping theorem,
\[F_{Y|X}\left(y\mid x,\widehat\beta\right)\stackrel{\text{P}}\longrightarrow
 F_{Y|X}\left(y\mid x,\beta^{\ast}\right),~\text{for all $x\in\mathcal{X}$ and $y\in\mathcal{Y}$.}\]
Furthermore, we assume that $H(x,\beta,f(x))$ is a continuous function of $\beta$ for
every fixed $x\in\mathcal{X}$.
Define the following augmented loss function
\[L_{W,H}\left(\pi^{\ast},H^{\ast},M,X,Y,f(X)\right)\equiv\frac{ML(Y,f(X))}{\pi^{\ast}(X)}-
\frac{M-\pi^{\ast}(X)}{\pi^{\ast}(X)}H(X,\beta^{\ast},f(X)).\]
This function doesn't need to be nonnegative as opposed to $L$ and $L_W$. The corresponding empirical risk is
\begin{align*}
R_{L_{\widehat W,\widehat{H}},D}(f)
&\equiv\frac{1}{n}\sum_{i=1}^nL_{W,H}\left(\widehat{\pi},\widehat{H},M,X,Y,f(X)\right)\\
&=\frac{1}{n}\sum_{i=1}^n
\left\{\frac{M_iL(Y_i,f(X_i))}{\widehat{\pi}(X_i)}-
\frac{M_i-\widehat{\pi}(X_i)}{\widehat{\pi}(X_i)}\widehat{H}(X_i,f(X_i))\right\}.
\end{align*}

In order to define the doubly-robust estimator,
we need both $L_{W,H}$ and $L_{\widehat{W},\widehat{H}}$  to be convex functions.
\begin{lemma}\label{lemma3}
Let $L(Y,f(X))$ be the  quadratic loss, that is $L(Y,f(X))=(Y-f(X))^{2}$.
Then, $L_{W,H}$ and $L_{\widehat{W},\widehat{H}}$ are both convex functions.
\end{lemma}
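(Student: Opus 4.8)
The plan is to reduce everything to a one-dimensional statement: convexity of $f\mapsto L_{W,H}\big(\pi^{\ast},H^{\ast},M,X,Y,f(X)\big)$ on $\mathcal{H}$ (and likewise of $f\mapsto L_{\widehat W,\widehat H}(\cdots)$) follows from convexity of the associated real function $t\mapsto L_{W,H}\big(\pi^{\ast},H^{\ast},M,X,Y,t\big)$ on $\mathbb{R}$, because for each fixed $x$ the evaluation map $f\mapsto f(x)$ is linear on $\mathcal{H}$ and a convex function precomposed with a linear map is convex. So I would fix $(M,X,Y)$ and work with the scalar variable $t$.

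Next I would make $H$ explicit for the quadratic loss. Writing $\mu_\beta(x)$ and $s_\beta(x)$ for the first two conditional moments of $Y$ given $X=x$ under $F_{Y|X}(\cdot\mid x,\beta)$ (finite since $\mathcal{Y}$ is compact), one gets from \eqref{eq:CondiExpct} that $H(x,\beta,t)=\int_{\mathcal{Y}}(y-t)^{2}\,dF_{Y|X}(y\mid x,\beta)=t^{2}-2t\,\mu_\beta(x)+s_\beta(x)$, a quadratic in $t$ with leading coefficient $1$, hence convex. The crucial point is that this leading coefficient matches that of $(y-t)^{2}$, so $(Y-t)^{2}-H(X,\beta^{\ast},t)=2\big(\mu_{\beta^{\ast}}(X)-Y\big)t+\big(Y^{2}-s_{\beta^{\ast}}(X)\big)$ is \emph{affine} in $t$. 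I would then use the decomposition
\[
L_{W,H}\big(\pi^{\ast},H^{\ast},M,X,Y,t\big)=\frac{M}{\pi^{\ast}(X)}\Big[(Y-t)^{2}-H(X,\beta^{\ast},t)\Big]+H(X,\beta^{\ast},t).
\]
The bracketed term is affine in $t$, and $M/\pi^{\ast}(X)\ge 0$ by the positivity of $\pi^{\ast}$ under Assumption~\ref{MAR}, so the first summand is affine; the second summand is convex. An affine-plus-convex function is convex, which gives the claim for $L_{W,H}$. Equivalently, one can just differentiate twice in $t$ and observe that the second derivative is $2M/\pi^{\ast}(X)-2\big(M-\pi^{\ast}(X)\big)/\pi^{\ast}(X)=2$, independent of $M$ and $\pi^{\ast}$. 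The argument for $L_{\widehat W,\widehat H}$ is word-for-word identical: $\widehat\pi(X)>0$ by assumption, and $\widehat H(X,t)=H(X,\widehat\beta,t)$ is again a quadratic in $t$ with leading coefficient $1$, so the same affine-plus-convex decomposition applies.

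I do not expect a serious obstacle; the one thing to watch — and the reason the lemma is stated only for the quadratic loss — is that the augmentation term $-\frac{M-\pi^{\ast}(X)}{\pi^{\ast}(X)}H(X,\beta^{\ast},t)$ is not sign-definite (its sign flips with $M$), so a priori it could cancel the curvature of the weighted loss and produce a concave region. What rescues convexity is the exact cancellation of the second-order-in-$t$ terms between $(Y-t)^{2}$ and $H(X,\beta^{\ast},t)$, a feature special to the quadratic loss; after that cancellation the entire weight-dependent part is affine in $t$ and all curvature is carried by the manifestly convex $H$. I would also note in passing that finiteness of $\mu_{\beta^{\ast}}(X)$, $s_{\beta^{\ast}}(X)$, $\mu_{\widehat\beta}(X)$, $s_{\widehat\beta}(X)$ uses compactness of $\mathcal{Y}$, so the displayed expressions are well defined.
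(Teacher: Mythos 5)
Your proof is correct, and it reorganizes the argument in a way that differs mildly but usefully from the paper's. The paper proves the lemma by a case split on $M$: for $M=0$ the loss reduces to $\widehat H(X,t)$, which is shown to be convex for \emph{any} convex $L$ by pushing convexity through the integral, and for $M=1$ it expands $L_{\widehat W,\widehat H}$ explicitly as a quadratic in $t$ (with leading coefficient $1$) and checks that the second derivative is positive. You instead treat both values of $M$ at once via the decomposition $L_{W,H}=\frac{M}{\pi^{\ast}(X)}\bigl[(Y-t)^{2}-H(X,\beta^{\ast},t)\bigr]+H(X,\beta^{\ast},t)$, observing that the bracket is affine in $t$ because the $t^{2}$ terms cancel, so the whole expression is a nonnegative multiple of an affine function plus a convex one. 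The underlying computation is the same in both proofs --- everything hinges on $H(x,\beta,t)$ being a quadratic in $t$ with leading coefficient exactly matching that of $(y-t)^{2}$ --- but your packaging makes explicit \emph{why} the lemma is confined to the quadratic loss (the exact cancellation of curvature between $L$ and $H$, after which the sign-indefinite augmentation weight multiplies only an affine term), whereas in the paper this is visible only implicitly in the $M=1$ computation. Your preliminary remark that convexity in $f\in\mathcal{H}$ follows from convexity in $t$ via the linear evaluation functional is a point the paper leaves tacit, and is worth stating.
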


The doubly-robust kernel machine is defined as
\begin{equation}\label{eq:DRdecfun}
  f_{D,\lambda}^{DR}\equiv\argmin_{f\in \mathcal{H}}\lambda\|f\|^{2}_{\mathcal{H}}+R_{L_{\widehat W,\widehat H},D}(f).
\end{equation}
We assume that
$\widehat{\pi}(X)$ converges in probability to some conditional
probability function $\pi^{\ast}(X)$, not necessarily the true $\pi_0(X)$. We also assume that  $\widehat\beta\stackrel{\text{\prob}}\longrightarrow \beta^{\ast}$ which does not necessarily equal $\beta_{0}$. It follows that
\[\widehat{H}(x,f(x))\stackrel{\text{P}}\longrightarrow  H(x,\beta^{\ast},f(x)),\]
where $H(x,\beta^{\ast},f(x))=\int_{y\in\mathcal{Y}} L(y,f(x))dF_{Y|X}(y\mid x,\beta^{\ast})$. The following lemma states that if either the estimators of $\pi(X)$ or the estimator of $\beta_0$ is consistent, the doubly-robust empirical risk $R_{L_{\widehat W,\widehat H},D}(f)$ is consistent for $R_{L,\text{P}}(f)$.
\begin{lemma}\label{lemma2}
For any given $f\in\mathcal{H}$, if either $\pi^{\ast}(X)=\pi(X)$ or $\beta^{\ast}=\beta_0$, then
$R_{L_{\widehat W,\widehat H},D}(f)\stackrel{\text{P}}\longrightarrow R_{L,\prob}(f)$.
\end{lemma}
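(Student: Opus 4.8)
The plan is to proceed in three steps: (i) replace the estimated nuisance quantities $\widehat\pi$ and $\widehat H$ by their probability limits, incurring only an $\o_p(1)$ error; (ii) apply the law of large numbers to the resulting i.i.d.\ average; and (iii) verify by iterated expectations that the population limit equals $R_{L,\prob}(f)$ under each of the two stated conditions.

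For step (i), fix $f\in\mathcal{H}$. Since $\|k\|_{\infty}\le 1$ we have $\|f\|_{\infty}\le\|f\|_{\mathcal{H}}$, so with $a=\|f\|_{\mathcal{H}}$ we get $f(X)\in[-a,a]$; because $\mathcal{Y}$ is compact and $L$ is locally Lipschitz with $L(y,0)\le 1$, the quantity $L(Y,f(X))$ is uniformly bounded, and hence so is $H(X,\beta,f(X))$, uniformly in $\beta$ near $\beta^{\ast}$. Writing $H^{\ast}(x,f(x))=H(x,\beta^{\ast},f(x))$, and using that $\widehat\pi$ and $\pi^{\ast}$ are bounded away from zero (with probability tending to one, via $\widehat\pi\stackrel{\prob}\to\pi^{\ast}\ge 2c$), the elementary inequality $|1/\widehat\pi-1/\pi^{\ast}|\le|\widehat\pi-\pi^{\ast}|/(2c^{2})$ together with $\widehat\pi\stackrel{\prob}\to\pi^{\ast}$ and $\widehat H\stackrel{\prob}\to H^{\ast}$ (the latter already established from continuity of $H$ in $\beta$ and $\widehat\beta\stackrel{\prob}\to\beta^{\ast}$) yields
\[
R_{L_{\widehat W,\widehat H},D}(f)=\frac{1}{n}\sum_{i=1}^{n}\left\{\frac{M_iL(Y_i,f(X_i))}{\pi^{\ast}(X_i)}-\frac{M_i-\pi^{\ast}(X_i)}{\pi^{\ast}(X_i)}H(X_i,\beta^{\ast},f(X_i))\right\}+\o_p(1).
\]
The summands on the right are i.i.d.\ and bounded, hence integrable, so by the weak law of large numbers (step (ii)) the average converges in probability to
\[
\E\left\{\frac{ML(Y,f(X))}{\pi^{\ast}(X)}-\frac{M-\pi^{\ast}(X)}{\pi^{\ast}(X)}H(X,\beta^{\ast},f(X))\right\}.
\]

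For step (iii) I would condition on $X$ and use the MAR structure, namely $M\perp Y\mid X$ and $\E[M\mid X]=\pi(X)$. If $\pi^{\ast}=\pi$, the first term has conditional expectation $\E[L(Y,f(X))\mid X]$ exactly as in \eqref{eq:IPW_EQ}, while the augmentation term has conditional expectation $H(X,\beta^{\ast},f(X))\,\E[(M-\pi(X))/\pi(X)\mid X]=0$; taking expectations gives $\E[L(Y,f(X))]=R_{L,\prob}(f)$. If instead $\beta^{\ast}=\beta_0$, then $H(X,\beta_0,f(X))=\E[L(Y,f(X))\mid X]$, so the first term has conditional expectation $(\pi(X)/\pi^{\ast}(X))H(X,\beta_0,f(X))$ and the augmentation term has conditional expectation $((\pi(X)-\pi^{\ast}(X))/\pi^{\ast}(X))H(X,\beta_0,f(X))$; their difference collapses to $H(X,\beta_0,f(X))$, whose expectation is again $R_{L,\prob}(f)$.

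I expect the main obstacle to be step (i): one must check that the mode of convergence assumed for $\widehat\pi$ and $\widehat\beta$, combined with continuity of $H$ in $\beta$ and the boundedness afforded by compactness of $\mathcal{X}$, $\mathcal{Y}$ and $f\in\mathcal{H}$, is strong enough to force $\frac{1}{n}\sum_{i=1}^{n}|\widehat\pi(X_i)-\pi^{\ast}(X_i)|$ and $\frac{1}{n}\sum_{i=1}^{n}|\widehat H(X_i,f(X_i))-H^{\ast}(X_i,f(X_i))|$ to vanish in probability, and that division by $\widehat\pi$ is legitimate on an event of probability tending to one. Once the nuisance quantities are frozen at their limits, the double-robustness identity in step (iii) is just the standard algebraic cancellation and the limit in step (ii) is an immediate consequence of the law of large numbers.
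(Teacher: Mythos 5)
Your proposal is correct and follows essentially the same route as the paper's proof: replace $\widehat\pi$ and $\widehat H$ by their probability limits up to an $\o_p(1)$ term, apply the law of large numbers, and then verify via iterated expectations (using $M\perp Y\mid X$ and $\E[M\mid X]=\pi(X)$) that the limit collapses to $R_{L,\prob}(f)$ in each of the two cases; the paper merely organizes the step-(iii) algebra slightly differently, rewriting the loss as $L+\frac{M-\pi}{\pi}\{L-H\}$ in Case~1 and as $H+\frac{M}{\pi^{\ast}}\{L-H\}$ in Case~2 before taking conditional expectations, which is the same cancellation you perform term by term. Your explicit attention to the legitimacy of the $\o_p(1)$ replacement is, if anything, more careful than the paper's treatment of that step.
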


\subsection{Estimation of the augmentation term}

We present two explicit examples of estimation of the augmentation term which is needed for the doubly-robust estimation. We limit the discussion to the quadratic loss function.

\subsubsection{Regression}

Consider the following location-shift regression model \citep[Chapter~5]{Tsiatis2006Semiparametric}
\[Y=\mu(X,\beta_0)+\varepsilon,\]
where $\varepsilon$ is the error with mean zero independent of $X$. For example, when the model is
a linear regression, $\mu(X,\beta_0)=X^{T}\beta_0$;
 when the model is a log single index model, $\mu(X,\beta_0)=\log\left(X^{T}\beta_0\right)$. Write
We present two explicit examples of estimation of the augmentation term which is needed for the doubly-robust estimation. We limit the discussion to the quadratic loss function.
\begin{eqnarray*}
 H(X,\beta_0,f(X))
 &=&\E(L(Y,f(X))\mid X)\\
 &=&\E\left\{\left(\mu(X,\beta_0)+\varepsilon-f(X)\right)^{2}\mid X\right\}\\
 &=&\E\left\{\mu(X,\beta_0)-f(X))^{2}\mid X\right\}+\E\left(\varepsilon^{2}\mid X\right)\\
 &=&\left(\mu(X,\beta_0)-f(X)\right)^{2}+\E\left(\varepsilon^{2}\right),
\end{eqnarray*}
where the third equality holds because the error $\varepsilon$ has a mean
zero and is independent of $X$.

Two terms need to be estimated, namely,
$\beta_0$ and $\E\left(\varepsilon^{2}\right)$.
We first estimate $\beta_0$ by maximizing the likelihood function.

Let $F_{M,X}(m,x)$ and $F(m,x,y,\beta_0)$ denote the joint distribution of $(M,X)$ and $(M,X,Y)$ respectively. Then
\begin{align*}
 F(m,x,y,\beta_0)
=& F_{Y|M,X}(y\mid m,x,\beta_0)F_{M,X}(m,x)\\
=& F_{Y|X}(y\mid m,\beta_0)F_{M,X}(m,x),
\end{align*}
where the second equation follows from Assumption~\ref{MAR}.

Without loss of generality, suppose that the first $n_1$ triples $(M_i,X_i,Y_i)$
are the complete cases, and for the last $n-n_1$ observations,
only the covariates $X_i$ are observed.

The likelihood function can be written as
\begin{align}\label{Lihood}\notag
  \text{Likelihood}(D,\beta_0) &= \prod_{i=1}^{n_{1}}F(m_i,x_i,y_i,\beta_0)\prod_{i=n_{1}+1}^{n}F_{M,X}(m_i,x_i) \\
    &= \prod_{i=1}^{n_{1}}F_{Y|X}(y_i\mid x_i,\beta_0)F_{M,X}(m_i,x_i)\prod_{i=n_{1}+1}^{n}F_{M,X}(m_i,x_i) \\\notag
    &= \prod_{i=1}^{n_{1}}F_{Y|X}(y_i\mid x_i,\beta_0)\prod_{i=1}^{n}F_{M,X}(m_i,x_i).
\end{align}
Note that only the first term involves $\beta_0$ and hence it is enough to maximize
$\prod_{i=1}^{n_{1}}F_{Y|X}(y_i\mid \beta_0, x_i)$.
Given an estimate for $\beta_0$, we use\\
$\frac{1}{\#\{M_{i}=1\}}\sum_{i=1}^{n}M_i\left(Y_{i}-\mu\left(X_i,\widehat{\beta}\right)\right)^{2}=
\frac{1}{\#\{M_{i}=1\}}\sum_{i=1}^{n}M_i\widehat{\varepsilon}_{i}^{2}$ to estimate
$\E\left(\varepsilon^{2}\right)$. Note that this is a consistent estimator of
$\E\left(\varepsilon^{2}\right)$ when $\mu(X,\widehat{\beta})$ is a consistent estimator of $\mu(X,\beta_0)$,
since $X$ is independent of $\varepsilon$. Thus, plugging in
$\frac{1}{\#\{M_{i}=1\}}\sum_{i=1}^{n}M_i\widehat{\varepsilon}_{i}^{2}$ into $H(X,\beta_0,f(X))$, we obtain
\[\widehat{H}(X,f(X))=\left(\mu\left(X,\widehat{\beta}\right)-f(X)\right)^{2}+
\frac{1}{\#\{M_{i}=1\}}\sum_{i=1}^{n}M_i\left(Y_{i}-\mu\left(X_i,\widehat{\beta}\right)\right)^{2},\]
which can be a substitute in
 $R_{L_{\widehat W,\widehat H},D}(f)$. Minimizing $\eqref{eq:DRdecfun}$ with respect
to this $\widehat H$ results in the doubly-robust kernel machines for regression problems.

\subsubsection{Classification}

For classification problems, where $Y \in \{-1,1\}$, assume that the probability of $Y$ given $X$ follows a logistic model. More specifically, assume
that $\prob(Y=1\mid X,\beta_0)=\frac{\exp(X^{\intercal}\beta_0)}{1+\exp(X^{\intercal}\beta_0)}=\text{logit}(X,\beta_0)$. Write
\begin{eqnarray*}
 H(X,\beta,f(X))&=&\E(L(Y,f(X))\mid X)\\
 &=&\E\left(Y^{2}+f(X)^{2}-2f(X)Y\mid X\right)\\
 &=&1+f(X)^{2}-2f(X)\E\left(Y\mid X\right)\\
 &=&1+f(X)^{2}-2f(X)\left\{2\prob(Y=1\mid X,\beta_0)-1\right\}.
\end{eqnarray*}
Only the term $\prob(Y=1\mid X)$ needs to be estimated and using the logistic model,
\[\widehat{H}(X,f(X))=1+f(X)^{2}+2f(X)
-4f(X)\text{logit}\left(X,\widehat{\beta}\right).\]
Using the same argument as previous, it is enough to maximize
\[\prod_{i=1}^{n_{1}}F_{Y|X}(y_i,\beta_0\mid x_i)=
\prod_{i=1}^{n_{1}}\prob\left(Y=1\mid x_i,\beta_0\right)^{\frac{y_i+1}{2}}
\left\{1-\prob(Y=1\mid x_i,\beta_0)\right\}^{\frac{1-y_i}{2}},\]
since $Y$ is a binary variable that gets values in $\{-1,1\}$.
This is a standard logistic regression and the estimator of $\beta_0 $ can be found using standard tools.
Substituting $\widehat{H}(X,f(X))$
into $R_{L_{\widehat W,\widehat H},D}(f)$ and  minimizing $\eqref{eq:DRdecfun}$ results in the doubly-robust kernel machines for classification problems.

\subsection{Least-squares kernel machines with missing responses}\label{sec:km}
For the quad-ratic loss function,
 the corresponding kernel machines can be calculated explicitly.
The detailed calculations can be found in the Appendix.
Let $\valpha=(\alpha_1,\cdots,\alpha_n)^{\intercal}$, ${\bf Y}=(Y_1,\cdots,Y_n)^{\intercal}$,
$W=\text{diag}\left(M_1/\widehat{\pi}(X_1),\cdots,\allowbreak M_n/\widehat{\pi}(X_n)\right)$,
 and $A=W^{1/2}$. Define the kernel matrix $K$ as
$K_{ij}=k(X_i,X_j)$, for $i,j=1,\ldots,n$. Let $I$ be the $n\times n$ identity matrix.

For the weighted-complete-case kernel machines, the coefficient vector is
\[\widehat{\valpha}=(\lambda I+WK)^{-1}W{\bf Y}.\]
For the doubly-robust estimator,
\[\widehat{\valpha}=(K+\lambda I)^{-1}\left(WY+(I-W)\vmu\left(X,\widehat{\beta}\right)\right),\]
where $\vmu\left(X,\widehat{\beta}\right)=
\left(\mu\left(X_1,\widehat{\beta}\right),\cdots,\mu\left(X_n,\widehat{\beta}\right)\right)^{\intercal}$ is the vector of means. For the classification problem, the function $\mu$ is defined as
\[\mu\left(X,\widehat{\beta}\right)=2\text{logit}\left(X,\widehat{\beta}\right)-1.\]
In both cases, the kernel machines are defined as
\begin{equation*}
\widehat{f}_{D,\lambda}(x)=\sum_{i=1}^{n}\widehat{\alpha}_ik(x,X_i).
\end{equation*}

\section{Theoretical results}\label{sec:res}

\subsection{Assumptions, conditions and errors}\label{subsec:ass_con_err}
In Section \ref{sec:svm} we proved that for any
given $f\in\mathcal{H}$, the empirical risk based on the two proposed kernel
machines are consistent estimators of the risk function $R_{L,\prob}(f)$.
In this section, we prove universal consistency and derive the learning
rates of the proposed kernel machines.
Here, universal consistency means when the training set is sufficiently large,
the learning methods produce
nearly optimal decision functions with high probability for all $P\in\mathcal{P}$.
Learning rates provide a framework that is more closely related to practical needs.
It answers how fast $R_{L,\prob}(f_{D,\lambda})$ converges to the
Bayes risk  $R_{L,\prob}^{\ast}$. The learning rate of learning method is
defined in \citet[Lemma 6.5]{Steinwart:Christmann:08}.

In order to prove the universal consistency, we will
prove oracle inequalities. Oracle inequalities
bound the finite-sample distance between the empirically obtained
decision function and that of the omniscient oracle, namely,
the true risk of decision function. Before giving
theoretical results for $f_{D,\lambda}$,
we present the following notation and assumptions.

\begin{assumption}\label{assp:bond_PHat}
The following property of the estimator $\widehat{\pi}(X)$ holds.
\[0<c_{n,\text{L}}\leq\widehat{\pi}(X)\leq c_{n,\text{U}}<1,\]
where $\frac{1}{c_{n,\text{L}}}$ and $\frac{1}{1-c_{n,\text{U}}}$ are $O\left(n^{d}\right)$, $0\leq d <\frac{1}{2}$.
\end{assumption}
Note that this assumption can always be satisfied by taking
\[\widehat{\pi}(X)\equiv\min\left\{\max\left(c_{n,\text{L}},~\widetilde{\pi}(X)\right),~c_{n,\text{U}}\right\},\]
where $\widetilde{\pi}(X)$ is some estimator. Moreover, if a lower bound on the constant $c$ in Assumption~\ref{MAR} is known, then $d\equiv0$.

Let $B_{\mathcal{H}}\equiv\{f\in \mathcal{H}:\|f\|_{\mathcal{H}}\leq 1\}$
be the unit ball in the {RKHS} $\mathcal{H}$.
Define $\mathcal{N}(B_{\mathcal{H}},\|\cdot\|_{\infty},\varepsilon)$
as the $\varepsilon$-covering number of $B_{\mathcal{H}}$ w.r.t.\
the supremum norm $\|\cdot\|_{\infty}$, where
\[\|f(x)\|_{\infty}=\text{ess}\sup \{\left|f(x)\right|, x\in\mathcal{X}\}.\]

In order to show the universal consistency of the two proposed kernel machines,
we need the following two conditions which depend on the choice of kernel and loss function. Both conditions can be verified.
\begin{condition}\label{condi:entroy}
There are constants $a>1$, $p>0$, such that for every $\varepsilon>0$,
the entropy of $B_{\mathcal{H}}$ is bounded as follows
\[\log\mathcal{N}(B_{\mathcal{H}},\|\cdot\|_{\infty},\varepsilon)\leq a\varepsilon^{-2p}.\]
\end{condition}

\begin{condition}\label{condi:LipCons}
There are constants $q>0$ and $r\geq 1$, such that the locally Lipschitz constant
is bounded by $C_{L}(\lambda)\leq r\lambda^{q}$.
\end{condition}

\begin{remark}\label{Con1_Con2}
Condition $\ref{condi:entroy}$ is used to bound the entropy of the function space
$\mathcal{H}$. Linear, Taylor, and Gaussian RBF kernels satisfy for all $p>0$, since all
of them are infinitely often differentiable \citep[Section 6.4]{Steinwart:Christmann:08}.

For the hinge loss, Condition~\ref{condi:LipCons} holds with $q=0$. For the quadratic loss,
Condition~\ref{condi:LipCons} holds with $q=1$ \citep[Section 2.2]{Steinwart:Christmann:08}.
\end{remark}

Define
\begin{eqnarray}\notag
Err_{1,n} &=& \sup_{x\in\mathcal{X}}\left|\widehat{\pi}(x)-\pi(x)\right|,\\\label{eq:Err}
Err_{2,n} &=& \sup_{f\in\mathcal{H}_{n}}\left\|\widehat{H}(x,f(x))-H(x,\beta_0,f(x))\right\|_{\infty},
\end{eqnarray}
as the missing mechanism estimation error, and the conditional risk estimation error, respectively.
Here $\mathcal{H}_n$ is the subspace on which the minimization takes place.

For calculating the learning rates we need the following assumption, which is not needed for the consistency results. Define $f_{\prob,\lambda}=\inf_{f\in \mathcal{H}}\lambda\|f\|^{2}_{\mathcal{H}}+R_{L,\prob}(f)$,
and the approximation error is given by
\[A_2(\lambda)\equiv
\lambda\|f_{\prob,\lambda}\|^{2}_{\mathcal{H}}+R_{L,\prob}(f_{\prob,\lambda})-\inf_{f\in\mathcal{H}}R_{L,\prob}(f).\]
\begin{assumption}\label{assp:BoundA2}
There exist constant $b$ and $\gamma\in(0,1]$ such that
\[A_2(\lambda)\leq b\lambda^{\gamma},\quad \lambda\geq 0.\]
\end{assumption}
This assumption is used to establish learning rates.

\subsection{Theoretical results of weighted-complete-case kernel machines}

Recall that $\mathcal{P}$ is the set of all probability distributions that follow Assumption~\ref{MAR}. We have the following consistency result for the weighted-complete-case kernel machines.

\begin{theorem}\label{theo:Cons_WCC}
Let Assumptions~\ref{MAR} and \ref{assp:bond_PHat} hold. Let the kernel $k$ and the loss function $L$ be such that Conditions~\ref{condi:entroy} and \ref{condi:LipCons} hold. Assume that $\left|\widehat{\pi}(X)-\pi(X)\right|=\O_p\left(n^{-\frac{1}{2}}\right)$. Choose $\lambda_{n}\longrightarrow0$ and \[\lambda_{n}^{\frac{q+1}{2}}n^{\min\left(\frac{1}{2}-d,\frac{1}{2p+2}\right)
-\frac{(q+1)d}{2}}\longrightarrow\infty,
\]
where $0<\lambda_{n}<1$.
Then, the weighted-complete-case kernel machine is $\mathcal{P}$-universally consistent.
In other words, $R_{L,\prob}\left(f_{D,\lambda}^{W}\right)\stackrel{\text{\prob}}
\longrightarrow R_{L,\prob}^{\ast}$ for all $\text{\prob}\in\mathcal{P}$.
\end{theorem}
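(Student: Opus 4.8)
The plan is to follow the standard oracle-inequality route for regularized kernel machines, adapted to the weighted empirical risk $R_{L_{\widehat W},D}$. First I would compare $f_{D,\lambda}^{W}$ with the (non-random) regularized population minimizer $f_{\prob,\lambda}$. By definition of $f_{D,\lambda}^{W}$ as the minimizer of $\lambda\|f\|_{\mathcal H}^2 + R_{L_{\widehat W},D}(f)$, we have $\lambda\|f_{D,\lambda}^{W}\|_{\mathcal H}^2 \le \lambda\|f_{\prob,\lambda}\|_{\mathcal H}^2 + R_{L_{\widehat W},D}(f_{\prob,\lambda})$, which immediately bounds $\|f_{D,\lambda}^{W}\|_{\mathcal H}$ by something of order $\lambda^{-1/2}$ (using $L(y,0)\le 1$ and the uniform lower bound on $\widehat\pi$ from Assumption~\ref{assp:bond_PHat}). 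This confines both $f_{D,\lambda}^{W}$ and $f_{\prob,\lambda}$ to a ball $B_{\mathcal H}$ of radius $\sim \lambda^{-1/2}$, hence to functions bounded in sup-norm by $\sim \lambda^{-1/2}$ since $\|k\|_\infty\le 1$; this is what activates the local Lipschitz control via Condition~\ref{condi:LipCons}, giving Lipschitz constant $\le r(\lambda^{-1/2})^{\text{power}}$ that feeds into the constants.

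Next I would split the excess risk $R_{L,\prob}(f_{D,\lambda}^{W}) - R_{L,\prob}^*$ into the usual pieces: the approximation error $A_2(\lambda)$ (controlled trivially for consistency, or by Assumption~\ref{assp:BoundA2} for rates), plus a stochastic/estimation error. For the estimation error the key is to pass between the weighted empirical risk and the true risk. I would write $R_{L_{\widehat W},D}(f) = R_{L_{\widehat W},D}(f) - R_{L_W,D}(f) + R_{L_W,D}(f) - R_{L_W,\prob}(f) + R_{L_W,\prob}(f)$ where $R_{L_W,D}$ uses the true $\pi$ and $R_{L_W,\prob}$ is its expectation; the last term equals $R_{L,\prob}(f)$ by~\eqref{eq:IPW_EQ}. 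The first difference is a ``plug-in'' error controlled uniformly over $B_{\mathcal H}$ by $Err_{1,n}=\sup_x|\widehat\pi(x)-\pi(x)| = \O_p(n^{-1/2})$, together with the two-sided bounds on $\pi$ and $\widehat\pi$ (so that $|1/\widehat\pi - 1/\pi| \le |\widehat\pi-\pi|/(2c\cdot c_{n,L})$), times the uniform bound $\sim\lambda^{-1/2}$-flavoured bound on $L$ over the ball — this is where the $n^{-d}$ factors and the exponent $\frac{(q+1)d}{2}$ in the rate condition enter. The second difference is the genuine empirical-process term, $\sup_{f\in B}|R_{L_W,D}(f)-R_{L_W,\prob}(f)|$, which I would control by a uniform deviation bound using the entropy Condition~\ref{condi:entroy}: the covering numbers of the loss-composed function class $\{L_W(\widehat\pi,\cdot):f\in r_\lambda B_{\mathcal H}\}$ inherit $\log\mathcal N \lesssim a\varepsilon^{-2p}$ up to the Lipschitz factor, and a standard symmetrization / Dudley-type or Talagrand-type argument (as in Chapter~7 of Steinwart--Christmann) gives a bound of order $r_\lambda \cdot C_L(\cdot)\cdot n^{-1/(2p+2)}$ (up to logs), which accounts for the $\min(\tfrac12-d,\tfrac1{2p+2})$ term.

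Putting the pieces together yields an oracle inequality of the schematic form
\[
R_{L,\prob}(f_{D,\lambda}^{W}) - R_{L,\prob}^* \;\lesssim\; A_2(\lambda) \;+\; \text{(plug-in error)} \cdot n^{-(\frac12 - d) + \frac{(q+1)d}{2}} \;+\; \lambda^{-\frac{q+1}{2}} n^{-\min(\frac12-d,\,\frac1{2p+2})} \cdot(\text{const}),
\]
holding with probability tending to one. Under the stated choice $\lambda_n\to 0$ with $\lambda_n^{(q+1)/2} n^{\min(\frac12-d,\frac1{2p+2}) - (q+1)d/2}\to\infty$, the approximation term vanishes (by universality of the kernel, $\inf_{f\in\mathcal H}R_{L,\prob}(f)=R_{L,\prob}^*$, so $A_2(\lambda_n)\to 0$) and each stochastic term is forced to $0$ in probability, giving $\mathcal P$-universal consistency for every $\prob\in\mathcal P$.

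The main obstacle, and the step I would spend the most care on, is the uniform control of the empirical process over a ball whose radius $r_\lambda\sim\lambda_n^{-1/2}$ grows with $n$, while simultaneously the integrand involves the random weights $1/\widehat\pi(X_i)$ whose uniform bound $1/c_{n,L}$ also grows like $n^d$. One must track how the radius, the Lipschitz constant $C_L(r_\lambda)\le r r_\lambda^{q}$, and the weight bound jointly inflate the covering-number and variance estimates, and verify that the rate condition on $\lambda_n$ is exactly what is needed for the product to still go to zero; decoupling the randomness of $\widehat\pi$ from the empirical process (by conditioning, or by first replacing $\widehat\pi$ with $\pi$ at cost $Err_{1,n}$ and then doing a clean empirical-process bound with the deterministic weight $1/\pi\le 1/(2c)$) is the cleanest way to handle this, and is the technical heart of the argument.
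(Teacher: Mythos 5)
Your proposal is correct and follows essentially the same route as the paper: the minimizer comparison with $f_{\prob,\lambda}$, the bound $\lambda\|f_{D,\lambda}^{W}\|_{\mathcal H}^{2}\le 1/c_{n,L}$ confining everything to a ball of radius $(c_{n,L}\lambda)^{-1/2}$, the decomposition into approximation error, a plug-in term controlled by $Err_{1,n}$ after replacing $\widehat{\pi}$ by $\pi$, and an entropy-based uniform deviation bound for the empirical process with the deterministic weight $1/\pi\le 1/(2c)$ — exactly the structure of Theorem~\ref{ineq:OIWCC} and its use in the consistency proof. The only cosmetic difference is that the paper bounds the empirical-process term by a finite $\varepsilon$-net, a union bound, and Hoeffding's inequality rather than a chaining or Talagrand-type argument, but this yields the same $n^{-1/(2p+2)}$ contribution after optimizing $\varepsilon$.
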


The proof of Theorem~\ref{theo:Cons_WCC} is based on an oracle inequality derived for weighted-complete-case kernel machines and can be found in the Supplementary Material (see Theorem~\ref{ineq:OIWCC}). Note that when $L$ is the quadratic loss, the kernel $k$ is Gaussian, and $d\equiv 0$, then $\lambda_{n}$ should be chosen such that $\lambda_{n} n^{\frac{1}{2}-\epsilon}\rightarrow \infty$ for an arbitrary small $\epsilon>0$.

Next we derive the learning rate of this learning method.
\begin{corollary}\label{coro:learnRate_WCC}
Let Assumptions~\ref{MAR}, \ref{assp:bond_PHat} and~\ref{assp:BoundA2} hold. Let the kernel $k$ and the loss function $L$ be such that Conditions~\ref{condi:entroy} and \ref{condi:LipCons} hold. Assume that $\left|\widehat{\pi}(X)-\pi(X)\right|=\O_p\left(n^{-\frac{1}{2}}\right)$. Then, the
learning rate of the weighted-complete-case kernel-machine learning method is
\[n^{\left\{-\min\left(\frac{1}{2p+2},\frac{1}{2}-d\right)+\frac{(q+1)d}{2}\right\}\frac{2\gamma}{2\gamma+q+1}}.\]
\end{corollary}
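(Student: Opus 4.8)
\emph{Proof strategy.} The plan is to obtain the rate from the oracle inequality for the weighted-complete-case kernel machine (Theorem~\ref{ineq:OIWCC} in the Supplementary Material) by balancing its approximation and stochastic parts over the regularization parameter $\lambda$. Schematically that inequality should bound $R_{L,\prob}(f_{D,\lambda}^{W})-R_{L,\prob}^{\ast}$, on a high-probability event, by the sum of three pieces: the approximation error $A_2(\lambda)$; a stochastic sample-error term made quantitative via the entropy bound of Condition~\ref{condi:entroy} and the local Lipschitz bound of Condition~\ref{condi:LipCons}; and a nuisance term produced by replacing $\pi$ with $\widehat{\pi}$, which is governed by $Err_{1,n}=\sup_{x}|\widehat{\pi}(x)-\pi(x)|$ together with the lower bound $c_{n,\text{L}}$ of Assumption~\ref{assp:bond_PHat}.

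First I would feed in the hypotheses. Assumption~\ref{assp:BoundA2} gives $A_2(\lambda)\leq b\lambda^{\gamma}$. Into the remaining terms I would substitute $\log\mathcal{N}(B_{\mathcal{H}},\|\cdot\|_{\infty},\varepsilon)\leq a\varepsilon^{-2p}$, $C_{L}(\lambda)\leq r\lambda^{q}$, $1/c_{n,\text{L}}=\O(n^{d})$, and $Err_{1,n}=\O_p(n^{-1/2})$; after simplification these collapse to a single term of order $\lambda^{-(q+1)/2}\,n^{-\theta}$ with
\[
\theta=\min\left(\frac{1}{2p+2},\,\frac{1}{2}-d\right)-\frac{(q+1)d}{2}.
\]
That this is the right exponent is essentially forced by the earlier result: the divergence requirement $\lambda_{n}^{(q+1)/2}n^{\theta}\to\infty$ imposed in Theorem~\ref{theo:Cons_WCC} is precisely the statement that this term vanishes. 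Hence, with probability tending to one,
\[
R_{L,\prob}\left(f_{D,\lambda}^{W}\right)-R_{L,\prob}^{\ast}\;\leq\; c_{1}\lambda^{\gamma}+c_{2}\,\lambda^{-(q+1)/2}\,n^{-\theta},
\]
the constants $c_{1},c_{2}$ being independent of $n$ and $\lambda$ (the confidence parameter of the oracle inequality is absorbed into $c_{2}$).

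It then remains to carry out the bias--variance trade-off: equating the two summands yields the balancing choice $\lambda_{n}\asymp n^{-2\theta/(2\gamma+q+1)}$, for which both are of order $n^{-2\gamma\theta/(2\gamma+q+1)}$, and since $-\theta=-\min(1/(2p+2),\,1/2-d)+(q+1)d/2$ this is exactly the asserted learning rate,
\[
R_{L,\prob}\left(f_{D,\lambda_{n}}^{W}\right)-R_{L,\prob}^{\ast}\;\leq\; C\,n^{\left\{-\min\left(\frac{1}{2p+2},\frac{1}{2}-d\right)+\frac{(q+1)d}{2}\right\}\frac{2\gamma}{2\gamma+q+1}}.
\]
Finally I would check that this $\lambda_{n}$ is admissible: whenever $\theta>0$ — automatic if $d=0$, and otherwise in the regime $(q+1)d/2<\min(1/(2p+2),\,1/2-d)$ — one has $\lambda_{n}\to 0$, eventually $0<\lambda_{n}<1$, and $\lambda_{n}^{(q+1)/2}n^{\theta}\asymp n^{2\gamma\theta/(2\gamma+q+1)}\to\infty$, so $\lambda_{n}$ meets the conditions under which the oracle inequality and Theorem~\ref{theo:Cons_WCC} were established. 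The genuine obstacle is therefore the oracle inequality of Theorem~\ref{ineq:OIWCC} itself — controlling the relevant empirical process uniformly over the sample-size-dependent ball of $\mathcal{H}$ on which $f_{D,\lambda}^{W}$ lives, while tracking the variance inflation caused by dividing by $\widehat{\pi}$ (this is what produces the $n^{d}$-type factors and the $1/2-d$ appearing inside $\theta$). Given that inequality, the present argument is just substitution followed by a one-parameter optimization, in the spirit of \citet[Chapters~6--7]{Steinwart:Christmann:08}, the single new ingredient being the term $Err_{1,n}$, which pushes the effective exponent down from $\min(1/(2p+2),\,1/2)$ to $\theta$.
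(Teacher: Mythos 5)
Your proposal is correct and follows essentially the same route as the paper: both start from the oracle inequality of Theorem~\ref{ineq:OIWCC}, substitute Conditions~\ref{condi:entroy} and~\ref{condi:LipCons}, Assumption~\ref{assp:bond_PHat}, $Err_{1,n}=\O_p(n^{-1/2})$, and the bound $A_2(\lambda)\leq b\lambda^{\gamma}$, and then balance the resulting $\lambda^{\gamma}$ and $\lambda^{-(q+1)/2}n^{-\theta}$ terms to get $\lambda_n\propto n^{-2\theta/(2\gamma+q+1)}$ and the stated rate. The only cosmetic difference is that the paper performs the trade-off by differentiating the bound $G_1(\lambda)$ and keeps the confidence parameter $\eta$ explicit, whereas you equate the two summands directly and absorb $\eta$ into the constants; your additional check that this $\lambda_n$ satisfies the admissibility conditions of Theorem~\ref{theo:Cons_WCC} is a sensible verification the paper leaves implicit.
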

Note that when $L$ is the quadratic loss, the kernel $k$ is Gaussian, and $d\equiv 0$,
the learning rate is $n^{\frac{\gamma}{2\gamma+2}-\epsilon}$ for an arbitrary small $\epsilon>0$.

\subsection{Theoretical results of the doubly-robust kernel machines}
Before giving the theoretical results of the doubly-robust kernel machines,
we discuss some convergence orders related to this learning
method. Let $\mathbb{P}_{n}f=\frac{1}{n}\sum_{i=1}^{n}f(X_i)$ be the empirical measure on
sample value $X_1,\dots,X_n$.
Define
\begin{align*}
a_{n}&\equiv\mathbb{P}_{n}\left[\frac{M-\pi(X)}{\pi(X)}\right]
=\frac{1}{n}\sum_{i=1}^{n}\frac{M_i-\pi(X_i)}{\pi(X_i)},\\
h_{n}&\equiv\sup_{f\in \mathcal{H}_n}\|\mathbb{P}_{n}\left\{L(X,Y,f(X))-H(X,\beta_0,f(X))\right\}\|_{\infty},
\end{align*}
where $\mathcal{H}_n$ is the subspace on which the minimization takes place. Note that $a_n$ is the mean of i.i.d.\ bounded random variables and hence $a_n=\O_p\left(n^{-\frac{1}{2}}\right)$. However, unlike $a_n$, the term $h_n$ is a supremum of a random process over of set of functions $f\in \mathcal{H}_n$, where $\mathcal{H}_n$ is a space that grows with $n$. We discuss the functional space $\mathcal{H}_n$ in the proof of the following lemma.
\begin{lemma}\label{lem:hn}
Let $q$ be the constant in Condition $\ref{condi:LipCons}$. Then
\[h_n= \O_p\left(n^{-\left\{\frac{1}{2}-\frac{(q+1)d}{4}\right\}}\lambda^{-\frac{q+1}{4}}\right),\]
where $d$ appears in Assumption~\ref{assp:bond_PHat}.	
\end{lemma}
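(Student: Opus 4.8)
The plan is to first identify the ball $\mathcal{H}_n$ on which the minimization effectively takes place, then to exploit the special structure of the quadratic loss — the only loss for which the doubly-robust machine is defined — to reduce $h_n=\sup_{f\in\mathcal{H}_n}\bigl|\frac1n\sum_{i=1}^n\{L(Y_i,f(X_i))-H(X_i,\beta_0,f(X_i))\}\bigr|$ to the norm of a single mean-zero average in $\mathcal{H}$.

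\emph{Locating $\mathcal{H}_n$.} Since $f^{DR}_{D,\lambda}$ minimizes $\lambda\|f\|_\mathcal{H}^2+R_{L_{\widehat W,\widehat H},D}(f)$ (which is well defined for the quadratic loss by Lemma~\ref{lemma3}), comparing the value at $f^{DR}_{D,\lambda}$ with the value at $f\equiv0$ gives $\lambda\|f^{DR}_{D,\lambda}\|_\mathcal{H}^2\le R_{L_{\widehat W,\widehat H},D}(0)-R_{L_{\widehat W,\widehat H},D}(f^{DR}_{D,\lambda})$. By Assumption~\ref{assp:bond_PHat} one has $\widehat\pi(X_i)\ge c_{n,\mathrm L}$ with $1/c_{n,\mathrm L}=\O(n^{d})$, and together with the compactness of $\mathcal{Y}$ and the boundedness of $\widehat H(\cdot,0)$ this yields $R_{L_{\widehat W,\widehat H},D}(0)=\O_p(n^{d})$; a matching lower bound on $R_{L_{\widehat W,\widehat H},D}(f^{DR}_{D,\lambda})$ follows from coercivity of the convex augmented empirical risk. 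Hence $\|f^{DR}_{D,\lambda}\|_\mathcal{H}=\O_p(n^{d/2}\lambda^{-1/2})$, so we may take $\mathcal{H}_n=\{f\in\mathcal{H}:\|f\|_\mathcal{H}\le\rho_n\}$ with $\rho_n\asymp n^{d/2}\lambda^{-1/2}$, and every $f\in\mathcal{H}_n$ obeys $\|f\|_\infty\le\rho_n$ because $\|k\|_\infty\le1$.

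\emph{Linearising the integrand.} For the quadratic loss, writing $\mu(x)=\E[Y\mid X=x]$ and $\sigma^2(x)=\var(Y\mid X=x)$ we have $H(x,\beta_0,f(x))=\E[(Y-f(x))^2\mid X=x]=(\mu(x)-f(x))^2+\sigma^2(x)$, so the $f(x)^2$ terms cancel in $L(Y,f(X))-H(X,\beta_0,f(X))$ and
\[L(Y,f(X))-H(X,\beta_0,f(X))=\bigl\{(Y-\mu(X))^2-\sigma^2(X)\bigr\}+2(Y-\mu(X))\bigl(\mu(X)-f(X)\bigr).\]
Summing over the sample, the only $f$-dependent contribution to $\frac1n\sum_i\{L(Y_i,f(X_i))-H(X_i,\beta_0,f(X_i))\}$ is the linear term $-\frac2n\sum_{i=1}^n(Y_i-\mu(X_i))f(X_i)$; the two remaining pieces, $\frac1n\sum_i\{(Y_i-\mu(X_i))^2-\sigma^2(X_i)\}$ and $\frac2n\sum_i(Y_i-\mu(X_i))\mu(X_i)$, are averages of i.i.d.\ bounded random variables with mean zero (because $\E[Y-\mu(X)\mid X]=0$ and $\mathcal{Y}$ is compact), hence each is $\O_p(n^{-1/2})$.

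\emph{Bounding the linear term uniformly over $\mathcal{H}_n$.} By the reproducing property, $\frac1n\sum_i(Y_i-\mu(X_i))f(X_i)=\bigl\langle f,\,\frac1n\sum_i(Y_i-\mu(X_i))k(X_i,\cdot)\bigr\rangle_\mathcal{H}$, so Cauchy--Schwarz gives
\[\sup_{f\in\mathcal{H}_n}\Bigl|\frac1n\sum_{i=1}^n(Y_i-\mu(X_i))f(X_i)\Bigr|=\rho_n\Bigl\|\frac1n\sum_{i=1}^n(Y_i-\mu(X_i))k(X_i,\cdot)\Bigr\|_\mathcal{H}.\]
Because $\E[(Y_i-\mu(X_i))\mid X_i]=0$ the cross terms vanish, and $\|k\|_\infty\le1$ gives $\E\bigl\|\frac1n\sum_i(Y_i-\mu(X_i))k(X_i,\cdot)\bigr\|_\mathcal{H}^2=\frac1{n^2}\sum_i\E[(Y_i-\mu(X_i))^2k(X_i,X_i)]=\O(n^{-1})$, so this norm is $\O_p(n^{-1/2})$. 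Collecting the three pieces,
\[h_n\le\O_p(n^{-1/2})+\O_p(\rho_n n^{-1/2})=\O_p(n^{-1/2}\rho_n)=\O_p\bigl(n^{-(1/2-d/2)}\lambda^{-1/2}\bigr),\]
which is exactly the claimed bound, since for the quadratic loss $q=1$ and then $n^{-(1/2-d/2)}\lambda^{-1/2}=n^{-\{1/2-(q+1)d/4\}}\lambda^{-(q+1)/4}$.

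\emph{The main obstacle.} Almost every step is routine once the reduction is set up; what needs care is (i) the a priori norm bound $\|f^{DR}_{D,\lambda}\|_\mathcal{H}=\O_p(n^{d/2}\lambda^{-1/2})$, because $L_{W,H}$ is not nonnegative and the usual argument must be replaced by a coercivity estimate for the convex augmented risk — this is also where the exponent $d$ enters $h_n$, through $\rho_n$; and (ii) the observation that the quadratic loss makes the integrand affine in $f(X)$, which lets us dispense with any uniform entropy/chaining control over the growing class $\mathcal{H}_n$. For a general locally Lipschitz loss the integrand would only be Lipschitz in $f(X)$, and one would instead need a variance-localised empirical-process bound over $\mathcal{H}_n$, using Condition~\ref{condi:entroy} for the entropy of $B_{\mathcal{H}}$ and Condition~\ref{condi:LipCons} for the Lipschitz constant $C_L(\rho_n)\le r\rho_n^{q}$, to reproduce the exponent $(q+1)/4$ on $\rho_n$.
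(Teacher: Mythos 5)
Your argument is correct for the quadratic loss --- the only case in which the doubly-robust machine, and hence $h_n$, is actually used --- but it reaches the bound by a genuinely different route than the paper. The paper localizes $\mathcal{H}_n$ to the ball $(c_{2,n}\lambda)^{-1/2}B_{\mathcal{H}}$ exactly as you do, bounds the envelope $\|X_i(f)\|_{\infty}\le 2\{r(c_{2,n}\lambda)^{-1}+1\}$ via \eqref{Bound:LDR}, and then invokes a functional Hoeffding inequality for the supremum of the centered process over that ball, the exponent $\frac{q+1}{4}$ arising as the square root of the envelope. You instead exploit the fact that for $L(y,t)=(y-t)^2$ the integrand $L(Y,f(X))-H(X,\beta_0,f(X))$ is affine in $f(X)$, so the supremum collapses, via the reproducing property and Cauchy--Schwarz, to $\rho_n\bigl\|\frac{1}{n}\sum_i(Y_i-\mu(X_i))k(X_i,\cdot)\bigr\|_{\mathcal{H}}$, which a one-line second-moment computation (the cross terms vanish because $\E[Y-\mu(X)\mid X]=0$) shows is $\O_p(\rho_n n^{-1/2})$. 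This buys you two things: you need no entropy condition and no uniform concentration inequality at all, and you avoid the step the paper leaves implicit, namely a bound on the \emph{expected} supremum of the process over the growing ball, which a bare functional Hoeffding inequality does not supply. Your closing remark --- that a general locally Lipschitz loss would force a genuine empirical-process argument with $C_L(\rho_n)\le r\rho_n^{q}$ --- is exactly right and explains why the lemma is stated for general $q$ even though both proofs effectively operate at $q=1$. The one soft spot you share with the paper is the a priori radius $\rho_n\asymp n^{d/2}\lambda^{-1/2}$: the paper's chain \eqref{ineq:bound_f_DR} silently uses $R_{L_{\widehat{W},\widehat{H}},D}\bigl(f_{D,\lambda}^{DR}\bigr)\ge 0$, which is not automatic since $L_{W,H}$ can be negative, and your appeal to ``coercivity'' does not close this either: completing the square in the $M=1$ terms gives only a pointwise lower bound of order $-c_{n,L}^{-2}=-\O(n^{2d})$, which would inflate the radius to $n^{d}\lambda^{-1/2}$ and degrade the rate unless $d=0$ or a sharper argument is supplied. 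Since this gap is inherited from the paper rather than introduced by you, it does not count against your proof relative to the paper's, but it deserves to be stated as an open step rather than deferred to an unproved coercivity estimate.
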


\begin{lemma}\label{Bond:Err2}
Assume that $F_{Y|X}(y\mid x,\beta)$ is continuously differentiable with respect to $\beta$ for
$x\in\mathcal{X}$, $y\in\mathcal{Y}$.
Assume that $\left|\widehat{\beta}-\beta_0\right|=\O_p\left(n^{-\frac{1}{2}}\right)$,
then, $Err_{2,n}=\O_p\left(n^{-\frac{1}{2}+\frac{(q+1)d}{2}}\lambda^{-\frac{q+1}{2}}\right)$, where $Err_{2,n}$ appears in~\eqref{eq:Err}.	
\end{lemma}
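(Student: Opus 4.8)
Write $H(x,\beta,t)=\int_{\mathcal Y}L(y,t)\,dF_{Y|X}(y\mid x,\beta)$, so that $\widehat H(x,f(x))=H(x,\widehat\beta,f(x))$ and
\[
Err_{2,n}=\sup_{f\in\mathcal H_n}\,\sup_{x\in\mathcal X}\bigl|H(x,\widehat\beta,f(x))-H(x,\beta_0,f(x))\bigr|.
\]
The plan is, first, to replace the supremum over $f\in\mathcal H_n$ by one over a bounded interval of values $t=f(x)$; second, to linearise $H$ in $\beta$; and third, to collect the rates.

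\emph{Step 1: identifying $\mathcal H_n$.} Here $\mathcal H_n$ is the ball $\{f:\|f\|_{\mathcal H}\le\rho_n\}$, with $\rho_n$ an a priori bound on $\|f_{D,\lambda}^{DR}\|_{\mathcal H}$. Comparing the objective of \eqref{eq:DRdecfun} at $f_{D,\lambda}^{DR}$ with its value at $f=0$ gives $\lambda\|f_{D,\lambda}^{DR}\|_{\mathcal H}^2\le R_{L_{\widehat W,\widehat H},D}(0)-\inf_{g}R_{L_{\widehat W,\widehat H},D}(g)$. Since the augmented loss need not be nonnegative, the infimum must be bounded from below directly; for the quadratic loss this works because the coefficient of $g(X_i)^2$ in each summand of $R_{L_{\widehat W,\widehat H},D}(g)$ equals $M_i/\widehat\pi(X_i)-(M_i-\widehat\pi(X_i))/\widehat\pi(X_i)=1$, so after completing the square and using $L(y,0)\le1$, the boundedness of $Y$ (hence of $H(\cdot,\widehat\beta,0)$), and the bounds $M_i/\widehat\pi(X_i)\le c_{n,\mathrm L}^{-1}$ and $|M_i-\widehat\pi(X_i)|/\widehat\pi(X_i)\le c_{n,\mathrm L}^{-1}$ from Assumption~\ref{assp:bond_PHat}, one gets an explicit $\rho_n$ that is $O(\lambda^{-1/2})$ up to a factor polynomial in $n^{d}$. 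Since $\|k\|_\infty\le1$, every $f\in\mathcal H_n$ then satisfies $\|f\|_\infty\le\|f\|_{\mathcal H}\le\rho_n$, so it suffices to bound $\sup_{x\in\mathcal X}\sup_{|t|\le\rho_n}|H(x,\widehat\beta,t)-H(x,\beta_0,t)|$.

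\emph{Step 2: linearising in $\beta$.} Because $L(y,t)$ does not depend on $\beta$, and because $F_{Y|X}(\cdot\mid x,\beta)$ is continuously differentiable in $\beta$ with $\mathcal X$ compact, there are a compact neighbourhood $\mathcal B_0$ of $\beta_0$ — which contains $\widehat\beta$ with probability tending to one — and a constant $C_F$ such that, for all $x\in\mathcal X$ and all $\beta_1,\beta_2\in\mathcal B_0$,
\[
\bigl|H(x,\beta_1,t)-H(x,\beta_2,t)\bigr|\le C_F\Bigl(\sup_{y\in\mathcal Y}L(y,t)\Bigr)|\beta_1-\beta_2|.
\]
(One obtains this by differentiating $H$ under the integral sign in $\beta$ and applying the mean value theorem; for the two models of Section~\ref{sec:svm} the explicit forms of $\widehat H$ give it directly, using Lipschitzness of $\mu(x,\cdot)$ in $\beta$ for regression, of $\text{logit}(x,\cdot)$ for classification, and the $\O_p(n^{-1/2})$-consistency of the estimator of $\E(\varepsilon^{2})$.) Taking $\beta_1=\widehat\beta$, $\beta_2=\beta_0$, and combining with Step~1, this bounds $Err_{2,n}$ by $C_F\bigl(\sup_{|t|\le\rho_n}\sup_y L(y,t)\bigr)\,|\widehat\beta-\beta_0|$ on the event $\widehat\beta\in\mathcal B_0$.

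\emph{Step 3: collecting rates.} Local Lipschitzness of $L$ gives $\sup_{|t|\le\rho_n}\sup_{y\in\mathcal Y}L(y,t)\le1+C_L(\rho_n)\rho_n$, and Condition~\ref{condi:LipCons} bounds $C_L(\rho_n)$ by a power of $\rho_n$, so $1+C_L(\rho_n)\rho_n$ is of order $\rho_n^{q+1}$. Combining the three steps with $|\widehat\beta-\beta_0|=\O_p(n^{-1/2})$ and the value of $\rho_n$ from Step~1 yields $Err_{2,n}=\O_p\bigl(n^{-1/2+(q+1)d/2}\lambda^{-(q+1)/2}\bigr)$. The main obstacle is Step~1: since $\mathcal H_n$ is a ball whose radius $\rho_n$ grows with $n$ (through $\lambda^{-1/2}$ and a power of $n^{d}$), one must track how the local Lipschitz constant of $L$ scales against this growing radius — this interaction is exactly what produces both the $\lambda^{-(q+1)/2}$ factor and the exponent $(q+1)d/2$ of $n$ — and because the augmented loss can be negative, pinning down the sharp radius bound already requires the quadratic structure (Lemma~\ref{lemma3}).
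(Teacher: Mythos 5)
Your proposal is correct in outline and follows the same skeleton as the paper's proof: restrict to the ball $\mathcal H_n\subset\rho_n B_{\mathcal H}$ with $\|f\|_\infty\le\|f\|_{\mathcal H}\le\rho_n$, bound $\sup_{y}L(y,t)$ on $|t|\le\rho_n$ by $1+C_L(\rho_n)\rho_n=\O\bigl(\rho_n^{q+1}\bigr)$ via the local Lipschitz property, and show that the perturbation in $\beta$ contributes a factor $\O_p\left(n^{-\frac12}\right)$. The one genuinely different ingredient is your Step~2: the paper factors $\sup_{y}|L(y,f(x))|$ out of $\int L\,d\bigl\{F_{Y|X}(\cdot\mid x,\widehat\beta)-F_{Y|X}(\cdot\mid x,\beta_0)\bigr\}$ and controls the remaining term through Hadamard differentiability and the functional delta method of \citet{Kosorok:08}, whereas you obtain a Lipschitz-in-$\beta$ bound directly by differentiating under the integral and applying the mean value theorem, then feed in $|\widehat\beta-\beta_0|=\O_p\left(n^{-\frac12}\right)$. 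Your route is more elementary and, since $F_{Y|X}(\cdot\mid x,\widehat\beta)-F_{Y|X}(\cdot\mid x,\beta_0)$ is a signed measure whose total mass is zero, it avoids the delicate factorization step; to make it rigorous under the stated hypothesis (differentiability of the distribution function, not of a density) you would integrate by parts in $y$ first, which replaces $\sup_yL(y,t)$ by a combination of $\sup_y|L(y,t)|$ and $\sup_y|\partial_yL(y,t)|$ and still yields the order $\rho_n^{q+1}$.

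There is, however, one concrete discrepancy to flag in your Step~1. The paper takes $\rho_n=(c_{2,n}\lambda)^{-\frac12}=\O\left(n^{\frac d2}\lambda^{-\frac12}\right)$ from \eqref{ineq:bound_f_DR}, and the exponent $\frac{(q+1)d}{2}$ in the lemma is exactly $\rho_n^{q+1}$ evaluated at this radius. You correctly observe that \eqref{ineq:bound_f_DR} tacitly treats the augmented empirical risk as nonnegative, which it need not be, and you propose to lower-bound $\inf_gR_{L_{\widehat W,\widehat H},D}(g)$ by completing the square (using that the coefficient of $g(X_i)^2$ in each summand equals $1$, which is correct). But carried out, that argument gives $\inf_gR_{L_{\widehat W,\widehat H},D}(g)\ge -\O\left(c_{n,\mathrm L}^{-2}\right)=-\O\left(n^{2d}\right)$, because the linear coefficient is only bounded by a multiple of $c_{n,\mathrm L}^{-1}$ and enters squared. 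This yields $\rho_n=\O\left(n^{d}\lambda^{-\frac12}\right)$ rather than $\O\left(n^{\frac d2}\lambda^{-\frac12}\right)$, and hence a final rate $\O_p\left(n^{-\frac12+(q+1)d}\lambda^{-\frac{q+1}{2}}\right)$, weaker than the lemma's statement whenever $d>0$ (the two coincide when $d=0$, e.g.\ when a lower bound on $c$ in Assumption~\ref{assp:bond_PHat} is known). So your "up to a factor polynomial in $n^d$" hides precisely the power that determines the exponent you claim; you must either accept the weaker exponent or justify the paper's sharper radius bound by a separate argument.
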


\begin{remark}\label{re:order_HE2}
Recall that for the quadratic loss, $q=1$. Thus, by Lemmas~\ref{lem:hn} and~\ref{Bond:Err2},
\begin{eqnarray*}
h_n&=& \O_p\left(n^{-\frac{1}{2}+\frac{d}{2}}\lambda^{-\frac{1}{2}}\right),\\
Err_{2,n}&=&\O_p\left(n^{-\frac{1}{2}+d}\lambda^{-1}\right).
\end{eqnarray*}
\end{remark}

The previous three convergence orders of $a_n$,
$h_n$ and $Err_{2,n}$ are used to prove the universal consistency
and derive the learning rate of the doubly-robust kernel
machine learning method.
The following theorem describes the universal consistency property for doubly robust kernel machines.

\begin{theorem}\label{theo:Cons_DR}
Let Assumptions~\ref{MAR} and~\ref{assp:bond_PHat} hold. Let the loss function $L$ be
a quadratic loss. Assume that either $\left|\hat\pi(X)-\pi(X)\right|=\O_p\left(n^{-\frac{1}{2}}\right)$
or $\left|\widehat{\beta}-\beta_0\right|=\O_p\left(n^{-\frac{1}{2}}\right)$. 	
Choose  $0<\lambda_n<1$, such that $\lambda_n\longrightarrow0$ and
\[\lambda_n n^{\min\left(\frac{1}{2p+2},\frac{1}{2}-d\right)-d}\longrightarrow\infty.\]
Then,
$R_{L,\prob}\left(f_{D,\lambda}^{DR}\right)\stackrel{\text{\prob}}\longrightarrow R_{L,\prob}^*$
for all $P\in\mathcal{P}$.
\end{theorem}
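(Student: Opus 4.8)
The plan is to mimic the proof strategy of Theorem~\ref{theo:Cons_WCC}: establish an oracle inequality bounding $R_{L,\prob}(f_{D,\lambda}^{DR}) - R_{L,\prob}^{\ast}$ in terms of the regularization term $\lambda\|f_{\prob,\lambda}\|_{\mathcal H}^2$, the approximation error, and a stochastic error term controlled by the convergence orders $a_n$, $h_n$, and $Err_{2,n}$ discussed just above the statement. First I would fix $\prob\in\mathcal P$ and decompose the excess risk through the regularized objective. Writing $\mathcal{R}_{reg}(f) = \lambda\|f\|_{\mathcal H}^2 + R_{L,\prob}(f)$ and its empirical doubly-robust analogue $\widehat{\mathcal{R}}_{reg}(f) = \lambda\|f\|_{\mathcal H}^2 + R_{L_{\widehat W,\widehat H},D}(f)$, optimality of $f_{D,\lambda}^{DR}$ for $\widehat{\mathcal R}_{reg}$ gives
\[
\lambda\|f_{D,\lambda}^{DR}\|_{\mathcal H}^2 + R_{L,\prob}(f_{D,\lambda}^{DR}) \le \lambda\|f_{\prob,\lambda}\|_{\mathcal H}^2 + R_{L,\prob}(f_{\prob,\lambda}) + 2\sup_{f\in\mathcal H_n}\bigl| R_{L_{\widehat W,\widehat H},D}(f) - R_{L,\prob}(f)\bigr|,
\]
where $\mathcal H_n$ is the ball in $\mathcal H$ on which the minimization effectively takes place (its radius is $O(\lambda^{-1/2})$ since $L(y,0)\le 1$ forces $\lambda\|f_{D,\lambda}^{DR}\|_{\mathcal H}^2\le 1$, hence $\|f_{D,\lambda}^{DR}\|_{\mathcal H}\le\lambda^{-1/2}$, and similarly for $f_{\prob,\lambda}$). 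Adding and subtracting $R_{L,\prob}^{\ast}=\inf_{f\text{ meas.}}R_{L,\prob}(f)$ and using the universality of $k$ (so $\inf_{f\in\mathcal H}R_{L,\prob}(f)=R_{L,\prob}^{\ast}$), the deterministic part $\lambda\|f_{\prob,\lambda}\|_{\mathcal H}^2 + R_{L,\prob}(f_{\prob,\lambda}) - R_{L,\prob}^{\ast} = A_2(\lambda)$ vanishes as $\lambda_n\to 0$ by the definition of the approximation error and universality.

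The substantive step is bounding the uniform deviation $\sup_{f\in\mathcal H_n}| R_{L_{\widehat W,\widehat H},D}(f) - R_{L,\prob}(f)|$. I would split it into three pieces by inserting the oracle augmented risk $R_{L_{W,H},D}(f)$ built from the true $\pi$ and true $\beta_0$: (i) a term measuring replacement of $(\widehat\pi,\widehat H)$ by $(\pi,H)$, which under the doubly-robust structure of $L_{W,H}$ contributes a product-type error controlled by $Err_{1,n}\cdot(\text{bounded})$ plus $Err_{2,n}$ — crucially, at least one of $Err_{1,n}=O_p(n^{-1/2})$ or the $\beta$-error is $O_p(n^{-1/2})$ by hypothesis, and Lemma~\ref{Bond:Err2} gives $Err_{2,n}=O_p(n^{-1/2+d}\lambda^{-1})$; (ii) the centered empirical process $\sup_{f\in\mathcal H_n}|R_{L_{W,H},D}(f) - \E R_{L_{W,H},D}(f)|$, which decomposes into the $a_n=O_p(n^{-1/2})$ piece (from the $\frac{M-\pi(X)}{\pi(X)}$ multiplier acting on the bounded, $f$-free part after conditioning) and the $h_n=O_p(n^{-1/2+d/2}\lambda^{-1/2})$ piece (from $\mathbb P_n\{L-H\}$), each scaled by the Lipschitz/diameter factor $C_L(\lambda)\lambda^{-1/2}=O(\lambda^{q-1/2})=O(\lambda^{1/2})$ for the quadratic loss ($q=1$); and (iii) the bias term $\E R_{L_{W,H},D}(f) - R_{L,\prob}(f)$, which is zero exactly by Lemma~\ref{lemma2} (the population doubly-robust identity) whenever $\pi^{\ast}=\pi$ or $\beta^{\ast}=\beta_0$ — and since the convergence hypotheses force $\pi^{\ast}=\pi$ or $\beta^{\ast}=\beta_0$, this term drops out.

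Collecting the orders, the stochastic error is $O_p\bigl(n^{-1/2} + \lambda^{1/2}(n^{-1/2} + n^{-1/2+d/2}\lambda^{-1/2}) + n^{-1/2+d}\lambda^{-1} + (\text{covering-number term})\bigr)$, where the covering-number contribution from Condition~\ref{condi:entroy}, treated exactly as in the weighted-complete-case proof, scales like $n^{-\min(1/(2p+2),\,1/2-d)}\lambda^{-1/2}$ after the standard chaining/entropy argument over $\mathcal H_n$ with the quadratic-loss Lipschitz constant. The dominant term is $n^{-\min(1/(2p+2),1/2-d)-d}\lambda_n^{-1}$ up to the factor $\lambda_n^{-1/2}$ versus $\lambda_n^{-1}$ bookkeeping; the hypothesis $\lambda_n n^{\min(1/(2p+2),1/2-d)-d}\to\infty$ is exactly what makes every piece go to $0$ in probability while $\lambda_n\to 0$ kills $A_2(\lambda_n)$. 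Hence $R_{L,\prob}(f_{D,\lambda_n}^{DR})\to R_{L,\prob}^{\ast}$ in probability for every $\prob\in\mathcal P$, giving $\mathcal P$-universal consistency.

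I expect the main obstacle to be the empirical-process/entropy control of the $h_n$-type term over the growing ball $\mathcal H_n$: one must verify that the covering number bound of Condition~\ref{condi:entroy} on the unit ball transfers, after rescaling by $\lambda^{-1/2}$ and composing with the (only locally) Lipschitz quadratic loss, to a usable uniform bound, and that the augmentation term $\frac{M-\pi(X)}{\pi(X)}H(X,\beta_0,f(X))$ — which is quadratic in $f$ and hence only locally bounded on $\mathcal H_n$ — does not blow up the modulus of continuity. This is where the constraint $0\le d<1/2$ and the precise exponent in the $\lambda_n$-rate condition are forced; the remaining pieces ($a_n$, $Err_{2,n}$, the population bias cancellation via Lemma~\ref{lemma2}) are comparatively routine given the lemmas already established.
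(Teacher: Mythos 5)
Your overall architecture (an oracle inequality via optimality of $f_{D,\lambda}^{DR}$, entropy/Hoeffding control of the centered empirical process over the rescaled ball, and the approximation error killed by $\lambda_n\to0$) matches the paper's, which first proves Theorem~\ref{ineq:OIDR} and then specializes. The gap is in the one step where double robustness actually has to do work: your bound for the plug-in piece (i), ``$Err_{1,n}\cdot(\text{bounded})$ plus $Err_{2,n}$,'' is a \emph{sum}, and a sum only vanishes if \emph{both} summands do. Under the theorem's hypothesis only one of the two models is assumed correct: if only $\left|\widehat{\pi}(X)-\pi(X)\right|=\O_p\left(n^{-\frac{1}{2}}\right)$ holds, then $\widehat{\beta}\to\beta^{\ast}\neq\beta_0$ is allowed, and $Err_{2,n}=\sup_{f}\|\widehat{H}(\cdot,f(\cdot))-H(\cdot,\beta_0,f(\cdot))\|_{\infty}$ does \emph{not} tend to zero --- Lemma~\ref{Bond:Err2} requires $\left|\widehat{\beta}-\beta_0\right|=\O_p\left(n^{-\frac{1}{2}}\right)$ to deliver the rate $n^{-\frac{1}{2}+d}\lambda^{-1}$. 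Symmetrically, if only the outcome model is correct, $Err_{1,n}$ need not vanish. Your ``collecting the orders'' step inherits this defect: the term $n^{-\frac{1}{2}+d}\lambda^{-1}$ is simply unavailable in the first case, where that contribution is merely $\O_p(n^{d}\lambda^{-1})$ and blows up.

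The paper closes this by two case-specific algebraic decompositions of $R_{L_{\widehat{W},\widehat{H}},D}(f)-R_{L_{W,H},D}(f)$ in which the non-vanishing misspecification error always appears \emph{multiplied} by a vanishing empirical quantity: in Case~1 the $\widehat{H}-H$ discrepancy is hit by $a_n=\mathbb{P}_{n}\left[(M-\pi(X))/\pi(X)\right]=\O_p\left(n^{-\frac{1}{2}}\right)$, so only an envelope bound on $Err_{2,n}$ is needed; in Case~2 the $\pi-\widehat{\pi}$ discrepancy is hit by $h_n=\O_p\left(n^{-\frac{1}{2}+\frac{d}{2}}\lambda^{-\frac{1}{2}}\right)$ from Lemma~\ref{lem:hn}, so only $Err_{1,n}\leq 2$ is needed, while the correctly specified component contributes its fast-rate term directly. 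Relatedly, you place $a_n$ and $h_n$ inside the centered empirical process (ii); in the paper that process is handled purely by covering numbers and Hoeffding's inequality, and $a_n$, $h_n$ exist precisely to serve as the small multipliers in the plug-in piece. Without exploiting that product structure, the argument does not go through under the ``either/or'' hypothesis, which is the whole point of the doubly-robust construction.
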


The proof of Theorem \ref{theo:Cons_DR} is based on an oracle inequality derived for
doubly-robust kernel machines and can be found in the Supplementary Material (see Theorem~\ref{ineq:OIDR}). Note that
when the kernel $k$ is Gaussian, and $d\equiv0$,
then $\lambda_{n}$ should be chosen such that
$\lambda_{n} n^{\frac{1}{2}-\epsilon}\rightarrow \infty$ for an arbitrary small $\epsilon>0$.

Finally, we derive the learning rate based on previous results.
\begin{corollary}\label{coro:learnRate_DR}
Let Assumptions~\ref{MAR}, \ref{assp:bond_PHat} and \ref{assp:BoundA2} hold.
Let the loss function $L$ be a quadratic loss and the kernel $k$  be such that
Conditions $\ref{condi:entroy}$, $\ref{condi:LipCons}$ hold.
If either $\left|\hat\pi(X)-\pi(X)\right|=\O_p\left(n^{-\frac{1}{2}}\right)$
or $\left|\widehat{\beta}-\beta_0\right|=\O_p\left(n^{-\frac{1}{2}}\right)$, then
the learning rate of the doubly-robust kernel machine learning method is  $n^{\left\{-\min\left(\frac{1}{2p+2},\frac{1}{2}-d\right)+d\right\}\frac{\gamma}{\gamma+1}}$.
\end{corollary}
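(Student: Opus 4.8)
The plan is to read the rate off a finite-sample oracle inequality, exactly as in Corollary~\ref{coro:learnRate_WCC}. First I would invoke the oracle inequality for the doubly-robust kernel machine (Theorem~\ref{ineq:OIDR} in the Supplementary Material). Since $L$ is the quadratic loss and $L(y,0)\le1$, the empirical risk $R_{L_{\widehat W,\widehat H},D}(0)$ is $\O_p(n^{d})$ — the weight $M_i/\widehat\pi(X_i)$ and the coefficient $(M_i-\widehat\pi(X_i))/\widehat\pi(X_i)$ are both $\O(n^{d})$ by Assumption~\ref{assp:bond_PHat}, while $\widehat H(X_i,0)$ is $\O_p(1)$ — so $\lambda\|f_{D,\lambda}^{DR}\|_{\mathcal H}^{2}=\O_p(n^{d})$ and the minimizer of~\eqref{eq:DRdecfun} lies in a ball $\mathcal{H}_n$ of radius $\O_p(n^{d/2}\lambda^{-1/2})$, to which Condition~\ref{condi:entroy} applies after rescaling. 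Theorem~\ref{ineq:OIDR} then yields, for every $0<\lambda<1$ and with probability tending to one, a bound of the form $R_{L,\prob}(f_{D,\lambda}^{DR})-R_{L,\prob}^{\ast}\lesssim A_2(\lambda)+\Psi(\lambda,n)$, where $\Psi$ collects the RKHS entropy/concentration term (governed by Condition~\ref{condi:entroy} and by $C_L(\lambda)\le r\lambda^{q}$ of Condition~\ref{condi:LipCons}) together with the three estimation-error contributions driven by $a_n=\O_p(n^{-1/2})$, by $h_n$ (Lemma~\ref{lem:hn}), and by $Err_{2,n}$ (Lemma~\ref{Bond:Err2}), all inflated by the $\O(n^{d})$-sized factors $\widehat\pi^{-1}$ and $(M-\widehat\pi)/\widehat\pi$ and by the diverging radius of $\mathcal{H}_n$.

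Specializing to the quadratic loss, Condition~\ref{condi:LipCons} holds with $q=1$ (Remark~\ref{Con1_Con2}), and by Remark~\ref{re:order_HE2} one has $h_n=\O_p(n^{-1/2+d/2}\lambda^{-1/2})$ and $Err_{2,n}=\O_p(n^{-1/2+d}\lambda^{-1})$. Feeding these orders into the oracle inequality, together with $A_2(\lambda)\le b\lambda^{\gamma}$ from Assumption~\ref{assp:BoundA2}, collapses the bound to
\[
R_{L,\prob}\!\left(f_{D,\lambda}^{DR}\right)-R_{L,\prob}^{\ast}\;=\;\O_p\!\left(\lambda^{\gamma}+\lambda^{-1}n^{-\rho}\right),\qquad \rho\equiv\min\!\left(\tfrac{1}{2p+2},\tfrac{1}{2}-d\right)-d .
\]
Balancing the two summands, i.e.\ setting $\lambda^{\gamma}=\lambda^{-1}n^{-\rho}$, dictates $\lambda_n=n^{-\rho/(\gamma+1)}$. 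Because $0<1/(\gamma+1)<1$, this $\lambda_n$ satisfies $\lambda_n\to0$ and $\lambda_n\,n^{\min(1/(2p+2),\,1/2-d)-d}=n^{\rho\gamma/(\gamma+1)}\to\infty$, so it is admissible for Theorem~\ref{theo:Cons_DR} and hence for the oracle inequality. Substituting $\lambda_n$ back makes both summands of order $n^{-\rho\gamma/(\gamma+1)}=n^{\left\{-\min\left(\frac{1}{2p+2},\frac{1}{2}-d\right)+d\right\}\frac{\gamma}{\gamma+1}}$, which is the asserted learning rate. (For $q=1$ this coincides with the weighted-complete-case rate in Corollary~\ref{coro:learnRate_WCC}, so double robustness costs nothing in the rate.)

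The real work lies upstream, in the oracle inequality rather than in this corollary: one must control $h_n$ and $Err_{2,n}$ \emph{uniformly} over the ball $\mathcal{H}_n$ whose radius diverges with $n$ (the second challenge flagged in the Introduction), and carry out a Talagrand/Bernstein-type concentration argument for the augmented loss $L_{W,H}$, which is convex by Lemma~\ref{lemma3} but, unlike $L$ and $L_W$, need not be nonnegative. Granting the oracle inequality, the only delicate points inside the corollary are bookkeeping: verifying that, after multiplication by the $\O(n^{d})$ weights, it is the $\lambda^{-1}$-scaled pieces (the entropy term and $Err_{2,n}$) rather than the milder $\lambda^{-1/2}$-scaled $h_n$ that dominate, so that the exponent of $\lambda$ is exactly $-1$ and the $n$-exponent is $-\rho$; and checking that the optimizing $\lambda_n$ still obeys every rate restriction required above.
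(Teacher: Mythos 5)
Your proposal is correct and follows essentially the same route as the paper: invoke the doubly-robust oracle inequality (Theorem~\ref{ineq:OIDR} together with the bounds on $a_n$, $h_n$, and $Err_{2,n}$ from Lemmas~\ref{lem:hn} and~\ref{Bond:Err2}), reduce the bound to $\lambda^{\gamma}+\lambda^{-1}n^{-\rho}$ with $\rho=\min\left(\frac{1}{2p+2},\frac{1}{2}-d\right)-d$ using Assumption~\ref{assp:BoundA2} and $0<\lambda<1$ to absorb the $\lambda^{-1/2}$-scaled $h_n$ term, and optimize over $\lambda$. The paper carries out the optimization by differentiating an explicit function $G_2$ (resp.\ $G_3$) separately in the two cases (consistent $\widehat\pi$ versus consistent $\widehat\beta$), but this is the same balancing calculation you perform, yielding $\lambda_n\propto n^{-\rho/(\gamma+1)}$ and the stated rate.
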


Note that when the kernel $k$ is Gaussian, and $d\equiv 0$,
the learning rate is $n^{\frac{\gamma}{2\gamma+2}-\epsilon}$, $\epsilon>0$ is arbitrary small.

\section{Simulation study}\label{sec:sim}
We conducted a simulation study to evaluate the finite-sample performance of the
proposed kernel methods for both regression and classification. We compare
the proposed methods with the following three existing methods.
\begin{description}
\item[Reg]  The linear regression method which uses only complete cases.
\item[SSL]  The semi-supervised linear regression method of \cite{Azriel:Brown:16}
which takes into account the missing responses.
\item[CC] The naive kernel machines which use only the complete observations.
\end{description}
For the proposed kernel machines, we consider the following six different settings.
\begin{description}
\item[WCC-M] Weighted-complete-case kernel machines with a misspecified missing mechanism,
 which is estimated by a generalized linear model through the probit link function.
\item[WCC-C] Weighted-complete-case kernel machines with a correctly specified missing mechanism,
namely a generalized linear model with the logit link function.
\item[DR-M] Doubly-robust kernel machines with a misspecified missing mechanism
and a misspecified regression model.
\item[DR-MR] Doubly-robust kernel machines with a misspecified
regression model but a correctly specified missing mechanism.
\item[DR-MM] Doubly-robust kernel machines with a correctly
 specified regression model and a misspecified missing mechanism.
\item[DRC] Doubly-robust kernel machines with a correctly specified regression
 model and a missing mechanism.
\end{description}

We consider four generating data mechanisms. The first example is a toy example
that shows the price of ignoring the missing responses. Specifically,
in this example, both the density and the missing rate are getting larger
with the first covariate $X$. Thus, ignoring the missingness,
yields an estimator which is based on the smaller values of $X$.
However, since the response is a nonlinear curve in $X$,
this may lead to a biased estimation. The model is given by
$Y=\exp(X)+U_2+U_3+U_4+U_5+\varepsilon$ where $X\sim4\cdot \text{Beta}(5,3)$, $U_2,\ldots U_5$
are independent uniform variables on $[0,4]$, and $\varepsilon$ is a standard normal variable.
The missing mechanism is given by
\[\prob\left(M=1\mid X\right)
=\begin{cases}
\left[{1+\exp\left\{\frac{9}{2}(X-2)\right\}}\right]^{-1}& X\leq2,\\
\left[{1+\exp\left\{-(X-4)\right\}}\right]^{-1}& \text{otherwise}.
\end{cases}\]
For $X$ in segment $[0,2]$, the missing rate is about 22\%,
while 77\% for $X$ in segment $(2,4]$. The overall missing rate is about 64\%,
see Figure~\ref{plot:YandR}.
\begin{figure}[!t]
\centering
\includegraphics[width=0.6\textwidth]{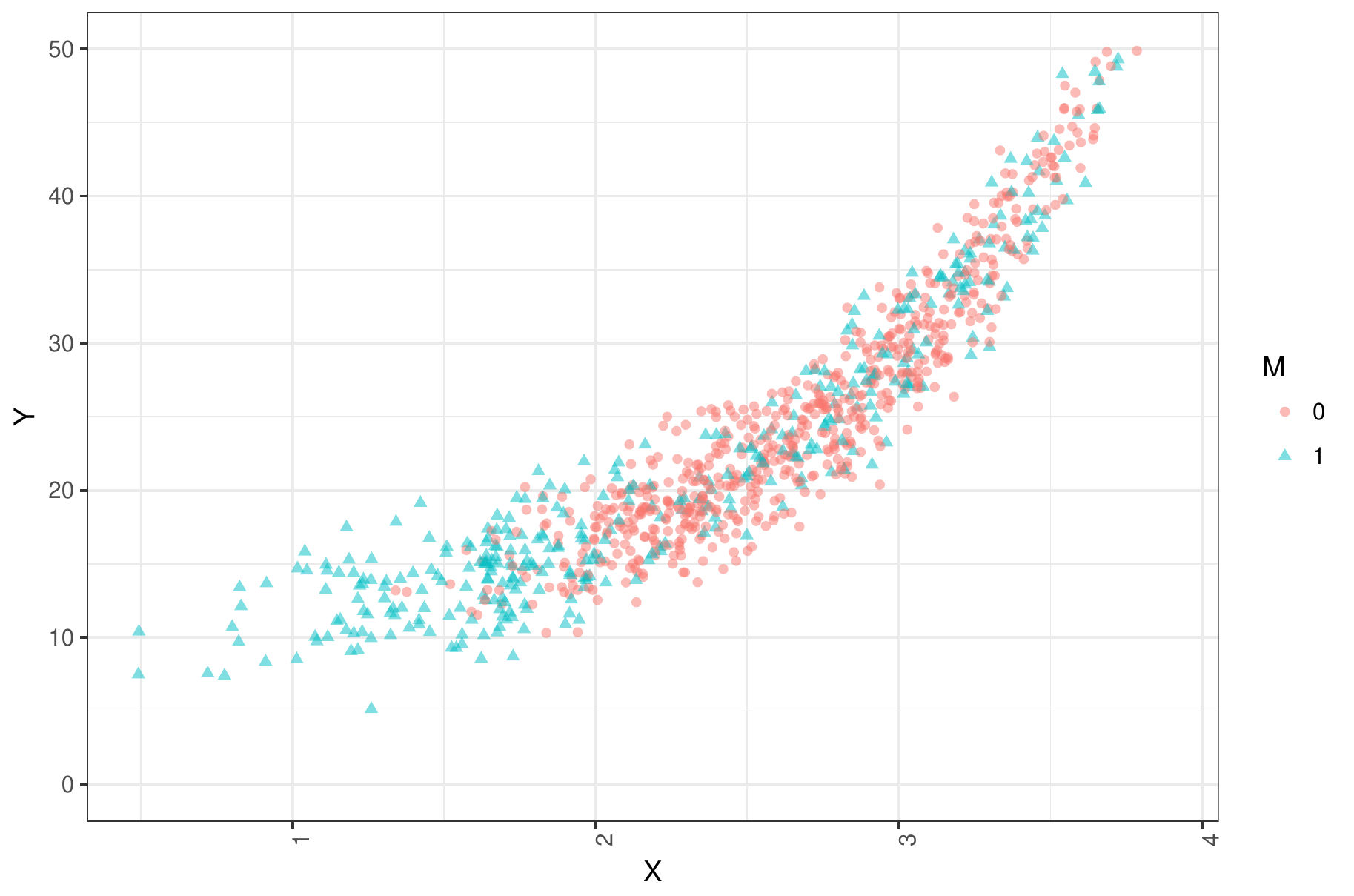}
\caption{\scriptsize{Plot of Setting~1 with sample size 1000: The blue triangle points are observed responses and
the red circle points are missing responses. Observations with $X$ on [0,2]
have lower probability to be generated than $X$ on (2,4]
while more easily to be observed.}\label{plot:YandR}}
\end{figure}

Setting~2 is a classification setting which is considered by \cite{Laber:Murphy11}. Data are
generated as $Y=\text{sign}\left(X_2-\frac{4}{25}X_1^{2}-1+\varepsilon\right)$, where $X_1$
and $X_2$ are independent uniform random variables on $[0,5]$, $\varepsilon$ is a normal variable
with mean 0 and standard deviation $\frac{1}{2}$.
The missing mechanism is
$\prob\left(M=1\mid X\right)
=\frac{\exp\left\{\frac{3}{2}(X_2-X_1)\right\}}{1+\exp\left\{\frac{3}{2}(X_2-X_1)\right\}}$.
For $Y=1$, the missing rate is about 20\% while 84\% for $Y=-1$, which means that positive labels are
more easily observed. The overall missing rate is about 50\%.

The last two settings are taken from examples in \cite{Liu:Lin:07}.
These two settings are motivated from prostate-specific antigen (PSA) which is routinely
used as a biomarker for prostate cancer screening. \cite{Liu:Lin:07} studied the genetic
pathway effect on PSA and use least-squares kernel
machines to model the genetic pathway effect. Consider a generic regression model
$Y=Z+h\left(X_1,\ldots,X_p\right)+\varepsilon$, where
$X_1,\ldots X_p$ are independent uniform variables on $[0,1]$,
$Z=3\cos(X_1)+2U$, where $U$ is also a uniform random variable on $[0,1]$, $h(\cdot)$
is a centered smooth function, and $\varepsilon$ is an independent standard normal random variable.
In Setting~3, $p=5$, and
\begin{align*}
&h\left(X_1,\ldots,X_5\right)\\
&=10\cos(X_1)-15X_2^{2}+
10\exp(-X_3)Z_4-8\sin(X_5)\cos(X_3)+20X_1X_5.
\end{align*}
The missing mechanism is given by
\[\prob\left(M=1\mid X\right)=
\frac{\exp\left\{-\frac{4}{3}\log3+\frac{2}{3}\log3\sum_{i=1}^{5}\frac{X_i}{5}\right\}}
{1+\exp\left\{-\frac{4}{3}\log3+\frac{2}{3}\log3\sum_{i=1}^{5}\frac{X_i}{5}\right\}}.\]
In Setting~4, $p=10$, and
\begin{align*}
&h\left(X_1,\ldots,X_{10}\right)\\
&=10\cos(X_1)-15X_2^{2}+10\exp(-X_3)X_4-8\sin(X_5)\cos(X_3) \\
&\quad+20X_1X_5+9X_6\sin(X_7)-8\cos(X_6)X_7+20X_8\sin(X_9)
\sin(X_{10})\\
&\quad-15X_{8}^{3}-10X_8X_9-\exp(X_{10})\cos(X_{10}).
\end{align*}
The missing mechanism is given by
\[\prob\left(M=1\mid X\right)=
\frac{\exp\left\{-\frac{4}{3}\log3+\frac{2}{3}\log3\sum_{i=1}^{10}\frac{X_i}{10}\right\}}
{1+\exp\left\{-\frac{4}{3}\log3+\frac{2}{3}\log3\sum_{i=1}^{10}\frac{X_i}{10}\right\}}.\]

In Setting~3 and Setting~4, the missing rate is about 75\%.
We repeated the simulations 100 times for each of the sample sizes 100, 200,
400, and 800. We used a testing dataset of size 100,000 to evaluate the performance
of different methods. The results are summarized in Table~\ref{tab:4settings} and Figure~\ref{plot:4boxplot}. A link
to the code for both the algorithm and the simulations can be found
in Supplementary Material.

\begin{figure}[!t]
\centering
\includegraphics[width=0.63\textwidth]{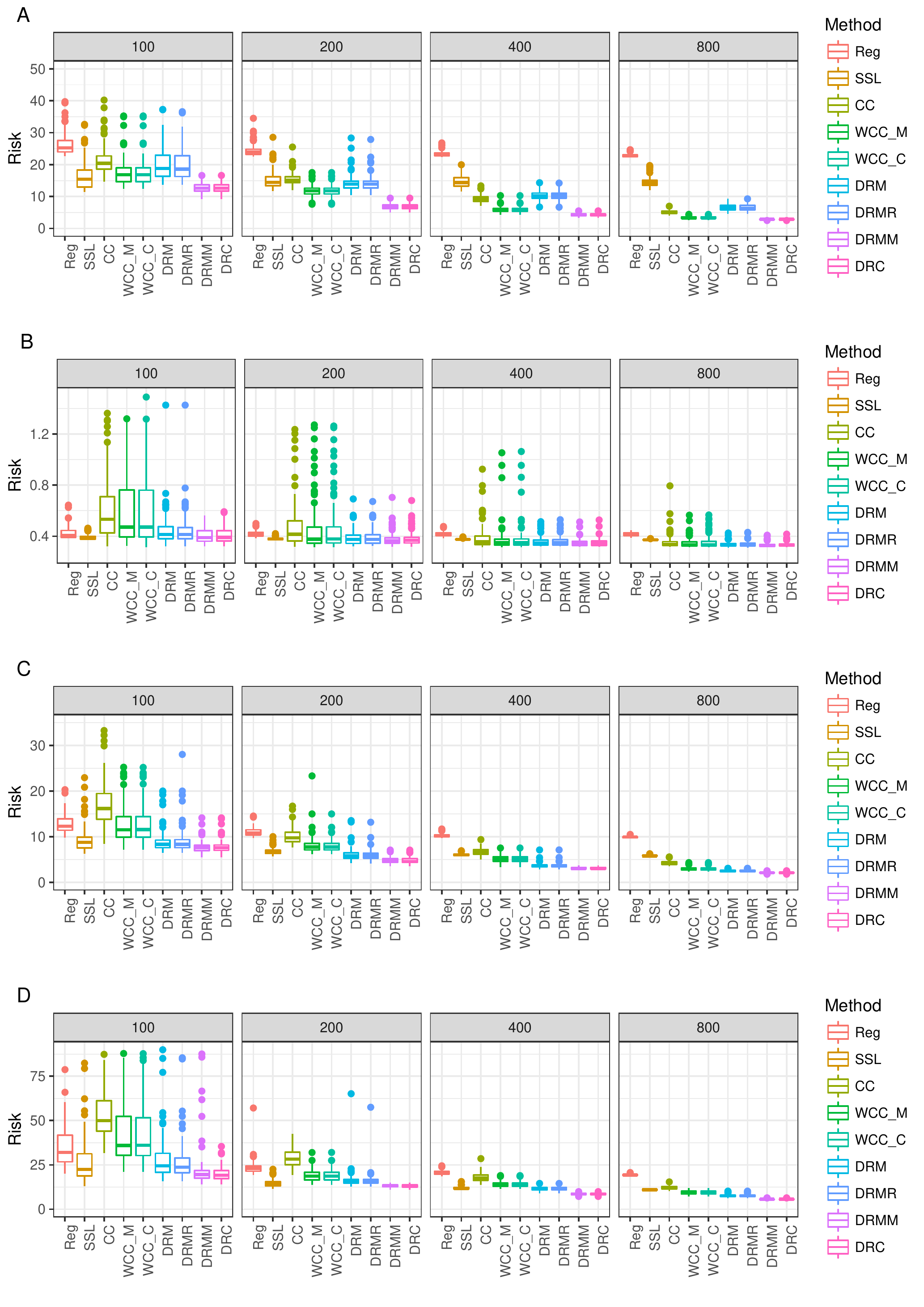}
\caption{\scriptsize{Boxplots of the four settings: rows A-D show the results
for Settings~1-4 respectively.  The four settings compare risks of
nine different methods.}\label{plot:4boxplot}}
\end{figure}

To summarize, the simulations show that the doubly-robust
kernel machine methods perform better, in general, than the
other existing methods. When the
regression model for doubly-robust kernel machines is correctly specified, the doubly-robust
kernel machine methods are recommended. Additionally, if the missing mechanism
is correctly estimated, the doubly-robust estimator is the best choice.
When little information about the regression model is given, the weighted-complete-case kernel machine
is another good choice, especially for large sample size datasets.

\section{Application of Los Angeles homeless population}\label{sec:ReaDat}

We applied the proposed kernel-machine methods to the Los Angeles homeless dataset.
The dataset is described in \cite{Kriegler:Berk:10} and  \cite{Azriel:Brown:16}.
The dataset contains information about 2054 census tracts in the Los Angeles county,
where the goal is to estimate the number of homeless in each tract.
Due to budget limitation, some tracts were not visited, and consequently,
the number of homeless in these tracts is missing. The missing mechanism
depends on the Service Provision Area (SPA) to which the tract belongs.
We use this dataset to compare the performance of the methods mentioned above.

Following \cite{Azriel:Brown:16}, we first delete all tracts with zero median household income and the highly-populated tracts leaving 1797 tracts in the dataset.
We also used the same covariate sets as in \cite{Azriel:Brown:16}.
To evaluate the performance of different methods, we randomly choose
1597 tracts to train the algorithms and then use the 200 tracts to test.
The risk is calculated by the mean square error (MSE) and the weighted mean square error,
where the weights are the inverse probability of the tract to be visited.
We repeated  this process 100 times.

Since the data are skewed, we first took the log transformation
of the observed response and the covariates ``Industrial", ``PctVacant",
``Commercial", and ``MedianHouseIncome". No transformation was done for the covariates  ``Residential" and ``PctMinority". We normalized the data.
Boxplots of the data after transformation and normalization  are shown in Figure~\ref{plot:LA_Var_boxplot}.
\begin{figure}[!t]
\centering
\includegraphics[width=0.5\textwidth]{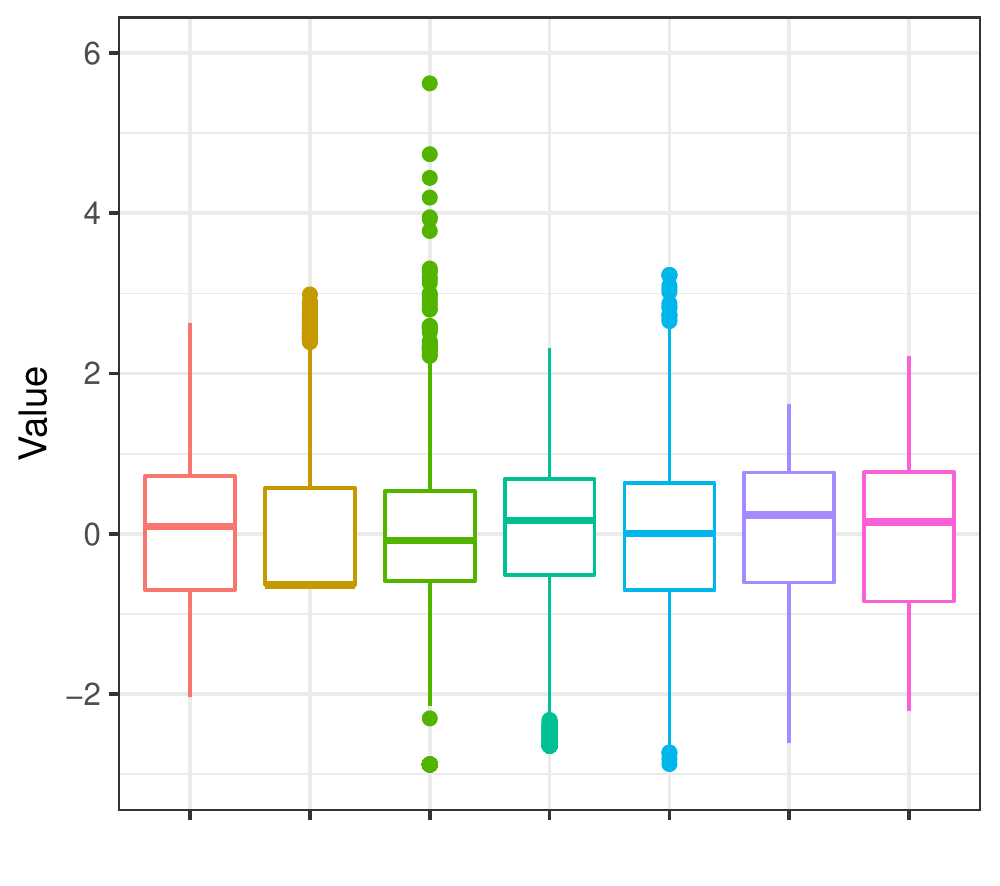}
\caption{\scriptsize{From left to right, boxplots of the observed response, and the covariates Industrial, PctVacant, Commercial,and Median House Income, after log transformation and normalization. The last two boxplots are of the covariates Residential and PctMinority after normalization}\label{plot:LA_Var_boxplot}}
\end{figure}

We considered the previous \textbf{Reg}, \textbf{SSL}, \textbf{CC}, \textbf{WCC}
and \textbf{DR} methods. For the kernel machine methods we used the RBF kernel.
Particularly, we used the semi-supervised linear regression
method to estimate the responses in the doubly-robust kernel machine method.
Table~\ref{tab:LA_mean_std} provides the numerical results of the five different methods.
Overall the methods perform similarly while
the weighted-complete-case kernel machines perform best for both
the MSE and the weighed MSE with the lowest mean, median and standard
deviation of the two kinds of risk. In this example, the doubly-robust
kernel machine does not perform well and this could be related to the performance
of the semi-supervised linear regression
method used in the augmentation term.
\begin{table}[H]
\centering
\tiny
\caption{\scriptsize{Descriptive statistics: median, mean, and standard deviation}\label{tab:LA_mean_std}}
\begin{tabular}{lccccccc}
\toprule
        &              &Method& Reg    & SSL      & CC      & WCC & DR\\
MSE     &              &     &         &          &         &      &        \\
\midrule
mean    & weighted     &     & 10103   & 10379    & 9971    & 9661 & 10158   \\
        & not weighted &     & 1103    & 1127     & 1086    & 1054 & 1092    \\
        &              &     &         &          &         &      &         \\
median  & weighted     &     & 8668    & 8478     & 9069    & 8470 & 8944    \\
        & not weighted &     & 889     & 880      & 942     & 894  & 947     \\
        &              &     &         &          &         &      &         \\
std     & weighted     &     & 7371    & 7671     & 7343    & 7232 & 7475    \\
        & not weighted &     & 852     & 865      & 866     & 849  & 877     \\
\bottomrule
\end{tabular}
\end{table}

\section{Conclusion and discussion}\label{sec:conDis}

We proposed two kernel-machine methods for handling the missing-response problem.
Specifically, we proposed  an inverse-probability complete-case estimator which
can be applied to any convex loss function. We also proposed a quadratic loss based doubly-robust estimator. The empirical risk of these new data-dependent loss functions were shown to be consistent for any function
$h\in\mathcal{H} $ under mild conditions. We presented oracle inequalities and consistency results for both types of kernel machines. We also presented a simulation study and applied these new methods to the Los Angeles homeless dataset.

Several open questions remain and many possible generalizations still exist, especially for
the doubly-robust estimator. We would like to extend the quadratic-loss based doubly-robust
estimator to include other convex loss functions. Additionally, we would like to develop
a new data-dependent loss function for handling missing covariates and guarantee
the doubly-robust property at the same time. This work is under progress were we use imputation methods
to define the augmentation term of a doubly-robust estimator.

\begin{supplement}

	\sname{Supplement Material}\label{suppB}
	\stitle{Code}
    \slink[url]{https://goo.gl/yQUBtA}
	\sdescription{Matlab code for the doubly-robust kernel machine estimator and
weighted-complete-case kernel machine estimator.}
	
\end{supplement}
\appendix

\section{Calculation in Subsection~3.4}
\subsection{Weighted-complete-case kernel machines}
Let
 \begin{align*}
   &g(\valpha)\\
   &=x\sum_{i=1}^{n}\sum_{j=1}^{n}\frac{M_i\left(Y_i-\sum_{j=1}^{n}\alpha_jk(X_i,X_j)\right)^{2}}
   {\widehat{\pi}(X_i)}
   +\lambda\sum_{i=1}^{n}\sum_{j=1}^{n}\alpha_i\alpha_jk(X_i,X_j)\\
   &=(AY-AK\valpha)^{\intercal}(A{\bf Y}-AK\valpha)+\lambda\valpha^{\intercal}K\alpha\\
   &=Y^{\intercal}W{\bf Y}-2\valpha^{\intercal}
   K^{\intercal}W{\bf Y}+\valpha^{\intercal}K^{\intercal}WK\valpha+\lambda\valpha^{\intercal}K\valpha,
 \end{align*}
 where $\valpha$, $A$ and $W$ are defined as in Section~$\ref{sec:km}$.

Taking the derivative and equating to zero,
we have $\widehat{\valpha}=(\lambda I+WK)^{-1}W{\bf Y}$.

\subsection{Doubly-robust kernel machines}
For regression,  we have
\begin{align*}
&\widehat{H}(X_i,f(X_i))\\
&=\left(\mu\left(X_i,\widehat{\beta}\right)-\sum_{j=1}^{n}\alpha_jk(X_i,X_j)\right)^{2}+
\frac{1}{\#\{M=1\}}\sum_{j=1}^{n}M_j\left(Y_{j}-\mu\left(X_j,\widehat{\beta}\right)\right)^{2}.
\end{align*}
Let
\begin{align*}
g(\valpha)
&=\sum_{i=1}^{n}\left[\frac{M_i\left(Y_i-\sum_{j=1}^{n}\alpha_jk(X_i,X_j)\right)^{2}}
{\widehat{\pi}(X_i)}
-\frac{M_i-\widehat{\pi}(X_i)}{\widehat{\pi}(X_i)}\widehat{H}(X_i,f(X_i))\right]\\
&\quad+\lambda\sum_{i=1}^{n}\sum_{j=1}^{n}\alpha_i\alpha_jk(X_i,X_j).
\end{align*}
Note that $\frac{1}{\#\{M=1\}}\sum_{j=1}^{n}M_j\left(Y_{j}-\mu\left(X_j,\widehat{\beta}\right)\right)^{2}$
does not depend on $\valpha$. Consequently,
$g(\valpha)$ has the same minimizer of
\begin{align*}
\widetilde{g}(\valpha)
&=\sum_{i=1}^{n}\left[\frac{M_i\left(Y_i-\sum_{j=1}^{n}\alpha_jk(X_i,X_j)\right)^{2}}{\widehat{\pi}(X_i)}
-\frac{M_i-\widehat{\pi}(X_i)}{\widehat{\pi}(X_i)}\widetilde{H}(X_i,f(X_i))\right]\\
&\quad+\lambda\sum_{i=1}^{n}\sum_{j=1}^{n}\alpha_i\alpha_jk(X_i,X_j)\\
&={\bf Y}^{\intercal}W_1{\bf Y}-2\valpha^{\intercal}K^{\intercal}W_1{\bf Y}
+\valpha^{\intercal}K^{\intercal}W_1K\valpha
  +\vmu\left(X,\widehat{\beta}\right)^{\intercal}W_2\vmu\left(X,\widehat{\beta}\right)\\
&\quad-2\valpha^{\intercal}K^{\intercal}W_2\vmu\left(X,\widehat{\beta}\right)
+\valpha^{\intercal}K^{\intercal}W_2K\valpha+\lambda\valpha^{\intercal}K^{\intercal}\valpha,\\
\end{align*}
where $\vmu\left(X,\widehat{\beta}\right)$, $W_1$ and $W_2$ are defined as in Subsection~\ref{sec:km},
\[\widetilde{H}(X_i,f(X_i))
=\left(\mu\left(X_i,\widehat{\beta}\right)-\sum_{j=1}^{n}\alpha_jk(X_i,X_j)\right)^{2}.\]

Taking the derivative and equating to zero,
\[\widehat{\valpha}=(K+\lambda I)^{-1}\left(W_1{\bf Y}+W_2\vmu\left(X,\widehat{\beta}\right)\right).\]

Next we will derive the kernel machines for the classification problem. Recall that in classification
\begin{eqnarray*}
\widehat{H}(X,f(X))&=&1+f(X)^{2}+2f(X)-4f(X)\text{logit}\left(X,\widehat{\beta}\right)\\
&=&\left[f(X)-\left\{2\text{logit}\left(X,\widehat{\beta}\right)-1\right\}\right]^{2}
+1-\left\{2\text{logit}\left(X,\widehat{\beta}\right)-1\right\}^{2}.
\end{eqnarray*}
Let
\[\widetilde{H}(X,f(X))=\left[f(X)-\left\{2\text{logit}\left(X,\widehat{\beta}\right)-1\right\}\right]^{2}.\]

In this situation, we need to minimize
\begin{align*}
 g(\valpha)
 &=\sum_{i=1}^{n}\left[\frac{M_i\left(Y_i-\sum_{j=1}^{n}\alpha_jk(X_i,X_j)\right)^{2}}
 {\widehat{\pi}(X_i)}
 -\frac{M_i-\widehat{\pi}(X_i)}{\widehat{\pi}(X_i)}\widetilde{H}(X_i,f(X_i))\right]\\
&\quad+\lambda\sum_{i=1}^{n}\sum_{j=1}^{n}\alpha_i\alpha_jk(X_i,X_j).
\end{align*}
Using the same technique as previously,
\[\widehat{\valpha}=(K+\lambda I)^{-1}\left\{W_1{\bf Y}
+W_2\vmu\left(X,\widehat{\beta}\right)\right\},\]
where $\mu\left(X,\widehat{\beta}\right)=2\text{logit}\left(X,\widehat{\beta}\right)-1$.

\section{Tables of simulations}

\begin{table}[H]
\tiny
\caption{\scriptsize{Descriptive statistics of the four settings:
The median, mean, and standard deviation.}\label{tab:4settings}}                                                                \begin{tabular}{c@{\extracolsep{\fill}}ccccccccccccc}
\toprule
Setting&   n       &         & Reg  & SSL  & CC   &WCC\_M&WCC\_C& DRM & DRMR  & DRMM & DRC \\
\midrule
     1 &          & median  & 25.32 & 15.41 & 20.29 & 16.84 & 16.83 & 18.94 & 18.58 & 12.66 & 12.66 \\
       &    100   & mean    & 26.53 & 16.45 & 21.53 & 17.64 & 17.64 & 20.17 & 19.94 & 12.75 & 12.74 \\
       &          & std     & 3.73 & 4.40 & 4.65 & 4.33 & 4.32 & 5.05 & 4.91 & 1.46 & 1.46 \\
       &         &         &       &      &      &      &      &      &      &      &   \\
       &         & median  & 23.94 & 14.45 & 15.13 & 11.81 & 11.80 & 13.91 & 13.87 & 6.70 & 6.69 \\
       &    200  & mean    & 24.51 & 15.26 & 15.42 & 11.72 & 11.71 & 14.34 & 14.24 & 6.82 & 6.81 \\
       &         & std     & 1.90 & 2.79 & 1.85 & 1.78 & 1.78 & 2.71 & 2.54 & 0.92 & 0.91 \\
       &         &         &  &      &      &      &      &      &      &      &   \\
       &         & median  &23.08 & 14.37 & 9.13 & 5.77 & 5.76 & 10.24 & 10.20 & 4.27 & 4.27 \\
       &    400  & mean    &23.33 & 14.76 & 9.25 & 5.92 & 5.91 & 10.31 & 10.26 & 4.31 & 4.31 \\
       &         & std     &0.86 & 1.97 & 1.24 & 0.94 & 0.93 & 1.37 & 1.35 & 0.47 & 0.47 \\
       &         &         &  &      &      &      &      &      &      &      &   \\
       &         & median  & 22.82 & 14.24 & 5.02 & 3.31 & 3.30 & 6.42 & 6.34 & 2.88 & 2.88 \\
       &     800 & mean    & 22.96 & 14.48 & 5.11 & 3.38 & 3.37 & 6.58 & 6.53 & 2.88 & 2.88 \\
       &         & std     & 0.56 & 1.48 & 0.58 & 0.34 & 0.34 & 1.05 & 1.04 & 0.15 & 0.15 \\
\midrule
     2 &         & median  &0.41 & 0.39 & 0.54 & 0.48 & 0.48 & 0.41 & 0.41 & 0.39 & 0.39 \\
       &     100 & mean    &0.43 & 0.39 & 0.66 & 0.62 & 0.61 & 0.45 & 0.45 & 0.42 & 0.42 \\
       &         & std     &0.05 & 0.02 & 0.35 & 0.32 & 0.31 & 0.13 & 0.13 & 0.14 & 0.14 \\
       &         &         &     &      &      &      &      &      &      &      &   \\
       &         & median  &0.41 & 0.38 & 0.42 & 0.38 & 0.38 & 0.37 & 0.38 & 0.36 & 0.37 \\
       &     200 & mean    &0.42 & 0.38 & 0.48 & 0.46 & 0.46 & 0.39 & 0.39 & 0.38 & 0.38 \\
       &         & std     &0.03 & 0.01 & 0.19 & 0.21 & 0.21 & 0.06 & 0.06 & 0.06 & 0.06 \\
       &         &         &     &      &      &      &      &      &      &      &   \\
       &         & median  &0.41 & 0.37 & 0.36 & 0.35 & 0.35 & 0.35 & 0.35 & 0.34 & 0.34 \\
       &     400 & mean    &0.42 & 0.38 & 0.39 & 0.38 & 0.38 & 0.36 & 0.36 & 0.35 & 0.35 \\
       &         & std     &0.02 & 0.00 & 0.10 & 0.11 & 0.12 & 0.04 & 0.04 & 0.03 & 0.03 \\
       &         &         &     &      &      &      &      &      &      &      &   \\
       &         & median  &0.41 & 0.37 & 0.34 & 0.33 & 0.33 & 0.33 & 0.33 & 0.33 & 0.33 \\
       &     800 & mean    &0.41 & 0.37 & 0.35 & 0.35 & 0.35 & 0.34 & 0.34 & 0.33 & 0.33 \\
       &         & std     &0.01 & 0.00 & 0.06 & 0.04 & 0.05 & 0.02 & 0.02 & 0.02 & 0.02 \\
\midrule
   3   &         & median  & 12.31 & 8.77 & 16.17 & 11.70 & 11.68 & 8.34 & 8.34 & 7.59 & 7.58 \\
       &     100 & mean    & 12.84 & 9.29 & 17.04 & 13.74 & 13.30 & 9.56 & 9.10 & 8.04 & 7.80 \\
       &         & std     &  2.04 & 2.90 & 4.94 & 7.20 & 5.75 & 5.92 & 3.00 & 3.51 & 1.41 \\
       &         &         &     &      &      &      &      &      &      &      &   \\
       &         & median  & 10.82 & 6.59 & 9.76  & 7.72 & 7.72 & 5.78 & 5.77 & 4.58 & 4.58 \\
       &     200 & mean    & 10.98 & 6.78 & 10.09 & 8.16 & 8.04 & 5.98 & 5.91 & 4.83 & 4.82 \\
       &         & std     & 0.87  & 0.75 & 1.68  & 2.01 & 1.37 & 1.41 & 1.18 & 0.71 & 0.70 \\
       &         &         &  &      &      &      &      &      &      &      &   \\
       &         & median  &10.21  & 6.03 & 6.62 & 5.10 & 5.10 & 3.60 & 3.60 & 3.03 & 3.03 \\
       &     400 & mean    & 10.28 & 6.07 & 6.64 & 5.12 & 5.12 & 3.64 & 3.64 & 3.09 & 3.09 \\
       &         & std     & 0.38  & 0.26 & 0.82 & 0.82 & 0.82 & 0.52 & 0.52 & 0.27 & 0.27 \\
       &         &        &  &      &      &      &      &      &      &      &   \\
       &         & median &9.93 & 5.78 & 4.21 & 2.95 & 2.95 & 2.46 & 2.47 & 2.09 & 2.09 \\
       &      800& mean   &9.95 & 5.83 & 4.22 & 2.95 & 2.95 & 2.49 & 2.48 & 2.11 & 2.11 \\
       &         & std    &0.15 & 0.14 & 0.42 & 0.35 & 0.35 & 0.19 & 0.19 & 0.13 & 0.13 \\
\midrule
   4   &          & median  & 32.26 & 22.60 & 50.50 & 39.03 & 38.36 & 24.78 & 23.88 & 19.77 & 19.36 \\
       &    100   & mean    & 37.98 & 34.20 & 53.92 & 49.04 & 46.95 & 32.02 & 29.90 & 24.24 & 22.52 \\
       &          & std     & 22.87 & 68.18 & 14.35 & 27.48 & 24.46 & 20.92 & 21.11 & 17.38 & 15.09 \\
       &          &         &  &      &      &      &      &      &      &      &   \\
       &          & median  &23.04 & 14.10 & 28.23 & 18.68 & 18.68 & 15.52 & 15.46 & 13.07 & 13.08 \\
       &     200  & mean    &23.65 & 14.58 & 28.73 & 19.24 & 19.24 & 16.37 & 16.14 & 13.15 & 13.15 \\
       &          & std     &4.15 & 2.13 & 4.74 & 3.56 & 3.57 & 5.23 & 4.42 & 0.91 & 0.92 \\
       &          &         &  &      &      &      &      &      &      &      &   \\
       &          & median  &20.32 & 11.72 & 17.53 & 13.65 & 13.65 & 11.66 & 11.66 & 8.55 & 8.56 \\
       &      400 & mean    & 20.73 & 11.94 & 17.86 & 13.98 & 13.98 & 11.63 & 11.63 & 8.58 & 8.58 \\
       &          & std     & 1.28 & 0.86 & 2.47 & 1.56 & 1.56 & 1.14 & 1.14 & 0.48 & 0.48 \\
       &          &        &  &      &      &      &      &      &      &      &   \\
       &          & median  &19.28 & 10.95 & 12.22 & 9.16 & 9.16 & 7.52 & 7.51 & 5.68 & 5.69 \\
       &     800  & mean    &19.33 & 11.00 & 12.23 & 9.35 & 9.35 & 7.58 & 7.58 & 5.68 & 5.68 \\
       &          & std     &0.37 & 0.27 & 0.97 & 1.07 & 1.07 & 0.66 & 0.65 & 0.26 & 0.26 \\
\bottomrule
\end{tabular}
\end{table}

\section{Proofs of theoretical results}
\subsection{Proof of Lemma $\ref{lemma3}$}
\begin{proof}
Let $L(y,t)=(y-t)^{2}$. We now prove that
$L_{\widehat{W},\widehat{H}}\left(\widehat{\pi},\widehat{H},M,X,Y,f(X)\right)$
is convex.  The same argument can be used for $L_{W,H}$.

Recall that
\[\widehat{H}(X,t)=\int_{y\in\mathcal{Y}} L(y,t)dF_{Y|X}\left(y\mid X,\widehat{\beta}\right).\]
We first show that for every convex loss $L$, $\widehat{H}(X,t)$ is convex.
For any $\alpha\in(0,1)$, by the convexity of $L(y,t)$ with respect to $t$,
\begin{align*}
&\widehat{H}\left(X,\alpha t+(1-\alpha)t'\right) \\
& =\int_{y\in\mathcal{Y}} L\{y,\alpha t+(1-\alpha)t'\}dF_{Y|X}\left(y\mid X,\widehat{\beta}\right)\\
 &\leq \int_{y\in\mathcal{Y}}\left\{\alpha L(y,t)+(1-\alpha)L(y,t')\right\}dF_{Y|X}\left(y\mid X,\widehat{\beta}\right)\\
 &=\alpha\int_{y\in\mathcal{Y}}L(y,t)dF_{Y|X}\left(y\mid X,\widehat{\beta}\right)
 +(1-\alpha)\int_{y\in\mathcal{Y}}L(y,t')dF_{Y|X}\left(y\mid X,\widehat{\beta}\right)\\
 &=\alpha\widehat{H}(X,t)+(1-\alpha)\widehat{H}(X,t'),
\end{align*}
which indicates that $\widehat{H}(X,t)$ is a convex function with respect to $t$.

Therefore, when $M=0$, $L_{\widehat{W},\widehat{H}}\left(\widehat{\pi},\widehat{H},0,X,Y,t\right)
=\widehat{H}(X,t)$ which is a convex function for any loss $L$.
When $M=1$ and $L$ is the quadratic loss,

\begin{align*}
\widehat{H}(X,t) &=\int_{y\in\mathcal{Y}}\left(y-t\right)^{2}dF_{Y|X}\left(y\mid X,\widehat{\beta}\right)
=t^{2}-2tU\left(X,\widehat{\beta}\right)+V\left(X,\widehat{\beta}\right),
\end{align*}
where $U\left(X,\widehat{\beta}\right)=\int_{y\in\mathcal{Y}}ydF_{Y|X}\left(y\mid X,\widehat{\beta}\right)$ and
$V\left(X,\widehat{\beta}\right)=\int_{y\in\mathcal{Y}}y^{2}dF_{Y|X}\left(y\mid X,\widehat{\beta}\right)$.
Note that $U$ and $V$ are not functions of $t$. Hence, for $M=1$,

\[L_{\widehat{W},\widehat{H}}\left(\widehat{\pi},\widehat{H},M,X,Y,t\right)
=t^{2}-2t\left\{\frac{Y}{\widehat{\pi}(X)}-\frac{1-\widehat{\pi}(X)}{\widehat{\pi}(X)}U\left(X,\widehat{\beta}\right)\right\}
+\frac{Y^{2}-\left\{1-\widehat{\pi}(X)\right\}V\left(X,\widehat{\beta}\right)}{\widehat{\pi}(X)}.\]
Since the second derivative with respect to $t$ is positive,
$L_{\widehat{W},\widehat{H}}\left(\widehat{\pi},\widehat{H},M,X,Y,t\right)$ is convex with respect to $t$.
\end{proof}

\subsection{Proof of Lemma $\ref{lemma2}$}
\begin{proof}

Case 1:
The missing mechanism is correctly specified, that is
$\widehat{\pi}(X)\stackrel{\text{P}}\longrightarrow\pi^{\ast}(X)=\pi(X)$,
but $\widehat\beta\stackrel{\text{P}}\longrightarrow \beta^{\ast}$,
where $\beta^{\ast}$ does not necessarily equal $\beta_0$.
\begin{align*}
  &L_{\widehat{W},\widehat{H}}\left(\widehat{\pi},\widehat{H},M,X,Y,f(X)\right)\\
  &= \frac{ML(Y,f(X))}{\pi(X)}-\frac{M-\pi(X)}{\pi(X)}H(X,\beta^{\ast},f(X))+\o_{p}(1) \\
  &= L(Y,f(X))+\frac{M-\pi(X)}{\pi(X)}\left\{L(Y,f(X))-H(X,\beta^{\ast},f(X))\right\}+\o_{p}(1).
\end{align*}
By the Law of Large Number (LLN), we have
\[R_{L_{\widehat{W},\widehat{H}},D}(f)\stackrel{\text{P}}\longrightarrow R_{L,\prob}+\E\left[\frac{M-\pi(X)}{\pi(X)}\left\{L(Y,f(X))-H(X,\beta^{\ast},f(X))\right\}\right].\]
Note that
\begin{align*}
   & \E\left[\frac{M-\pi(X)}{\pi(X)}\left\{L(Y,f(X))-H(X,\beta^{\ast},f(X))\right\}\right] \\
   &=\E\left(\E\left[\frac{M-\pi(X)}{\pi(X)}\left\{L(Y,f(X))-H(X,\beta^{\ast},f(X))
   \right\}\Big|X,Y\right]\right)\\
   & =\E\left(\left\{L(Y,f(X))-H(X,\beta^{\ast},f(X))\right\}
   \E\left[\frac{M-\pi(X)}{\pi(X)}\Big| X,Y\right]\right)\\
   & =\E\left(\left\{L(Y,f(X))-H(X,\beta^{\ast},f(X))\right\}
   \E\left[\frac{M-\pi(X)}{\pi(X)}\Big| X\right]\right)
   =0.
\end{align*}
The third equality holds because $M$ and $Y$ are independent given $X$. As
a conclusion, we have,
$ R_{L_{\widehat{W},\widehat{H}},D}(f)\allowbreak\stackrel{\text{P}}\longrightarrow R_{L,\prob}(f)$.

Case 2: When $\widehat\beta\stackrel{\text{P}}\longrightarrow \beta^{\ast}=\beta_0$,
but $\widehat{\pi}(X)\stackrel{\text{P}}\longrightarrow\pi^{\ast}(X)$ which
is not necessarily $\pi(X)$,
\[L_{\widehat{W},\widehat{H}}\left(\widehat{\pi},\widehat{H},M,X,Y,f(X)\right)
 =H(X,\beta_0,f(X))+\frac{M\left\{L(Y,f(X))
 -H(X,\beta_0,f(X))\right\}}{\pi^{\ast}(X)}+\o_{p}(1).\]
Then
\[R_{L_{\widehat{W},\widehat{H}},D}(f)\stackrel{\text{P}}\longrightarrow \E\{H(X,\beta_0,f(X))\}+\E\left[\frac{M\left\{L(Y,f(X))
 -H(X,\beta_0,f(X))\right\}}{\pi^{\ast}(X)}\right].\]
The second expression can be shown equal to 0. Indeed,
\begin{align*}
 &\E\left[\frac{M\left\{L(Y,f(X))-H(X,\beta_0,f(X))\right\}}{\pi^{\ast}(X)}\right] \\
 &=\E\left(\E\left[\frac{M\left\{L(Y,f(X))-H(X,\beta_0,f(X))\right\}}
 {\pi^{\ast}(X)}\bigg|M,X\right]\right)\\
 &=\E\left(\frac{M}{\pi^{\ast}(X)}
 \E\left[\left\{L(Y,f(X))-H(X,\beta_0,f(X))\right\}\mid X\right]\right)\\
 &=\E\left[\frac{M}{\pi^{\ast}(X)}\E\left\{L(Y,f(X))\mid X\right\}-H\left(X,\beta_0,f(X)\right)\right]=0,
\end{align*}
where the last equation holds since by $\eqref{eq:CondiExpct}$, $H\left(X,\beta_0,f(X)\right)$ is defined as
$\E\left\{L(Y,f(X))\mid X\right\}$.
Note that $H(X,\beta_0,f(X))=R_{L,\prob}(f)$ and the result follows.
\end{proof}

\subsection{Oracle Inequality for Weighted-Complete-Case Kernel Machines}
\begin{theorem}\label{ineq:OIWCC}
Let Assumptions~\ref{MAR} and \ref{assp:bond_PHat} hold.
Then, for fixed $\lambda > 0$, $n \geq 1$, $\varepsilon > 0$, and $\eta > 0$,
with probability not less than $1-e^{-\eta}$,
\begin{align*}
&\lambda\|f_{D,\lambda}\|^{2}_{\mathcal{H}}+R_{L,\prob}(f_{D,\lambda})-\inf_{f\in \mathcal{H}}R_{L,\prob}(f) \\
 &<A_2(\lambda)+d_{2n}(\lambda)\varepsilon+d_{3n}(\lambda)
 \left[\sqrt{\frac{2\eta+2\log\left\{2\mathcal{N}\left(B_{\mathcal{H}},\|\cdot\|_{\infty},d_{1n}(\lambda)\varepsilon\right)\right\}}{n}}+
 \frac{Err_{1,n}}{c_{n,L}}\right],
\end{align*}
\[\text{where}~d_{1n}=(c_{n,L}\lambda)^{\frac{1}{2}},\quad d_{2n}=\frac{2C_{L}\left((c_{n,L}\lambda)^{-\frac{1}{2}}\right)}{c},\quad
d_{3n}=\frac{C_{L}\left((c_{n,L}\lambda)^{-\frac{1}{2}}\right)(c_{n,L}\lambda)^{-\frac{1}{2}}+1}{c}.\]
\end{theorem}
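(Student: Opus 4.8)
The argument follows the standard template for oracle inequalities of regularized empirical risk minimizers \citep[Chapters~6--7]{Steinwart:Christmann:08}, modified in two places by the missing responses: the weights $1/\widehat{\pi}$ must be exchanged for $1/\pi$ at a cost controlled by $Err_{1,n}$, and the identity $\E\{ML(Y,f(X))/\pi(X)\}=R_{L,\prob}(f)$ of~\eqref{eq:IPW_EQ} replaces the usual unbiasedness of the empirical risk. Throughout write $f_{D,\lambda}$ for the weighted-complete-case machine~\eqref{eq:cons_decfun}, $a_{n}\equiv(c_{n,\mathrm{L}}\lambda)^{-1/2}$ (so that $d_{1n}=1/a_{n}$), and $\mathcal{N}\equiv\mathcal{N}(B_{\mathcal{H}},\|\cdot\|_{\infty},d_{1n}\varepsilon)$.

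First I would record two a priori bounds. Inserting $f\equiv 0$ into~\eqref{eq:cons_decfun} and using $L(y,0)\le 1$, $\widehat{\pi}\ge c_{n,\mathrm{L}}$ and $n^{-1}\sum_{i}M_{i}\le 1$ gives $\lambda\|f_{D,\lambda}\|_{\mathcal{H}}^{2}\le R_{L_{\widehat{W}},D}(0)\le 1/c_{n,\mathrm{L}}$, hence $\|f_{D,\lambda}\|_{\mathcal{H}}\le a_{n}$; the same step for the population objective gives $\|f_{\prob,\lambda}\|_{\mathcal{H}}\le\lambda^{-1/2}\le a_{n}$. Since $\|k\|_{\infty}\le 1$, both functions lie in $a_{n}B_{\mathcal{H}}$ and take values in $[-a_{n},a_{n}]$, where the local Lipschitz bound with constant $C_{L}(a_{n})$ and the envelope $|L(y,t)|\le 1+C_{L}(a_{n})a_{n}$ are available. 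Next I would use the minimizing property $\lambda\|f_{D,\lambda}\|_{\mathcal{H}}^{2}+R_{L_{\widehat{W}},D}(f_{D,\lambda})\le\lambda\|f_{\prob,\lambda}\|_{\mathcal{H}}^{2}+R_{L_{\widehat{W}},D}(f_{\prob,\lambda})$; adding and subtracting $R_{L,\prob}$ at both functions reduces the left-hand side of the theorem to
\[A_{2}(\lambda)+\bigl[R_{L,\prob}(f_{D,\lambda})-R_{L_{\widehat{W}},D}(f_{D,\lambda})\bigr]+\bigl[R_{L_{\widehat{W}},D}(f_{\prob,\lambda})-R_{L,\prob}(f_{\prob,\lambda})\bigr].\]
In each of the last two brackets I would insert $R_{L_{W},D}$, the weighted empirical risk with the true $\pi$: the deterministic estimate $|1/\widehat{\pi}-1/\pi|\le Err_{1,n}/(2c\,c_{n,\mathrm{L}})$ from Assumptions~\ref{MAR}--\ref{assp:bond_PHat}, combined with the envelope, bounds $|R_{L_{\widehat{W}},D}(f)-R_{L_{W},D}(f)|\le(1+C_{L}(a_{n})a_{n})Err_{1,n}/(2c\,c_{n,\mathrm{L}})$, and summing over the two functions produces the $d_{3n}Err_{1,n}/c_{n,\mathrm{L}}$ term of the claim (the factor $2$ turning $1/(2c)$ into $1/c$).

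It then remains to bound the residual terms $R_{L,\prob}(f_{D,\lambda})-R_{L_{W},D}(f_{D,\lambda})$ and $R_{L_{W},D}(f_{\prob,\lambda})-R_{L,\prob}(f_{\prob,\lambda})$, each of which, since $f_{D,\lambda},f_{\prob,\lambda}\in a_{n}B_{\mathcal{H}}$, is at most $\sup_{f\in a_{n}B_{\mathcal{H}}}|R_{L_{W},D}(f)-R_{L,\prob}(f)|$ in absolute value; by~\eqref{eq:IPW_EQ} this is a genuine empirical process over $a_{n}B_{\mathcal{H}}$, because $R_{L,\prob}(f)=\E\{ML(Y,f(X))/\pi(X)\}$. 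I would control this supremum by a one-step covering argument: fix an $\varepsilon$-net of $a_{n}B_{\mathcal{H}}$ in $\|\cdot\|_{\infty}$, whose cardinality equals $\mathcal{N}$ because $a_{n}d_{1n}=1$; for $f$ with nearest net point $f_{j}$ the local Lipschitz property gives $|R_{L_{W},D}(f)-R_{L_{W},D}(f_{j})|\le C_{L}(a_{n})\varepsilon/(2c)$ and $|R_{L,\prob}(f)-R_{L,\prob}(f_{j})|\le C_{L}(a_{n})\varepsilon$; and for a fixed net point $ML(Y,f_{j}(X))/\pi(X)$ is bounded in $[0,(1+C_{L}(a_{n})a_{n})/(2c)]$, so Hoeffding's inequality and a union bound over the two-sided net give, with probability at least $1-e^{-\eta}$,
\[\max_{j}\bigl|R_{L_{W},D}(f_{j})-R_{L,\prob}(f_{j})\bigr|\le\frac{1+C_{L}(a_{n})a_{n}}{2c}\sqrt{\frac{2\eta+2\log\{2\mathcal{N}\}}{n}}.\]
Collecting the three contributions, doubling for the two functions (which turns $(1+C_{L}(a_{n})a_{n})/(2c)$ into $d_{3n}$), and using $c<1/2$ to absorb the Lipschitz terms into $d_{2n}=2C_{L}(a_{n})/c$, produces exactly $d_{2n}\varepsilon+d_{3n}\sqrt{(2\eta+2\log\{2\mathcal{N}\})/n}$; adding $A_{2}(\lambda)$ and the propensity-swap term yields the stated inequality.

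The step I expect to be the main obstacle is this uniform deviation over $a_{n}B_{\mathcal{H}}$. Since $\lambda$ may be small, the radius $a_{n}$, the Lipschitz constant $C_{L}(a_{n})$, the envelope $1+C_{L}(a_{n})a_{n}$ and the covering radius $d_{1n}\varepsilon$ all depend on $n$ through $c_{n,\mathrm{L}}$ and $\lambda$, and the bookkeeping must keep $f_{D,\lambda}$, $f_{\prob,\lambda}$ and every net point inside $[-a_{n},a_{n}]$ so that the only locally valid Lipschitz bound and envelope can legitimately be invoked at each occurrence; once the envelope and the covering number have been identified correctly, the concentration itself is routine.
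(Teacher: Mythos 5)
Your proposal is correct and follows essentially the same route as the paper's own proof: the same use of the minimizing property to reduce to $A_2(\lambda)$ plus four deviation terms, the same split into two propensity-swap terms (bounded deterministically via $|1/\widehat{\pi}-1/\pi|\le Err_{1,n}/(2c\,c_{n,\mathrm{L}})$ and the envelope, yielding $d_{3n}Err_{1,n}/c_{n,\mathrm{L}}$) and two empirical-process terms (controlled by a one-step $\varepsilon$-net of $(c_{n,\mathrm{L}}\lambda)^{-1/2}B_{\mathcal{H}}$, the local Lipschitz bound, and Hoeffding's inequality with a union bound over the net), with the same constants $d_{1n},d_{2n},d_{3n}$ emerging from the same bookkeeping. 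The only cosmetic differences are that you make the containment $\|f_{\prob,\lambda}\|_{\mathcal{H}}\le\lambda^{-1/2}\le(c_{n,\mathrm{L}}\lambda)^{-1/2}$ explicit and bound the population-risk Lipschitz increment via the unweighted representation rather than the weighted one.
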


\begin{proof}
By the definition of $f_{D,\lambda}^{W}$,
\begin{equation}\label{ineq:mini_fw}
  \lambda\left\|f_{D,\lambda}^{W}\right\|^{2}_{\mathcal{H}}+R_{L_{\widehat{W}},D}\left(f_{D,\lambda}^{W}\right)\leq \lambda\left\|f_{\prob,\lambda}\right\|^{2}_{\mathcal{H}}+R_{L_{\widehat{W}},D}\left(f_{\prob,\lambda}\right).
\end{equation}
Recall that
\[A_2(\lambda)=\lambda\|f_{\prob,\lambda}\|^{2}_{\mathcal{H}}+R_{L,\prob}(f_{\prob,\lambda})-
\inf_{f\in\mathcal{H}}R_{L,\prob}(f).\]
Let
\[A^{W}(\lambda)=\lambda\left\|f_{D,\lambda}^{W}\right\|^{2}_{\mathcal{H}}+R_{L,\prob}\left(f_{D,\lambda}^{W}\right)
   -\inf_{f\in \mathcal{H}}R_{L,\prob}(f).\]
Hence,
\begin{align*}
   &A^{W}(\lambda)-A_2(\lambda) \\
   &=\lambda\left\|f_{D,\lambda}^{W}\right\|^{2}_{\mathcal{H}}+R_{L,\prob}\left(f_{D,\lambda}^{W}\right)-
   \lambda\|f_{\prob,\lambda}\|^{2}_{\mathcal{H}}-
   R_{L,\prob}(f_{\prob,\lambda})+R_{L_{\widehat{W}},D}\left(f_{D,\lambda}^{W}\right)-
   R_{L_{\widehat{W}},D}\left(f_{D,\lambda}^{W}\right)\\
   &\leq \lambda\|f_{\prob,\lambda}\|^{2}_{\mathcal{H}}+R_{L_{\widehat{W}},D}(f_{\prob,\lambda})
   +R_{L,\prob}\left(f_{D,\lambda}^{W}\right)
   -R_{L,\prob}(f_{\prob,\lambda})-R_{L_{\widehat{W}},D}\left(f_{D,\lambda}^{W}\right)
   -\lambda\|f_{\prob,\lambda}\|^{2}_{\mathcal{H}}\\
   &=R_{L_{\widehat{W}},D}(f_{\prob,\lambda})-R_{L,\prob}(f_{\prob,\lambda})
   +R_{L,\prob}\left(f_{D,\lambda}^{W}\right)-R_{L_{\widehat{W}},D}\left(f_{D,\lambda}^{W}\right)
   \equiv B^{W}(\lambda),
\end{align*}
where the inequality follows from \eqref{ineq:mini_fw}.

Note that by \eqref{eq:IPW_EQ},
\[R_{L,\prob}(f)=\E\left\{L\left(Y,f(X)\right)\right\}=
\E\left[\E\left\{\frac{ML\left(Y,f(X)\right)}{\pi(X)}\bigg|X,Y\right\}\right]
=R_{L_{W},\prob}(f),\]
where the second equality holds by conditional expectation and the third equality holds for the MAR missing mechanism. Hence,

\[B^{W}(\lambda)=R_{L_{\widehat{W}},D}(f_{\prob,\lambda})-R_{L_{W},\prob}(f_{\prob,\lambda})+
R_{L_{W},\prob}\left(f_{D,\lambda}^{W}\right)-R_{L_{\widehat{W}},D}\left(f_{D,\lambda}^{W}\right).\]

Therefore,
\begin{align*}
 B^{W}(\lambda)&=R_{L_{\widehat{W}},D}(f_{\prob,\lambda})-R_{L_{W},D}(f_{\prob,\lambda})
  +R_{L_{W},D}(f_{\prob,\lambda})-R_{L_{W},\prob}(f_{\prob,\lambda})\\
  &\quad+R_{L_{W},\prob}\left(f_{D,\lambda}^{W}\right)-R_{L_{W},D}\left(f_{D,\lambda}^{W}\right)
  +R_{L_{W},D}\left(f_{D,\lambda}^{W}\right)-R_{L_{\widehat{W}},D}\left(f_{D,\lambda}^{W}\right)\\
  &\leq \left|R_{L_{W},\prob}\left(f_{D,\lambda}^{W}\right)
  -R_{L_{W},D}\left(f_{D,\lambda}^{W}\right)\right|
  +\left|R_{L_{W},D}(f_{\prob,\lambda})-R_{L_{W},\prob}(f_{\prob,\lambda})\right|\\
  &\quad+\left|R_{L_{\widehat{W}},D}(f_{\prob,\lambda})-R_{L_{W},D}\left(f_{\prob,\lambda}\right)\right|
  +\left|R_{L_{W},D}\left(f_{D,\lambda}^{W}\right)-R_{L_{\widehat{W}},D}\left(f_{D,\lambda}^{W}\right)\right|\\
 &\equiv A_{n}+B_{n}+C_{n}+D_{n}.
\end{align*}

We first bound expressions $A_{n}$ and $B_{n}$.
Note that $L(y,0)\leq1$ for all $y\in\mathcal{Y}$.
By Assumption $\ref{assp:bond_PHat}$, $L_{\widehat{W}}\left(\widehat{\pi},M,X,Y,0\right)=
\frac{ML(Y,0)}{\widehat{\pi}(X)}\leq\frac{1}{c_{n,L}}$. Thus,
\[\lambda\left\|f_{D,\lambda}^{W}\right\|_{\mathcal{H}}^{2}\leq R_{L_{\widehat{W}},D}(f_{0})\leq\frac{1}{c_{n,L}},\]
for $f_{0}(X)\equiv0$ for all $X$.

By Assumption $\ref{MAR}$, for every $f\in (c_{n,L}\lambda)^{-\frac{1}{2}}B_{\mathcal{H}}$,
where $B_{\mathcal{H}}$ is the unit ball of $\mathcal{H}$.
\begin{align}\notag
 L_{W}\left(\pi,M,X,Y,f(X)\right)
   &\leq \left|L_{W}\left(\pi,M,X,Y,f(X)\right)-
   L_{W}\left(\pi,M,X,Y,0\right)\right|+\left|L_{W}\left(\pi,M,X,Y,0\right)\right|\\ \notag
   & \leq \left|\frac{M\left\{L(Y,f(X))-L(Y,0)\right\}}{\pi(X)}\right|+\frac{1}{2c}\\ \notag
   & \leq \left|\frac{L(Y,f(X))-L(Y,0)}{\pi(X)}\right|+\frac{1}{2c}\\\notag
   & \leq \frac{1}{2c}\left\{L(Y,f(X))-L(Y,0)\right\}+\frac{1}{2c}\\ \label{bound:B}
   & \leq \frac{1}{2c}\left\{C_{L}\left((c_{n,L}\lambda)^{-\frac{1}{2}}\right)
   (c_{n,L}\lambda)^{-\frac{1}{2}}+1\right\}:\equiv Q_n,
\end{align}
where $c$ is defined in Assumption $\ref{MAR}$ and $C_L(\cdot)$ is a Lipschiz constant defined in Section $\ref{sec:preli}$.

Let $\mathcal{F}_{\varepsilon}$ be an $\varepsilon$-net with cardinality $\left|\mathcal{F}_{\varepsilon}\right|=\mathcal{N}\left((c_{n,L}\lambda)^{-\frac{1}{2}}B_{\mathcal{H}},\|\cdot\|_{\infty},\varepsilon\right)
=\mathcal{N}\left(B_{\mathcal{H}},\|\cdot\|_{\infty},(c_{n,L}\lambda)^{\frac{1}{2}}\varepsilon\right)$. For every function $f\in (c_{n,L}\lambda)^{-\frac{1}{2}}B_{\mathcal{H}}$,
there exists a function $g\in \mathcal{F}_{\varepsilon}$ such that $\|f-g\|_{\infty} \leq \varepsilon$. Since,
\[\left|R_{L_{W},\prob}(f)-R_{L_{W},\prob}(g)\right|
=\left|\E\left\{\frac{L( Y,f(X))-L( Y,g(X))}{\pi(X)}\right\}\right|
  \leq \frac{1}{2c}C_{L}\left((c_{n,L}\lambda)^{-\frac{1}{2}}\right)\varepsilon.\]
This inequality also holds for $\left|R_{L_{W},D}(f)-R_{L_{W},D}(g)\right|$. Thus,
\begin{align*}
  &\left|R_{L_{W},\prob}(f)- R_{L_{W},D}(f)\right|\\
  & \leq \left|R_{L_{W},\prob}(f)-R_{L_{W},\prob}(g)\right|
  +\left|R_{L_{W},D}(f)-R_{L_{W},D}(g)\right|+\left|R_{L_{W},\prob}(g)-R_{L_{W},D}(g)\right|\\
  & \leq \frac{1}{c}C_{L}\left((c_{n,L}\lambda)^{-\frac{1}{2}}\right)\varepsilon
  +\left|R_{L_{W},\prob}(g)-R_{L_{W},D}(g)\right|~\textrm{for}~\textrm{some}~
  g \in \mathcal{F}_{\varepsilon}.
\end{align*}

Using Hoeffding's inequality \citep[Theorem 6.10]{Steinwart:Christmann:08}, and
similarly to the proof of Theorem~6.25 therein,
for any $\eta>0$, we have
\begin{align*}
  \prob\left(A_n+B_n\geq Q_n\sqrt{\frac{2\eta}{n}}
  +\frac{2}{c}C_{L}\left((c_{n,L}\lambda)^{-\frac{1}{2}}\right)\varepsilon\right)&
  \leq\prob\left(2\sup_{g\in\mathcal{F}_{\varepsilon}}\left|R_{L_{W},\prob}(g)-R_{L_{W},D}(g)\right|
   \geq Q_n\sqrt{\frac{2\eta}{n}}\right)\\
   & \leq\sum_{g\in\mathcal{F}_{\varepsilon}}\prob\left(\left|R_{L_{W},\prob}(g)-R_{L_{W},D}(g)\right|
   \geq Q_n\sqrt{\frac{\eta}{2n}}\right)\\
&\leq2\mathcal{N}\left(B_{\mathcal{H}},\|\cdot\|_{\infty},(c_{n,L}\lambda)^{\frac{1}{2}}\varepsilon\right)
e^{-\eta}.
\end{align*}

Elementary algebraic transformation shows that for fixed $\lambda>0$, $n\geq1$, $\varepsilon>0$, and $\eta>0$, with probability not less than $1-e^{-\eta}$,

\begin{align}\label{Bound:AnBn_WCC}
 &A_n+B_n\\\notag
 &\leq\frac{2C_{L}\left((c_{n,L}\lambda)^{-\frac{1}{2}}\right)\varepsilon}{c}+\frac{C_{L}\left((c_{n,L}\lambda)^{-\frac{1}{2}}\right)
 (c_{n,L}\lambda)^{-\frac{1}{2}}+1}{2c}
\left[\sqrt{\frac{2\eta+2\log\left\{2\mathcal{N}
\left(B_{\mathcal{H}},\|\cdot\|_{\infty},(c_{n,L}\lambda)^{\frac{1}{2}}\varepsilon\right)\right\}}
{n}}\right]
\end{align}

Next, we bound $C_n+D_n$.
\begin{align*}
  \left|R_{L_{W},D}(f)-R_{L_{\widehat{W}},D}(f)\right| &=\left|\mathbb{P}_{n}\left\{\frac{ML(Y,f(X))}{\pi(X)}-\frac{ML(Y,f(X))}{\widehat{\pi}(X)}\right\}\right| \\
   &=\left|\mathbb{P}_{n}\left[\frac{ML(Y,f(X))}{\pi(X)\widehat{\pi}(X)}
   \left\{\widehat{\pi}(X)-\pi(X)\right\}\right]\right|\\
   &\leq \frac{C_{L}\left((c_{n,L}\lambda)^{-\frac{1}{2}}\right)(c_{n,L}\lambda)^{-\frac{1}{2}}+1}{2c\cdot c_{n,L}}Err_{1,n}.
\end{align*}
Then
\begin{equation}\label{Bound:CnDnWCC}
C_n+D_n\leq \frac{C_{L}\left((c_{n,L}\lambda)^{-\frac{1}{2}}\right)(c_{n,L}\lambda)^{-\frac{1}{2}}+1}{c\cdot c_{n,L}}Err_{1,n}
\end{equation}

By the definition of $A^{W}(\lambda)$ and $B^{W}(\lambda)$,
\[A^{W}(\lambda)-A_2(\lambda)\leq B^{W}(\lambda)\leq A_n+B_n+C_n+D_n,\]
and the result thus follows from $\eqref{Bound:AnBn_WCC}$ and $\eqref{Bound:CnDnWCC}$.
\end{proof}

\subsection{Proof of Theorem $\ref{theo:Cons_WCC}$}
\begin{proof}
By Condition $\ref{condi:LipCons}$,
\[C_{L}\left((c_{n,L}\lambda)^{-\frac{1}{2}}\right)\leq r(c_{n,L})^{-\frac{q}{2}}\lambda^{-\frac{q}{2}}.\]
Then, together with Condition $\ref{condi:entroy}$, we have
\[\sqrt{\frac{2\log\left\{2\mathcal{N}\left(B_{\mathcal{H}},\|\cdot\|_{\infty},(c_{n,L}\lambda)^{\frac{1}{2}}\varepsilon\right)\right\}}{n}}
\leq\sqrt{\frac{2\ln2+2a\left\{(c_{n,L}\lambda)^{\frac{1}{2}}\varepsilon\right\}^{-2p}}{n}}
\leq \sqrt{\frac{2a}{n}}\left\{(c_{n,L}\lambda)^{\frac{1}{2}}\varepsilon\right\}^{-p}.\]
Therefore,
\begin{align*}
&A^{W}(\lambda)-A_2(\lambda)\\
&\leq\frac{r(c_{n,L})^{-\frac{q}{2}}\lambda^{-\frac{q}{2}}(c_{n,L}\lambda)^{-\frac{1}{2}}+1}{c}\left[2(c_{n,L}\lambda)^{\frac{1}{2}}\varepsilon
+\sqrt{\frac{2a}{n}}\left\{(c_{n,L}\lambda)^{\frac{1}{2}}\varepsilon\right\}^{-p}+\left(\frac{2\eta}{n}\right)^{\frac{1}{2}}
+\frac{Err_{1,n}}{c_{n,L}}\right].
\end{align*}
Let
\[\varepsilon=(c_{n,L}\lambda)^{-\frac{1}{2}}\left(\frac{p}{2}\right)^{\frac{1}{p+1}}
\left(\frac{2a}{n}\right)^{\frac{1}{2p+2}}.\]
Then using some algebra,
\[2(c_{n,L}\lambda)^{\frac{1}{2}}\varepsilon+\sqrt{\frac{2a}{n}}\left\{(c_{n,L}\lambda)^{\frac{1}{2}}\varepsilon\right\}^{-p}
=(p+1)\left(\frac{2}{p}\right)^{\frac{p}{p+1}}\left(\frac{2a}{n}\right)^{\frac{1}{2p+2}}\leq 3\left(\frac{2a}{n}\right)^{\frac{1}{2p+2}},\]
where the last inequality can be verified by \cite[Lemma A.1.5]{Steinwart:Christmann:08}.

Since $\left|\widehat{\pi}(X)-\widehat{\pi}(X)\right|=\O_p\left(n^{-\frac{1}{2}}\right)$, we have
$Err_{1,n}=\O_p\left(n^{-\frac{1}{2}}\right)$, there exists a constant $b_{1}(\eta)$ such that
for all $n\geq1$
\[\prob\left(Err_{1,n}\geq b_{1}(\eta)n^{-\frac{1}{2}}\right)<e^{-\eta}.\]

For fixed $\lambda>0$, $n\geq1$, $\varepsilon>0$, and $\eta>0$, with probability not less than $1-2e^{-\eta}$,
\begin{equation}\label{ineq:univConsiWCC}
A^{W}(\lambda)- A_2(\lambda)
 \leq\frac{r(c_{n,L})^{-\frac{q+1}{2}}\lambda^{-\frac{q+1}{2}}+1}{c}\left[
 3\left(\frac{2a}{n}\right)^{\frac{1}{2p+2}}+\left(\frac{2\eta}{n}\right)^{\frac{1}{2}}
 +\frac{b_{1}(\eta)n^{-\frac{1}{2}}}{c_{n,L}}\right].
\end{equation}

Note that by Assumption $\ref{assp:bond_PHat}$, $c_{n,L}=\O\left(n^{-d}\right)$,
\begin{equation}\label{eq:Lip_WCC}
r(c_{n,L})^{-\frac{q+1}{2}}=\O\left(n^{\frac{(q+1)d}{2}}\right).
\end{equation}

When $\lambda^{\frac{q+1}{2}}n^{\min\left(\frac{1}{2}-d,\frac{1}{2p+2}\right)-\frac{(q+1)d}{2}}\longrightarrow\infty$, the $\mathcal{P}$-universal consistency holds.
\end{proof}

\subsection{Proof of Corollary $\ref{theo:Cons_WCC}$}
\begin{proof}

By \eqref{ineq:univConsiWCC} and Assumption~\ref{assp:BoundA2},
\begin{align*}
A^{W}(\lambda) &\leq A_2(\lambda)+
 \leq\frac{r(c_{n,L})^{-\frac{q+1}{2}}\lambda^{-\frac{q+1}{2}}+1}{c}\left[
 3\left(\frac{2a}{n}\right)^{\frac{1}{2p+2}}+\left(\frac{2\eta}{n}\right)^{\frac{1}{2}}
 +\frac{b_{1}(\eta)n^{-\frac{1}{2}}}{c_{n,L}}\right]\\
&\leq b\lambda^{\gamma}+\left(\frac{r(c_{n,L})^{-\frac{q+1}{2}}}{c}\lambda^{-\frac{q+1}{2}}+\frac{1}{c}\right)
\left[3\left(\frac{2a}{n}\right)^{\frac{1}{2p+2}}+\left(\frac{2\eta}{n}\right)^{\frac{1}{2}}
 +\frac{b_{1}(\eta)n^{-\frac{1}{2}}}{c_{n,L}}\right].
\end{align*}
Let
\[G_1(\lambda)=b\lambda^{\gamma}+\left(\frac{r(c_{n,L})^{-\frac{q+1}{2}}}{c}\lambda^{-\frac{q+1}{2}}+\frac{1}{c}\right)
\left[3\left(\frac{2a}{n}\right)^{\frac{1}{2p+2}}+\left(\frac{2\eta}{n}\right)^{\frac{1}{2}}
 +\frac{b_{1}(\eta)n^{-\frac{1}{2}}}{c_{n,L}}\right].\]
Taking the derivative with respect to $\lambda$ and setting it equal to $0$,
\[b\gamma\lambda^{\gamma-1}=\frac{r(c_{n,L})^{-\frac{q+1}{2}}}{c}\frac{q+1}{2}\lambda^{-\frac{q+3}{2}}
\left[3\left(\frac{2a}{n}\right)^{\frac{1}{2p+2}}+\left(\frac{2\eta}{n}\right)^{\frac{1}{2}}
+\frac{b_{1}(\eta)n^{-\frac{1}{2}}}{c_{n,L}}\right].\]
By \eqref{eq:Lip_WCC} and Assumption \ref{assp:bond_PHat}
\begin{align*}
  \lambda^{\gamma+\frac{q+1}{2}}
  &\propto\left(\frac{1}{n}\right)^{\min\left(\frac{1}{2p+2},\frac{1}{2}-d\right)}n^{\frac{(q+1)d}{2}},\\
\Rightarrow\lambda &\propto n^{\left\{-\min\left(\frac{1}{2p+2},\frac{1}{2}-d\right)+\frac{(q+1)d}{2}\right\}\frac{2}{2\gamma+q+1}}.
\end{align*}
Note that by choosing large $r$ where $r$ is defined in Condition $\ref{condi:LipCons}$, $G_1''\left(n^{\left\{-\min\left(\frac{1}{2p+2},\frac{1}{2}-d\right)
+\frac{(q+1)d}{2}\right\}\frac{2}{2\gamma+q+1}}\right)$
can be positive. Then, for $\lambda=n^{\left\{-\min\left(\frac{1}{2p+2},\frac{1}{2}-d\right)+\frac{(q+1)d}{2}\right\}\frac{2}{2\gamma+q+1}}$,
\begin{align*}
&G_1(\lambda) \\ &=bn^{\left\{-\min\left(\frac{1}{2p+2},\frac{1}{2}-d\right)+\frac{(q+1)d}{2}\right\}\frac{2\gamma}{2\gamma+q+1}}\\
 &+\left\{\frac{r(c_{n,L})^{-\frac{q+1}{2}}}{c}n^{\left\{-\min\left(\frac{1}{2p+2},\frac{1}{2}-d\right)
 +\frac{(q+1)d}{2}\right\}
 \frac{2}{2\gamma+q+1}\left(-\frac{q+1}{2}\right)}+\frac{1}{c}\right\}
 \left[3\left(\frac{2a}{n}\right)^{\frac{1}{2p+2}}+\left(\frac{2\eta}{n}\right)^{\frac{1}{2}}
+\frac{b_{1}(\eta)n^{-\frac{1}{2}}}{c_{n,L}}\right]\\
&\leq bn^{\left\{-\min\left(\frac{1}{2p+2},\frac{1}{2}-d\right)+\frac{(q+1)d}{2}\right\}\frac{2\gamma}{2\gamma+q+1}}\\
&\quad+c_{\text{\prob}}\left(c_{a}+\sqrt{\eta}+b_1(\eta)\right)
n^{\left\{-\min\left(\frac{1}{2p+2},\frac{1}{2}-d\right)+\frac{(q+1)d}{2}\right\}
 \frac{2}{2\gamma+q+1}\left(-\frac{q+1}{2}\right)-\min\left(\frac{1}{2p+2},\frac{1}{2}-d\right)+\frac{(q+1)d}{2}}\\
&\leq Q\left(\sqrt{\eta}+b_1(\eta)+c_{a,b}\right)
n^{\left\{-\min\left(\frac{1}{2p+2},\frac{1}{2}-d\right)+\frac{(q+1)d}{2}\right\}
 \frac{2\gamma}{2\gamma+q+1}},
\end{align*}
where $c_{a}$ is a constant related to $a$,
$c_{a,b}$ is a constant related to $a$ and $b$,
$c_{\text{\prob}}$ and $Q$ are constants related to $\prob$.
None of them is related to $\eta$.

By Assumption $\ref{assp:BoundA2}$, for fixed $\lambda>0$, $n\geq1$, $\varepsilon>0$, and $\eta>0$, with probability not less than $1-2e^{-\eta}$,
\[A^{W}(\lambda)\leq Q\left(\sqrt{\eta}+b_1(\eta)+c_{a,d}\right)
   n^{\left\{-\min\left(\frac{1}{2p+2},\frac{1}{2}-d\right)+\frac{(q+1)d}{2}\right\}
   \frac{2\gamma}{2\gamma+q+1}}.\]

Therefore, the learning rate is $n^{\left\{-\min\left(\frac{1}{2p+2},\frac{1}{2}-d\right)+\frac{(q+1)d}{2}\right\}\frac{2\gamma}{2\gamma+q+1}}$.
\end{proof}

\subsection{Oracle Inequality for Doubly-Robust Kernel Machines}
\begin{theorem}\label{ineq:OIDR}
Let Assumptions~\ref{MAR} and \ref{assp:bond_PHat} hold.
When $L$ is a quadratic loss, for fixed $\lambda > 0$, $n \geq 1$, $\varepsilon > 0$, and $\eta > 0$,
 with probability not less than $1-e^{-\eta}$
\begin{align*}
&\lambda\left\|f_{D,\lambda}^{DR}\right\|^{2}_{\mathcal{H}}+R_{L,\prob}\left(f_{D,\lambda}^{DR}\right)-\inf_{f\in \mathcal{H}}R_{L,\prob}(f)\\
 &<A_2(\lambda)+u_{2n}(\lambda)\varepsilon
 +3u_{3n}(\lambda)\left[\sqrt{\frac{2\eta+
 2\log\left\{2\mathcal{N}(B_{\mathcal{H}},\|\cdot\|_{\infty},u_{1n}(\lambda)\varepsilon)\right\}}{n}}\right]
 +\frac{2u_{3n}(\lambda)}{c_{n,L}}Err_{1,n}\\
 &\quad+2\left(\frac{1}{c_{n,L}}+1\right)Err_{2,n},
\end{align*}
where
\[u_{1n}=(c_{2,n}\lambda)^{\frac{1}{2}},\quad u_{2n}=\frac{6r(c_{2,n}\lambda)^{-\frac{1}{2}}}{c},
\quad u_{3n}=\frac{r(c_{2,n}\lambda)^{-1}+1}{c},\quad c_{2,n}=\frac{c_{n,L}}{2+c_{n,U}},\]
$c_{n,L}$  and $c_{n,U}$ are defined as in Assumption $\ref{assp:bond_PHat}$.
\end{theorem}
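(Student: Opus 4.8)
The plan is to follow the argument of Theorem~\ref{ineq:OIWCC}, replacing the weighted loss $L_W$ by the augmented loss $L_{W,H}$ evaluated at the \emph{true} pair $(\pi,H(\cdot,\beta_0,\cdot))$ and carrying the extra error source $Err_{2,n}$ through the estimates. First I would use the defining optimality of $f_{D,\lambda}^{DR}$ from~\eqref{eq:DRdecfun},
\[
\lambda\left\|f_{D,\lambda}^{DR}\right\|_{\mathcal{H}}^{2}+R_{L_{\widehat W,\widehat H},D}\left(f_{D,\lambda}^{DR}\right)\leq\lambda\left\|f_{\prob,\lambda}\right\|_{\mathcal{H}}^{2}+R_{L_{\widehat W,\widehat H},D}(f_{\prob,\lambda}),
\]
together with the identity $R_{L,\prob}(f)=R_{L_{W,H},\prob}(f)$ (immediate from~\eqref{eq:IPW_EQ} and $\E[(M-\pi(X))/\pi(X)\mid X]=0$), in which $L_{W,H}$ uses the true propensity $\pi$ and the true conditional risk $H(\cdot,\beta_0,\cdot)$; this is the ``both models correct'' computation behind Lemma~\ref{lemma2}. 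As in the weighted-complete-case proof these give
\[
\lambda\left\|f_{D,\lambda}^{DR}\right\|_{\mathcal{H}}^{2}+R_{L,\prob}\left(f_{D,\lambda}^{DR}\right)-\inf_{f\in\mathcal{H}}R_{L,\prob}(f)\leq A_2(\lambda)+A_n+B_n+C_n+D_n,
\]
where, inserting the oracle empirical risk $R_{L_{W,H},D}$ as intermediate term, $A_n$ and $B_n$ are $|R_{L_{W,H},\prob}(\cdot)-R_{L_{W,H},D}(\cdot)|$ at $f_{D,\lambda}^{DR}$ and at $f_{\prob,\lambda}$, while $C_n$ and $D_n$ are $|R_{L_{W,H},D}(\cdot)-R_{L_{\widehat W,\widehat H},D}(\cdot)|$ at the same two functions.

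Second I would establish the a priori norm bound. Since $L(y,0)\leq1$ and, for the quadratic loss, $H(x,\widehat\beta,0)=\int y^{2}\,dF_{Y|X}(y\mid x,\widehat\beta)\leq1$, Assumptions~\ref{MAR} and~\ref{assp:bond_PHat} bound $R_{L_{\widehat W,\widehat H},D}(0)$ from above; combining this with the optimality inequality and a lower bound on $R_{L_{\widehat W,\widehat H},D}(f_{D,\lambda}^{DR})$ obtained by completing the square in the quadratic loss --- the coefficient of $f(X_i)^{2}$ in each summand of the augmented empirical risk equals $1$ --- yields $\|f_{D,\lambda}^{DR}\|_{\mathcal{H}}\leq(c_{2,n}\lambda)^{-1/2}$ with $c_{2,n}=c_{n,L}/(2+c_{n,U})$, and the same bound holds for $f_{\prob,\lambda}$. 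Hence both minimizers lie in $\mathcal{H}_n\equiv(c_{2,n}\lambda)^{-1/2}B_{\mathcal{H}}$, on which $|L_{W,H}(\pi,H(\cdot,\beta_0,\cdot),M,X,Y,f(X))|$ is of order $u_{3n}(\lambda)$ and $f\mapsto L_{W,H}$ is locally Lipschitz with constant of order $u_{2n}(\lambda)$; here one uses that $t\mapsto H(x,\beta_0,t)=\int L(y,t)\,dF_{Y|X}$ inherits the local Lipschitz constant $C_L$ of $L$.

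Third, for $A_n+B_n$ I would repeat the covering-number/Hoeffding argument of the proof of Theorem~\ref{ineq:OIWCC} (which follows Theorem~6.25 of \citet{Steinwart:Christmann:08}): take an $\varepsilon$-net of $\mathcal{H}_n$ of cardinality $\mathcal{N}(B_{\mathcal{H}},\|\cdot\|_{\infty},u_{1n}(\lambda)\varepsilon)$, pay the discretisation cost $u_{2n}(\lambda)\varepsilon$, apply Hoeffding's inequality (Theorem~6.10 of \citet{Steinwart:Christmann:08}) to the bounded random variable $L_{W,H}$ at each net point, and union-bound; this produces the term $3u_{3n}(\lambda)\sqrt{(2\eta+2\log\{2\mathcal{N}(B_{\mathcal{H}},\|\cdot\|_{\infty},u_{1n}(\lambda)\varepsilon)\})/n}$ with probability at least $1-e^{-\eta}$. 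For the deterministic remainder $C_n+D_n$ I would expand $R_{L_{W,H},D}(f)-R_{L_{\widehat W,\widehat H},D}(f)$: the ``$ML(Y,f)/\pi$'' parts differ by $\mathbb{P}_{n}[ML(Y,f)\{\widehat\pi-\pi\}/(\pi\widehat\pi)]$, and the augmentation parts by $\mathbb{P}_{n}[\tfrac{M-\pi}{\pi}H(\cdot,\beta_0,f)-\tfrac{M-\widehat\pi}{\widehat\pi}\widehat H(\cdot,f)]$. Splitting the latter into a piece proportional to $\widehat\pi-\pi$ (controlled by $Err_{1,n}$, $c_{n,L}$, and the uniform bound on $\widehat H$ over $\mathcal{H}_n$) and the piece $\tfrac{M-\pi}{\pi}\{H(\cdot,\beta_0,f)-\widehat H(\cdot,f)\}$ (controlled by $Err_{2,n}$ and $c$) gives $C_n+D_n\leq\tfrac{2u_{3n}(\lambda)}{c_{n,L}}Err_{1,n}+2(\tfrac{1}{c_{n,L}}+1)Err_{2,n}$. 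Assembling these with the decomposition of the first paragraph gives the stated inequality.

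I expect the main obstacle to be the a priori norm bound and the uniform control of the augmented loss: unlike $L$ and $L_W$, the augmented loss $L_{W,H}$ need not be non-negative, so the easy estimate $\lambda\|f\|_{\mathcal{H}}^{2}\leq R_{L_{\widehat W,\widehat H},D}(f)$ is not available, and one must instead exploit the explicit quadratic structure (unit coefficient of $f(X_i)^{2}$, completion of squares) to trap $\|f_{D,\lambda}^{DR}\|_{\mathcal{H}}$ --- and to do so with the sharp constant $c_{2,n}=c_{n,L}/(2+c_{n,U})$ rather than a larger power of $c_{n,L}^{-1}$, since it is this sharpness that keeps the dependence on $d$ in the eventual learning rate linear rather than quadratic. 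A secondary difficulty is that every bound above must be uniform over $\mathcal{H}_n$, whose radius $(c_{2,n}\lambda)^{-1/2}$ diverges as $c_{2,n}\lambda\to0$.
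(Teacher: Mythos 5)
Your proposal reproduces the paper's proof almost step for step: the same optimality-plus-identity decomposition into $A_2(\lambda)+A_n+B_n+C_n+D_n$ with $R_{L_{W,H},D}$ (true $\pi$, true $\beta_0$) as the intermediate empirical risk, the same $\varepsilon$-net/Hoeffding argument yielding $u_{2n}(\lambda)\varepsilon+3u_{3n}(\lambda)\sqrt{(2\eta+2\log\{2\mathcal{N}(B_{\mathcal{H}},\|\cdot\|_{\infty},u_{1n}(\lambda)\varepsilon)\})/n}$ for $A_n+B_n$, and the same splitting of $C_n+D_n$ into a $\widehat\pi-\pi$ piece and a $\widehat H-H$ piece giving $\tfrac{2u_{3n}(\lambda)}{c_{n,L}}Err_{1,n}+2(\tfrac{1}{c_{n,L}}+1)Err_{2,n}$. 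All of that matches the paper and is fine.

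The one place you diverge is the a priori bound $\|f_{D,\lambda}^{DR}\|_{\mathcal{H}}\leq(c_{2,n}\lambda)^{-1/2}$, and there your proposal has a gap. You correctly observe that the cheap route $\lambda\|f\|_{\mathcal{H}}^{2}\leq\lambda\|f\|_{\mathcal{H}}^{2}+R_{L_{\widehat W,\widehat H},D}(f)\leq R_{L_{\widehat W,\widehat H},D}(0)$ requires nonnegativity of the augmented empirical risk, which fails in general (the paper itself remarks that $L_{W,H}$ need not be nonnegative, and then uses exactly this chain in its display \eqref{ineq:bound_f_DR} anyway — so your diagnosis is, if anything, sharper than the paper's treatment). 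But your proposed remedy does not deliver the stated constant. Completing the square in the $M=1$ summand, $L_{\widehat W,\widehat H}=(t-B)^{2}+C-B^{2}$ with $B=\{Y-(1-\widehat\pi(X))U(X,\widehat\beta)\}/\widehat\pi(X)$ and $C=\{Y^{2}-(1-\widehat\pi(X))V(X,\widehat\beta)\}/\widehat\pi(X)$, gives a pointwise lower bound $C-B^{2}$ that can be as negative as order $-c_{n,L}^{-2}$ (take $Y=1$, $U=-1$, $\widehat\pi$ near $c_{n,L}$). The resulting bound is then $\lambda\|f_{D,\lambda}^{DR}\|_{\mathcal{H}}^{2}\lesssim c_{n,L}^{-1}+c_{n,L}^{-2}$, i.e.\ a radius of order $c_{n,L}^{-1}\lambda^{-1/2}$ rather than $(c_{2,n}\lambda)^{-1/2}$ with $c_{2,n}=c_{n,L}/(2+c_{n,U})$. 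As you yourself note, this is not a cosmetic loss: the radius enters $u_{2n},u_{3n}$ and the covering scale $u_{1n}$, so a $c_{n,L}^{-2}$ radius changes the $d$-dependence of the final bound and hence the learning rate when $d>0$. To prove the theorem exactly as stated you would either need to justify $R_{L_{\widehat W,\widehat H},D}(f_{D,\lambda}^{DR})\geq0$ (which the decomposition $\tfrac{M}{\widehat\pi}\{L-\widehat H\}+\widehat H$ does not give, since $L-\widehat H$ can be negative) or accept the weaker constant and restate the theorem accordingly. In short: everything except the norm bound is correct and identical to the paper; the norm bound is the crux, you correctly identified it as such, but the fix you sketch does not close it with the claimed constant.
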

\begin{proof}
Let $c_1=\frac{3}{2c}$, where $c$ is defined as in Assumption $\ref{MAR}$.
Since $L(Y,0)\leq 1$, we also have $H(X,\beta_0,0)=\E\left\{L(Y,0)\mid X\right\}\leq 1$.
Recall that
\[L_{W,H}\left(\pi,H,M,X,Y,f(X)\right)\equiv\frac{ML(Y,f(X))}{\pi(X)}-
\frac{M-\pi(X)}{\pi(X)}H(X,\beta_0,f(X)).\]
By Assumption \ref{MAR},
 \begin{align*}
\left|L_{W,H}\left(\pi,H,M,X,Y,0\right)\right|&=\left|\frac{ML(Y,0)}{\pi(X)}-
\frac{M-\pi(X)}{\pi(X)}H(X,\beta_0,0)\right| \\
 & \leq \frac{ML(Y,0)}{\pi(X)}+
\frac{M+\pi(X)}{\pi(X)}H(X,\beta_0,0)\\
& \leq \frac{2M+\pi(X)}{\pi(X)}\leq\frac{3}{2c}\equiv c_1,
\end{align*}
where the last inequality follows from Assumption~\ref{MAR}.

Since $\widehat{H}(X,0)=\int_{y\in\mathcal{Y}} L(y,0)dF_{Y|X}(y\mid X,\widehat{\beta})\leq 1$,
 \begin{align*}
\left|L_{\widehat{W},\widehat{H}}\left(\widehat{\pi},\widehat{H},M,X,Y,0\right)\right|
&=\left|\frac{ML(Y,0)}{\widehat{\pi}(X)}-
\frac{M-\widehat{\pi}(X)}{\widehat{\pi}(X)}\widehat{H}(X,0)\right| \\
 & \leq \frac{ML(Y,0)}{\widehat{\pi}(X)}+
\frac{M+\widehat{\pi}(X)}{\widehat{\pi}(X)}\widehat{H}(X,0)\\
& \leq \frac{2M+\widehat{\pi}(X)}{\widehat{\pi}(X)}\leq \frac{2+c_{n,U}}{c_{n,L}}\equiv\frac{1}{c_{2,n}}.
\end{align*}

Note that
\begin{equation}\label{ineq:bound_f_DR}
\lambda\left\|f_{D,\lambda}^{DR}\right\|^{2}_{\mathcal{H}}\leq
\lambda\left\|f_{D,\lambda}^{DR}\right\|^{2}_{\mathcal{H}}+
R_{L_{\widehat{W},\widehat{H}},D}\left(f_{D,\lambda}^{DR}\right)
\leq R_{L_{\widehat{W},\widehat{H}},D}(0)\leq \frac{1}{c_{2,n}}.
\end{equation}
For every $f\in (c_{2,n}\lambda)^{-\frac{1}{2}}B_{\mathcal{H}}$,
\begin{equation}\label{Bound:LDR}
\left|L(Y,f(X))\right|\leq\left|L(Y,f(X))-L(Y,0)\right|+L(Y,0)
\leq C_{L}\left((c_{2,n}\lambda)^{-\frac{1}{2}}\right)(c_{2,n}\lambda)^{-\frac{1}{2}}+1
\leq r(c_{2,n}\lambda)^{-1}+1
\end{equation}
The last inequality holds since by Condition~\ref{condi:LipCons}, for the quadratic
loss $C_{L}\left((c_{2,n}\lambda)^{-\frac{1}{2}}\right)\leq r (c_{2,n}\lambda)^{-\frac{1}{2}}$.

Using the
argument as in $\eqref{bound:B}$ in Theorem~\ref{ineq:OIWCC},
\[L_{W,H}\left(\pi,H,M,X,Y,f(X)\right)\leq
\frac{3\left\{r(c_{2,n}\lambda)^{-1}+1\right\}}{2c}
\equiv T_n.\]
Let
\[A^{DR}(\lambda) =\lambda\left\|f_{D,\lambda}^{DR}\right\|^{2}_{\mathcal{H}}
+R_{L,\prob}\left(f_{D,\lambda}^{DR}\right)-\inf_{f\in H}R_{L,\prob}(f).\]
Since $R_{L,\prob}(f)=R_{L_{W,H},\prob}(f)$, using the same technique as in the proof of Theorem~\ref{ineq:OIWCC},
\begin{align}\notag
A^{DR}(\lambda)-A_2(\lambda)
   &\leq \left|R_{L_{W,H},\prob}(f_{\prob,\lambda})-R_{L_{W,H},D}(f_{\prob,\lambda})\right|
  +\left|R_{L_{W,H},\prob}\left(f_{D,\lambda}^{DR}\right)-R_{L_{W,H},D}\left(f_{D,\lambda}^{DR}\right)\right|\\ \notag
  &\quad+\left|R_{L_{W,H},D}(f_{\prob,\lambda})-R_{L_{\widehat{W},\widehat{H}},D}(f_{\prob,\lambda})\right|
  +\left|R_{L_{W,H},D}\left(f_{D,\lambda}^{DR}\right)-R_{L_{\widehat{W},\widehat{H}},D}\left(f_{D,\lambda}^{DR}\right)\right|\\ \label{Bound}
  &\equiv A_n+B_n+C_n+D_n
\end{align}

Let $\mathcal{F}_{\varepsilon}$ be an $\varepsilon$-net of $(c_{2,n}\lambda)^{-\frac{1}{2}}B_{\mathcal{H}}$ with cardinality $\left|\mathcal{F}_{\varepsilon}\right|=\mathcal{N}\left((c_{2,n}\lambda)^{-\frac{1}{2}}B_{\mathcal{H}}\|\cdot\|_{\infty},\varepsilon\right)
=\allowbreak\mathcal{N}(B_{\mathcal{H}},\|\cdot\|_{\infty},(c_{2,n}\lambda)^{\frac{1}{2}}\varepsilon)$. For every function $f\in (c_{2,n}\lambda)^{-\frac{1}{2}}B_{\mathcal{H}}$,
there exists a function $g\in \mathcal{F}_{\varepsilon}$, such that $\|f-g\|_{\infty} \leq \varepsilon$.
Thus,
\[\left|L(Y,f(X))-L(Y,g(X))\right| \leq C_{L}\left((c_{2,n}\lambda)^{-\frac{1}{2}}\right)\|f-g\|_{\infty} \leq r(c_{2,n}\lambda)^{-\frac{1}{2}}\varepsilon,\]
and
\begin{align*}
 \left|H(X,\beta_0,f(X))-H(X,\beta_0,g(X))\right|& = \left|\int_{y\in\mathcal{Y}}\{L(y,f(X))-L(y,g(X))\}dF_{Y|X}(y\mid X,\beta_0)\right|\\
   & \leq \int_{y\in\mathcal{Y}}\left|L(y,f(X))-L(y,g(X))\right|dF_{Y|X}(y\mid X,\beta_0)\\
   & \leq r(c_{2,n}\lambda)^{-\frac{1}{2}}\varepsilon.
\end{align*}

Therefore
\begin{align}\label{dif_fg}
 &\left|R_{L_{W,H},\prob}(f)-R_{L_{W,H},\prob}(g)\right| \\ \notag
 &=\left|\E\left[\frac{M\{L(Y,f(X))-L(Y,g(X))\}}{\pi(X)}-
 \frac{\{M-\pi(X)\}\{H(X,\beta_0,f(X))-H(X,\beta_0,g(X))\}}{\pi(X)}\right]\right| \\ \notag
  &\leq \E\left[\left|\frac{M\{L(Y,f(X))-L(Y,g(X))\}}{\pi(X)}-
 \frac{\{M-\pi(X)\}\{H(X,\beta_0,f(X))-H(X,\beta_0,g(X))\}}{\pi(X)}\right|\right]\\ \notag
 &\leq \E\left[\frac{M\left|L(Y,f(X))-L(Y,g(X))\right|}{\pi(X)}+
 \frac{\{M+\pi(X)\}\left|H(X,\beta_0,f(X))-H(X,\beta_0,g(X))\right|}{\pi(X)}\right]\\ \notag
 &\leq \frac{3r(c_{2,n}\lambda)^{-\frac{1}{2}}\varepsilon}{2c}.
\end{align}
Similarly,
\[\left|R_{L_{W,H},D}(f)-R_{L_{W,H},D}(g)\right|
  \leq \frac{3r(c_{2,n}\lambda)^{-\frac{1}{2}}\varepsilon}{2c}.\]
Using $\eqref{dif_fg}$ we can bound $A_n$ and $B_n$ of $\eqref{Bound}$
\begin{align*}
  &\left|R_{L_{W,H},\prob}(f)-R_{L_{W,H},D}(f)\right| \\
  & \leq \left|R_{L_{W,H},\prob}(f)-R_{L_{W,H},\prob}(g)\right|+\left|R_{L_{W,H},D}(f)-R_{L_{W,H},D}(g)\right|
   +\left|R_{L_{W,H},\prob}(g)-R_{L_{W,H},D}(g)\right| \\
  & \leq \frac{3r(c_{2,n}\lambda)^{-\frac{1}{2}}\varepsilon}{c}
  +\left|R_{L_{W,H},\prob}(g)-R_{L_{W,H},D}(g)\right|.
\end{align*}
Using the similar argument as \cite[Theorem 6.25]{Steinwart:Christmann:08} for any $\eta>0$, we have
\begin{align*}
   &\prob\left(A_n+B_n\geq
   T_n\sqrt{\frac{2\eta}{n}}+\frac{6r(c_{2,n}\lambda)^{-\frac{1}{2}}\varepsilon}{c}\right)\\
    & \leq\prob\left(2\sup_{g\in\mathcal{F}_{\varepsilon}}
     \left|R_{L_{W,H},\prob}(g)-R_{L_{W,H},D}(g)\right|
    \geq T_n\sqrt{\frac{2\eta}{n}}\right)\\
   & \leq\sum_{g\in\mathcal{F}_{\varepsilon}}
   \prob\left(\left|R_{L_{W,H},\prob}(g)-R_{L_{W,H},D}(g)\right|
   \geq T_n\sqrt{\frac{\eta}{2n}}\right)\\
   &\leq2\mathcal{N}\left(B_{\mathcal{H}},\|\cdot\|_{\infty},(c_{2,n}\lambda)^{\frac{1}{2}}\varepsilon\right)e^{-\eta},
\end{align*}
where the last inequality is from Hoeffding's inequality \cite[Theorem 6.10]{Steinwart:Christmann:08}.

Elementary algebraic transformation shows that for fixed $\lambda>0$, $n\geq1$, $\varepsilon>0$, and $\eta>0$, with probability not less than $1-e^{-\eta}$,

\begin{equation}\label{Bound_DR_AB}
 A_n+B_n
 \leq\frac{6r(c_{2,n}\lambda)^{-\frac{1}{2}}\varepsilon}{c}
 +\frac{3\left\{r(c_{2,n}\lambda)^{-1}+1\right\}}{c}
\left[\sqrt{\frac{2\eta+2\log\left\{
2\mathcal{N}\left(B_{\mathcal{H}},\|\cdot\|_{\infty},(c_{2,n}\lambda)^{\frac{1}{2}}\varepsilon\right)\right\}}{n}}\right].
\end{equation}
Next bound $C_n$ and $D_n$,
\begin{align}\label{ineq:boud_R_DR}
&\left|R_{L_{W,H},D}(f)-R_{L_{\widehat{W},\widehat{H}},D}(f)\right| \\\notag
&=\mathbb{P}_n\left|\frac{ML(Y,f(X))}{\pi(X)}-\frac{ML(Y,f(X))}{\widehat{\pi}(X)}
+\frac{M-\widehat{\pi}(X)}{\widehat{\pi}(X)}\widehat{H}(X,f(X))-
\frac{M-\pi(X)}{\pi(X)}H(X,\beta_0,f(X))\right|\\ \notag
& \leq\mathbb{P}_n\bigg[\frac{ML(Y,f(X))}{\pi(X)\widehat{\pi}(X)}\left|\pi(X)-\widehat{\pi}(X)\right|\\\notag
&\quad\qquad+\left|\widehat{H}(X,f(X))-H(X,\beta_0,f(X))\right|+\left|\frac{M\widehat{H}(X,f(X))}{\widehat{\pi}(X)}
-\frac{MH(X,\beta_0,f(X))}{\pi(X)}\right|\bigg].
\end{align}
Then,
\begin{align}\label{ineq:bound_H}
& \left|\frac{M\widehat{H}(X,f(X))}{\widehat{\pi}(X)}
-\frac{MH(X,\beta_0,f(X))}{\pi(X)}\right|\\\notag
 &=\left|\frac{M\widehat{H}(X,f(X))}{\widehat{\pi}(X)}
 -\frac{MH(X,\beta_0,f(X))}{\widehat{\pi}(X)}
+\frac{MH(X,\beta_0,f(X))}{\widehat{\pi}(X)}-\frac{MH(X,\beta_0,f(X))}{\pi(X)}\right|\\\notag
&\leq \frac{M}{\widehat{\pi}(X)}\left|\widehat{H}(X,f(X))-H(X,\beta_0,f(X))\right|
+\frac{MH(X,\beta_0,f(X))}{\pi(X)\widehat{\pi}(X)}\left|\pi(X)-\widehat{\pi}(X)\right|.
\end{align}
Hence, by inequality \eqref{ineq:boud_R_DR} and \eqref{ineq:bound_H}, and
definition of $Err_{1,n}$ and $Err_{2,n}$ in Subsection~\ref{subsec:ass_con_err},
\begin{align*}
   & \left|R_{L_{W,H},D}(f)-R_{L_{\widehat{W},\widehat{H}},D}(f)\right|\\
   & \leq\frac{L(Y,f(X))}{2c\cdot c_{n,L}}Err_{1,n}+Err_{2,n}+\frac{1}{c_{n,L}}Err_{2,n}
   +\frac{H(X,\beta_0,f(X))}{2c\cdot c_{n,L}}Err_{1,n}.
\end{align*}

Similarly to inequality~\eqref{Bound:LDR},
$\left|H(X,\beta_0,f(X))\right| \leq r(c_{2,n}\lambda)^{-1}+1$.
Then we have
\begin{equation}\label{Bound_DR_CD}
\left|R_{L_{W,H},D}(f)-R_{L_{\widehat{W},\widehat{H}},D}(f)\right|
\leq \frac{r(c_{2,n}\lambda)^{-1}+1}{c\cdot c_{n,L}}Err_{1,n}+
\left(\frac{1}{c_{n,L}}+1\right)Err_{2,n}.
\end{equation}
By  $\eqref{Bound_DR_AB}$ and $\eqref{Bound_DR_CD}$, and using
\[ A^{DR}(\lambda)-A_2(\lambda)\leq A_n+B_n+C_n+D_n,\]
the result follows.
\end{proof}

\subsection{Proof of Lemma $\ref{lem:hn}$}
\begin{proof}
Define $X_i(f)=L(X_i,Y_i,f(X_i))-H(X_i,\beta_0,f(X_i))$ and let
\[\tilde h_n(f)=\frac1n\sum_{i=1}^{n}L(X_i,Y_i,f(X_i))-H(X_i,\beta_0,f(X_i))
=\frac1n\sum_{i=1}^{n}X_i(f).\]
 By inequality \eqref{ineq:bound_f_DR}
we have $\|f\|_{\mathcal{H}}\leq \left(c_{2,n}\lambda\right)^{-\frac{1}{2}}$,
where $c_{2,n}=\frac{c_{n,L}}{2+c_{n,U}}$.
By Cauchy-Schwarz inequality and the reproducing property,
see \cite[Lemma 4.23]{Steinwart:Christmann:08},
\[|f(x)|=\langle f,k(\cdot,x)\rangle\leq\|f(x)\|_{\mathcal{H}}\sqrt{k(x,x)}\leq
\|f(x)\|_{\mathcal{H}}\|k(x,x)\|_{\infty}.\]
Consequently, $\|f\|_{\infty}\leq \|f\|_{\mathcal{H}}\|k(x,x)\|_{\infty}=
\|f\|_{\mathcal{H}}$, where the inequality follows since we assume
that $\|k(x,x)\|\allowbreak\leq 1$.

Since $\|f\|_{\infty}\leq \|f\|_{\mathcal{H}}$,
the space ${\mathcal{H}}_n$ over which the supremum $h_n$ is taken is contained in $\left(c_{2,n}\lambda\right)^{-\frac{1}{2}}B_{\mathcal{H}}$.

By \eqref{Bound:LDR},
$\|X_i(f)\|_{\infty}\leq 2\left\{r\left(c_{2,n}\lambda\right)^{-1}+1\right\}.$
Using the functional Hoeffding's inequality \cite[Section 6.5]{Berestycki:Nickl:Ben:09},
\[\prob\left[\frac{1}{\sqrt{2\left\{r\left(c_{2,n}\lambda\right)^{-1}+1\right\}}}
\left\|\sum_{i=1}^{n}X_i(f)\right\|_{\infty}\geq C
\right]\leq\frac{1}{K_{u}}\exp\left(-\frac{C^{2}}{K_{u}n}\right),\]
where $K_{u}$ is a universal constant and $C$ is any constant.

Let $\widetilde{C}=\frac{C}{\sqrt{n}}$, so $C=\sqrt{n}\widetilde{C}$. Then,
\begin{equation}\label{eq:Hoeffding}
\prob\left[\frac{\sqrt{n}}{\sqrt{2\left\{r\left(c_{2,n}\lambda\right)^{-1}+1\right\}}}
\left\|\frac{1}{n}\sum_{i=1}^{n}X_i(f)\right\|_{\infty}\geq \widetilde{C}
\right]\leq\frac{1}{K_{u}}\exp\left(-\text{const}~\widetilde{C}^{2}\right).
\end{equation}

Since $c_{2,n}=\frac{c_{n,L}}{2+c_{n,U}}$ for $0<c_{n,U}<1$,
and $\frac{1}{c_{n,L}}=\O\left(n^{d}\right)$, then $\frac{1}{c_{2,n}}=\O\left(n^{d}\right)$. Thus,
\begin{equation*}
r\left(c_{2,n}\lambda\right)^{-1}=\O\left(n^{d}\lambda^{-1}\right).
\end{equation*}
Consequently,
\begin{equation}\label{ineq:boud_Lip}
\frac{\sqrt{n}}{\sqrt{2\left\{r\left(c_{2,n}\lambda\right)^{-1}+1\right\}}} =\O\left(n^{\frac{1}{2}-\frac{d}{2}}\lambda^{\frac{1}{2}}\right).
\end{equation}
We have, from~\eqref{eq:Hoeffding} that
$h_n=\O_p\left(n^{-\left(\frac{1}{2}-\frac{d}{2}\right)}\lambda^{-\frac{1}{2}}\right)$.
\end{proof}

\subsection{Proof of Lemma~\ref{Bond:Err2}}
\begin{proof}
Note that for every $f$,
\begin{align*}
 &\left\|\widehat{H}(x,f(x))-H(x,\beta_0,f(x))\right\|_{\infty}\\
 &=\sup_{x\in\mathcal{X}}\left|\int_{y\in\mathcal{Y}}L(y,f(x))dF_{Y|X}\left(y\mid x,\widehat{\beta}\right)-\int_{y\in\mathcal{Y}}L(y,f(x))dF_{Y|X}\left(y\mid x,\beta_0\right)\right|\\
 &=\sup_{x\in\mathcal{X}}\left|\int_{y\in\mathcal{Y}}L(y,f(x))d\left\{F_{Y|X}\left(y\mid x,\widehat{\beta}\right)-F_{Y|X}\left(y\mid x,\beta_0\right)\right\}\right|.
\end{align*}
By \eqref{Bound:LDR}
\[|L(y,f(x))|\leq r(c_{2,n}\lambda)^{-1}+1=
\O\left(n^{d}\lambda^{-1}\right).\]
We have
\begin{align*}
  Err_{2,n}&=\sup_{f\in{\mathcal{H}_n}}\left\|\widehat{H}(x,f(x))-H(x,\beta_0,f(x))\right\|_{\infty}\\
  &\leq\left\{r(c_{2,n}\lambda)^{-1}+1\right\} \sup_{x\in\mathcal{X}}
  \left|\int_{y\in\mathcal{Y}}d\left\{F_{Y|X}\left(y\mid x,\widehat{\beta}\right)-F_{Y|X}\left(y\mid x,\beta_0\right)\right\}\right|.
\end{align*}

Define the function $\phi:\mathcal{B}\mapsto \mathcal L_{\infty}(X)$ by $\phi(\beta)=\int_y dF_{Y|X}\left(y\mid \cdot,\beta\right)$. Note that $\phi$ is Hadamard differentiable as a composite of $\beta\mapsto F_{Y|X}\left(\cdot\mid \cdot,\beta\right)\mapsto \int_y dF_{Y|X}\left(y\mid \cdot,\beta\right)$. The first mapping is Hadamard differentiable by the assumption of continuous differentiability with respect to $\beta$ and the definition of Hadamard differentiability \cite[Section 2.2.4]{Kosorok:08}, and the second by \cite[Lemma 12.3]{Kosorok:08}.
Thus, by the function delta method \cite[Theorem 2.8]{Kosorok:08},
\begin{equation}\label{Order_int}
\int_{y\in\mathcal{Y}}d\left\{F_{Y|X}\left(y\mid x,\widehat{\beta}\right)-F_{Y|X}\left(y\mid x,\beta_0\right)\right\}=\O_p\left(n^{-\frac{1}{2}}\right).
\end{equation}
Consequently, by the definition of convergence in probability \citep[Section 2.2.1]{Kosorok:08}, we conclude that  $Err_{2,n}=\O_p\left(n^{-\frac{1}{2}+d}\lambda^{-1}\right)$.
\end{proof}

\subsection{Proof of Theorem $\ref{theo:Cons_DR}$}
\begin{proof}
In the proof of Theorem~\ref{ineq:OIDR},
\[A^{DR}(\lambda)-A_2(\lambda)\leq A_n+B_n+C_n+D_n,\]
where $A_n$, $B_n$, $C_n$, and $D_n$ are same as defined in $\eqref{Bound}$.
For $A_n+B_n$, we have the same result as $\eqref{Bound_DR_AB}$. Next we
bound $C_n$ and $D_n$ in the two different situations of (\romannumeral1) and (\romannumeral2).

Recall that
\begin{align*}
R_{L_{W,H},D}
&=\mathbb{P}_{n}\left\{L_{W,H}\left(\pi,H,M,X,Y,f(X)\right)\right\}\\
&=\mathbb{P}_{n}\left\{\frac{ML(Y,f(X))}{\pi(X)}-
\frac{M-\pi(X)}{\pi(X)}H(X,\beta_0,f(X))\right\}.
\end{align*}

Case 1: $\left|\hat\pi(X)-\pi(X)\right|=\O_p\left(n^{-\frac{1}{2}}\right)$ which means
$Err_{1,n}=\O_p\left(n^{-\frac{1}{2}}\right)$,
and $\widehat{\beta}\stackrel{\text{P}}\longrightarrow\beta^{\ast}$
where $\beta^{\ast}$ is not necessarily $\beta_0$. Since
\begin{align*}
&R_{L_{\widehat{W},\widehat{H}},D}(f)\\
&=\mathbb{P}_n\left[\frac{ML(Y,f(X))}{\widehat{\pi}(X)}-
\frac{M-\widehat{\pi}(X)}{\widehat{\pi}(X)}\widehat{H}(X,f(X))\right]\\
&=\mathbb{P}_n\left[\frac{ML(Y,f(X))}{\pi(X)}-
\frac{M-\pi(X)}{\pi(X)}\widehat{H}(X,f(X))+\left\{\frac{ML(Y,f(X))-M\widehat{H}(X,f(X))}
{\widehat{\pi}(X)\pi(X)}\right\}
\left(\pi(X)-{\widehat{\pi}(X)}\right)\right].\\
\end{align*}
Then,
\begin{align*}
&\left|R_{L_{\widehat{W},\widehat{H}},D}(f)-R_{L_{W,H},D}(f)\right|\\
&=\Bigg|\mathbb{P}_n\left[\frac{M-\pi(X)}{\pi(X)}
\left\{H(X,\beta_0,f(X))-\widehat{H}(X,f(X))\right\}\right]\\ \notag
&\quad+\mathbb{P}_n\left[
\frac{ML(Y,f(X))-M\widehat{H}(X,f(X))}{\widehat{\pi}(X)\pi(X)}
\left(\pi(X)-{\widehat{\pi}(X)}\right)\right]\Bigg|\\
&\leq\left|\mathbb{P}_n\left[\frac{M-\pi(X)}{\pi(X)}\right]\right|Err_{2,n}+
\frac{r(c_{2,n}\lambda)^{-1}+1}{c\cdot c_{n,L}}Err_{1,n}\\
&=|a_n|Err_{2,n}+\frac{r(c_{2,n}\lambda)^{-1}+1}
{c\cdot c_{n,L}}Err_{1,n}.
\end{align*}
Since both $a_n$ and $Err_{1,n}$ are $\O_p\left(n^{-\frac{1}{2}}\right)$, for every given $\eta>0$,
there exists a constant $b_3(\eta)$ such that for all $n\geq1$,
\[\prob\left(\max\{|a_n|,Err_{1,n}\}>b_{3}(\eta)n^{-\frac{1}{2}}\right)<e^{-\eta}.\]
Note that
\[Err_{2,n}=\sup_{f\in\mathcal{H}_n}\left\|\widehat{H}(x,f(x))-H(x,\beta_0,f(x))\right\|_{\infty}
\leq2\left\{r(c_{2,n}\lambda)^{-1}+1\right\}.\]
Therefore, with probability not less than $1-e^{-\eta}$,
\[C_n+D_n\leq b_{3}(\eta)n^{-\frac{1}{2}}\left[4\left\{r(c_{2,n}\lambda)^{-1}+1\right\}+
\frac{2\left\{r(c_{2,n}\lambda)^{-1}+1\right\}}{c\cdot c_{n,L}}\right].\]
Combining this bound with~\eqref{Bound_DR_AB}, for every fixed $\lambda>0$, $n\geq1$, $\varepsilon>0$, and $\eta>0$, with probability not less
than $1-2e^{-\eta}$,
\begin{align*}
A^{DR}(\lambda)
&\leq A_2(\lambda)+\frac{3\left\{r(c_{2,n}\lambda)^{-1}+1\right\}}{c}
\Bigg[2(c_{2,n}\lambda)^{\frac{1}{2}}\varepsilon\\
&\qquad+\sqrt{\frac{2\eta+2\log\left\{2\mathcal{N}(B_{\mathcal{H}},\|\cdot\|_{\infty},(c_{2,n}\lambda)^{\frac{1}{2}}\varepsilon)\right\}}{n}}
+\frac{4cb_3(\eta)n^{-\frac{1}{2}}}{3}+\frac{b_{3}(\eta)n^{-\frac{1}{2}}}{c_{n,L}}\Bigg].
\end{align*}
Together with Condition~\ref{condi:entroy} and letting
\[\varepsilon=(c_{2,n}\lambda)^{-\frac{1}{2}}
\left(\frac{p}{2}\right)^{\frac{1}{p+1}}\left(\frac{2a}{n}\right)^{\frac{1}{2p+2}},\]
with probability not less
than $1-2e^{\eta}$,
\begin{equation}\label{univConsi2}
A^{DR}(\lambda)\leq A_2(\lambda)+\frac{3\left\{r(c_{2,n}\lambda)^{-1}+1\right\}}{c}\left[
3\left(\frac{2a}{n}\right)^{\frac{1}{2p+2}}+\left(\frac{2\eta}{n}\right)^{\frac{1}{2}}
+\frac{b_{3}(\eta)n^{-\frac{1}{2}}}{c_{n,L}}
+\frac{4cb_3(\eta)n^{-\frac{1}{2}}}{3}\right].
\end{equation}
Since $\frac{1}{c_{2,n}}=\O(n^{d})$, for $\lambda n^{\min\left(\frac{1}{2p+2},\frac{1}{2}-d\right)-d}\longrightarrow\infty$,
the $\mathcal{P}-$universal consistency holds.

Case 2:
 $\left|\widehat{\beta}-\beta_{0}\right|=\O_p\left(n^{-\frac{1}{2}}\right)$, whereas $\widehat{\pi}(X)\stackrel{\text{P}}\longrightarrow
\pi^{\ast}(X)$ which is not necessarily equal to $\pi_0(X)$.
\begin{align*}
R_{L_{\widehat{W},\widehat{H}},D}(f)
&=\mathbb{P}_n\left[\frac{ML(Y,f(X))}{\widehat{\pi}(X)}-
\frac{M-\widehat{\pi}(X)}{\widehat{\pi}(X)}H(X,\beta_0,f(X))\right]\\
&\quad-\mathbb{P}_n\left[\frac{M-\widehat{\pi}(X)}{\widehat{\pi}(X)}
\left\{\widehat{H}(X,f(X))-H(X,\beta_0,f(X))\right\}\right]
\end{align*}
Then,
\begin{align*}
&\left|R_{L_{\widehat{W},\widehat{H}},D}(f)-R_{L_{W,H},D}(f)\right|\\
&=\Bigg|\mathbb{P}_n\left[\frac{ML(Y,f(X))-MH(X,\beta_0,f(X))}
{\widehat{\pi}(X)\pi(X)}
\left\{\pi(X)-\widehat{\pi}(X)\right\}\right]\\
&\quad-\mathbb{P}_n\left[\frac{M-\widehat{\pi}(X)}{\widehat{\pi}(X)}
\left\{\widehat{H}(X,f(X))-H(X,\beta_0,f(X))\right\}\right]\Bigg|\\
&\leq\frac{Err_{1,n}}{2c\cdot c_{n,L}}\left|\mathbb{P}_n\left\{L(Y,f(X))-H(X,\beta_0,f(X))\right\}\right|
+\frac{1+c_{n,U}}{c_{n,L}}Err_{2,n}\\
&\leq h_n\frac{Err_{1,n}}{2c\cdot c_{n,L}}+\frac{1+c_{n,U}}{c_{n,L}}Err_{2,n}.
\end{align*}
Note that $Err_{1,n}\leq 2$. By Lemma \ref{lem:hn}, $h_n=\O_p\left(n^{-\left(\frac{1}{2}-d\right)}\lambda^{-\frac{1}{2}}\right)$,
and thus there exists a constant $b_4(\eta)$ such that for all $n\geq1$,
\[\prob\left\{|h_n|>b_{4}(\eta)n^{-\left(\frac{1}{2}
	-\frac{d}{2}\right)}\lambda^{-\frac{1}{2}}\right\}<e^{-\eta}.\]
By Lemma~\ref{Bond:Err2}, $Err_{2,n}=\O_p\left(n^{-\left(\frac{1}{2}-d\right)}\lambda^{-1}\right)$
\[\prob\left\{Err_{2,n}\geq b_2(\eta)n^{-\left(\frac{1}{2}-d\right)}\lambda^{-1}\right\}< e^{-\eta}.\]

For a fixed $\eta>0$, with probability not less
than $1-2e^{-\eta}$,

\[C_n+D_n \leq
\frac{2(1+c_{n,U})}{c_{n,L}}b_{2}(\eta)n^{-\left(\frac{1}{2}-d\right)}\lambda^{-1}
+\frac{1}{c\cdot c_{n,L}}b_{4}(\eta)n^{-\left(\frac{1}{2}
-\frac{d}{2}\right)}\lambda^{-\frac{1}{2}}.\]
Combining this bound with~\eqref{Bound_DR_AB}, using Condition~\ref{condi:entroy}, and letting
\[\varepsilon=(c_{2,n}\lambda)^{-\frac{1}{2}}
\left(\frac{p}{2}\right)^{\frac{1}{p+1}}\left(\frac{2a}{n}\right)^{\frac{1}{2p+2}},\]
for every fixed $\lambda>0$, $n\geq1$, $\varepsilon>0$, and $\eta>0$, with probability not less
than $1-3e^{-\eta}$,
\begin{align}\label{learnRateDR2}
&A^{DR}(\lambda)-A_2(\lambda)\\\notag
&\leq\frac{3\left\{r(c_{2,n}\lambda)^{-1}+1\right\}}{c}\left\{
3\left(\frac{2a}{n}\right)^{\frac{1}{2p+2}}+\left(\frac{2\eta}{n}\right)^{\frac{1}{2}}\right\}
+\frac{2(1+c_{n,U})}{c_{n,L}}b_{2}(\eta)n^{-\left(\frac{1}{2}-d\right)}\lambda^{-1}\\\notag
&\quad+\frac{1}{c\cdot c_{n,L}}b_{4}(\eta)n^{-\left(\frac{1}{2}-\frac{d}{2}\right)}\lambda^{-\frac{1}{2}}.
\end{align}
Note that $\frac{1+c_{n,U}}{c_{n,L}}=\O\left(n^d\right)$,
and that $1-2d>\frac{1}{2}-d$. Hence,
when $\lambda n^{\min\left(\frac{1}{2p+2},\frac{1}{2}-d\right)-d}\longrightarrow\infty$,
the $\mathcal{P}-$universal consistency holds.
\end{proof}

\subsection{Proof of Corollary~\ref{coro:learnRate_DR}}
\begin{proof}
Case 1: $\left|\hat\pi(X)-\pi(X)\right|=\O_p\left(n^{-\frac{1}{2}}\right)$.
By \eqref{univConsi2} and Assumption~\ref{assp:BoundA2},
\[A^{DR}(\lambda)\leq b\lambda^{\gamma}+\frac{3\left\{r(c_{2,n}\lambda)^{-1}+1\right\}}{c}\left\{
 3\left(\frac{2a}{n}\right)^{\frac{1}{2p+2}}+\left(\frac{2\eta}{n}\right)^{\frac{1}{2}}
 +\frac{b_{3}(\eta)n^{-\frac{1}{2}}}{c_{n,L}}
 +\frac{4cb_3(\eta)n^{-\frac{1}{2}}}{3}\right\}.\]

Let
\[G_2(\lambda)= b\lambda^{\gamma}+\frac{3\left\{r(c_{2,n}\lambda)^{-1}+1\right\}}{c}\left\{
 3\left(\frac{2a}{n}\right)^{\frac{1}{2p+2}}+\left(\frac{2\eta}{n}\right)^{\frac{1}{2}}
 +\frac{b_{3}(\eta)n^{-\frac{1}{2}}}{c_{n,L}} +\frac{4c}{3}b_3(\eta)n^{-\frac{1}{2}}\right\}\]
Taking the derivative with respect to $\lambda$ and setting it equal to $0$,
\[b\gamma\lambda^{\gamma-1}=\frac{3r(c_{2,n})^{-1}}{c}\lambda^{-2}
\left\{3\left(\frac{2a}{n}\right)^{\frac{1}{2p+2}}+\left(\frac{2\eta}{n}\right)^{\frac{1}{2}}
 +\frac{b_{3}(\eta)n^{-\frac{1}{2}}}{c_{n,L}}+\frac{4c}{3}b_1(\eta)n^{-\frac{1}{2}}\right\}.\]
Note that $\frac{1}{c_{2,n}}=\O\left(n^{d}\right)$. Thus,
\begin{align*}
\lambda^{\gamma+1}&
\propto\left(\frac{1}{n}\right)^{\min\left(\frac{1}{2p+2},\frac{1}{2}-d\right)-d} \\
\Rightarrow\lambda &
\propto n^{\left\{-\min\left(\frac{1}{2p+2},\frac{1}{2}-d\right)+d\right\}\frac{1}{\gamma+1}}.
\end{align*}
Note that by choosing large r, we have $G_2''\left(n^{\left\{-\min\left(\frac{1}{2p+2},\frac{1}{2}-d\right)+d\right\}\frac{1}{\gamma+1}}\right)>0$.
Then for
 \[\lambda=n^{\left\{-\min\left(\frac{1}{2p+2},\frac{1}{2}-d\right)+d\right\}\frac{1}{\gamma+1}},\]
\begin{align*}
&G_2(\lambda)\\
& =bn^{\left\{-\min\left(\frac{1}{2p+2},\frac{1}{2}-d\right)+d\right\}\frac{\gamma}{\gamma+1}}\\
&\quad+\left\{\frac{3r(c_{2,n})^{-1}}{c}
 n^{\left\{-\min\left(\frac{1}{2p+2},\frac{1}{2}-d\right)+d\right\}\frac{-1}{\gamma+1}}+\frac{1}{c}\right\}
 \left\{3\left(\frac{2a}{n}\right)^{\frac{1}{2p+2}}+\left(\frac{2\eta}{n}\right)^{\frac{1}{2}}
 +\frac{b_{3}(\eta)n^{-\frac{1}{2}}}{c_{n,L}}
 +\frac{4c}{3}b_3(\eta)n^{-\frac{1}{2}}\right\}\\
 &\leq bn^{\left\{-\min\left(\frac{1}{2p+2},\frac{1}{2}-d\right)+d\right\}\frac{\gamma}{\gamma+1}}\\
 &\quad+c_{\text{P}}^{\ast}\left(c_{a}^{\ast}+\sqrt{\eta}+2b_3(\eta)\right)
 n^{\left\{-\min\left(\frac{1}{2p+2},\frac{1}{2}-d\right)+d\right\}\frac{-1}{\gamma+1}
 -\min\left(\frac{1}{2p+2},\frac{1}{2}-d\right)+d}\\
 &\leq Q^{\ast}\left\{c_{a,b}^{\ast}+\sqrt{\eta}+2b_3(\eta)\right\}
n^{\left\{-\min\left(\frac{1}{2p+2},\frac{1}{2}-d\right)+d\right\}\frac{\gamma}{\gamma+1}}
\end{align*}
where $c_{a}^{\ast}$ is a constant related to $a$,
$c_{a,b}^{\ast}$ is a constant related to $a$ and $b$,
$c_{\text{\prob}}^{\ast}$ and $Q^{\ast}$ are constants related to $\prob$.
None of them is related to $\eta$.

Therefore, for fixed $\lambda>0$, $n\geq1$, $\varepsilon>0$, and $\eta>0$, with probability not less
than $1-3e^{\eta}$,
\[A^{DR}(\lambda)\leq Q^{\ast}\left\{c_{a,b}^{\ast}+\sqrt{\eta}+b_1(\eta)+b_3(\eta)\right\}
n^{\left\{-\min\left(\frac{1}{2p+2},\frac{1}{2}-d\right)+d\right\}\frac{\gamma}{\gamma+1}}.\]
The obtained learning rate is $n^{\left\{-\min\left(\frac{1}{2p+2},\frac{1}{2}-d\right)+d\right\}\frac{\gamma}{\gamma+1}}                                                                                                                    $.

Case 2: $\left|\widehat{\beta}-\beta_{0}\right|=\O_p\left(n^{-\frac{1}{2}}\right)$.
By \eqref{learnRateDR2} and Assumption~\ref{assp:BoundA2},
\begin{align*}
 A^{DR}(\lambda)
 &\leq b\lambda^{\gamma}+\frac{3\left\{r(c_{2,n}\lambda)^{-1}+1\right\}}{c}
 \left\{3\left(\frac{2a}{n}\right)^{\frac{1}{2p+2}}+\left(\frac{2\eta}{n}\right)^{\frac{1}{2}}\right\}\\
 &\quad+\frac{2(1+c_{n,U})}{c_{n,L}}b_{2}(\eta)n^{-\left(\frac{1}{2}-d\right)}\lambda^{-1}
 +\frac{1}{c\cdot c_{n,L}}b_{4}(\eta)n^{-\left(\frac{1}{2}-\frac{d}{2}\right)}\lambda^{-\frac{1}{2}}.
\end{align*}
Choosing $0<\lambda<1$, we have
\begin{align*}
 A^{DR}(\lambda)
 &\leq b\lambda^{\gamma}+\frac{3\left\{r(c_{2,n}\lambda)^{-1}+1\right\}}{c}\left\{
 3\left(\frac{2a}{n}\right)^{\frac{1}{2p+2}}+\left(\frac{2\eta}{n}\right)^{\frac{1}{2}}\right\}\\
 &\quad+\frac{2(1+c_{n,U})}{c_{n,L}}b_{2}(\eta)n^{-\left(\frac{1}{2}-d\right)}\lambda^{-1}
 +\frac{1}{c\cdot c_{n,L}}b_{4}(\eta)n^{-\left(\frac{1}{2}-\frac{d}{2}\right)}\lambda^{-1}.
\end{align*}

Let
\begin{align*}
G_3(\lambda)&= b\lambda^{\gamma}+\frac{3\left\{r(c_{2,n}\lambda)^{-1}+1\right\}}{c}
\left\{3\left(\frac{2a}{n}\right)^{\frac{1}{2p+2}}+\left(\frac{2\eta}{n}\right)^{\frac{1}{2}}\right\}\\ &\quad+\frac{2(1+c_{n,U})}{c_{n,L}}b_{2}(\eta)n^{-\left(\frac{1}{2}-d\right)}\lambda^{-1}
 +\frac{1}{c\cdot c_{n,L}}b_{4}(\eta)n^{-\left(\frac{1}{2}-\frac{d}{2}\right)}\lambda^{-1}.
\end{align*}
Taking the derivative with respect to $\lambda$ and setting it equal to $0$,
\begin{align*}
b\gamma\lambda^{\gamma-1}&=\frac{3r(c_{2,n})^{-1}}{c}\lambda^{-2}
\left\{3\left(\frac{2a}{n}\right)^{\frac{1}{2p+2}}+\left(\frac{2\eta}{n}\right)^{\frac{1}{2}}\right\}
+\frac{2(1+c_{n,U})}{c_{n,L}}b_{2}(\eta)n^{-\left(\frac{1}{2}-d\right)}\lambda^{-2}\\
&\quad +\frac{1}{c\cdot c_{n,L}}b_{4}(\eta)n^{-\left(\frac{1}{2}-\frac{d}{2}\right)}\lambda^{-2}.
\end{align*}
Then,
\[\lambda^{\gamma+1}
\propto\left(\frac{1}{n}\right)^{\min\left(\frac{1}{2p+1},\frac{1}{2}-d\right)-d} \\
\Rightarrow\lambda\propto n^{\left\{-\min\left(\frac{1}{2p+1},\frac{1}{2}-d\right)+d\right\}\frac{1}{\gamma+1}}.\]
Note that by choosing large $r$, we have $G_3''\left(n^{\left\{-\min\left(\frac{1}{2p+1},\frac{1}{2}-d\right)+d\right\}\frac{1}{\gamma+1}}\right)>0$.

Then
\begin{align*}
&G_3(\lambda)\\
& =bn^{\left\{-\min\left(\frac{1}{2p+1},\frac{1}{2}-d\right)+d\right\}\frac{\gamma}{\gamma+1}}
  +\left\{\frac{3r(c_{2,n})^{-1}}{c}
n^{\left\{-\min\left(\frac{1}{2p+1},\frac{1}{2}-d\right)+d\right\}\frac{-1}{\gamma+1}}+\frac{1}{c}\right\}
\left\{3\left(\frac{2a}{n}\right)^{\frac{1}{2p+2}}+\left(\frac{2\eta}{n}\right)^{\frac{1}{2}}\right\}\\
&\quad+\frac{1+c_{n,U}}{c_{n,L}}b_{2}(\eta)n^{-\left(\frac{1}{2}-d\right)}
n^{\left\{-\min\left(\frac{1}{2p+1},\frac{1}{2}-d\right)+d\right\}\frac{-1}{\gamma+1}}
 +\frac{2}{c\cdot c_{n,L}}b_{4}(\eta)n^{-\left(\frac{1}{2}-\frac{d}{2}\right)}
 n^{\left\{-\min\left(\frac{1}{2p+1},\frac{1}{2}-d\right)+d\right\}\frac{-1}{\gamma+1}}\\
 &\leq bn^{\left\{-\min\left(\frac{1}{2p+1},\frac{1}{2}-d\right)+d\right\}\frac{\gamma}{\gamma+1}}\\
 &\quad+c_{\text{P}}^{\star}\left(c_{a}^{\star}+\sqrt{\eta}+b_2(\eta)+b_4(\eta)\right)
n^{\left\{-\min\left(\frac{1}{2p+1},\frac{1}{2}-d\right)+d\right\}\frac{-1}{\gamma+1}
-\min\left(\frac{1}{2p+1},\frac{1}{2}-d\right)+d}\\
 &\leq Q^{\star}\left(c_{a,b}^{\star}+\sqrt{\eta}+b_2(\eta)+b_{4}(\eta)\right)
 n^{\left\{-\min\left(\frac{1}{2p+1},\frac{1}{2}-d\right)+d\right\}\frac{\gamma}{\gamma+1}},
\end{align*}
where $c_{a}^{\star}$ is a constant related to $a$,
$c_{a,d}^{\star}$ is a constant related to $a$ and $b$,
$c_{\text{\prob}}^{\star}$ and $Q^{\star}$ are constants related to $\prob$.
None of them is related to $\eta$.

Therefore, for fixed $0<\lambda<1$, $n\geq1$, $\varepsilon>0$, and $\eta>0$, with probability not less
than $1-3e^{\eta}$,
\[A^{DR}(\lambda)\leq Q^{\star}\left(c_{a,b}^{\star}+\sqrt{\eta}+b_2(\eta)+b_{4}(\eta)\right)
n^{\left\{-\min\left(\frac{1}{2p+1},\frac{1}{2}-d\right)+d\right\}\frac{\gamma}{\gamma+1}}.\]
The obtained learning rate is
$n^{\left\{-\min\left(\frac{1}{2p+1},\frac{1}{2}-d\right)+d\right\}\frac{\gamma}{\gamma+1}}$.
\end{proof}
\bibliographystyle{plainnat}

\end{document}